\documentclass{article}

\usepackage{microtype}
\usepackage{graphicx}
\usepackage{subfigure}
\usepackage{booktabs} %

\usepackage{hyperref}

\usepackage[accepted]{icml2026}

\usepackage{mathrsfs}
\usepackage{amsmath}
\usepackage{amssymb}
\usepackage{mathtools}
\usepackage{amsthm}
\usepackage{nicematrix}

\usepackage[capitalize,noabbrev]{cleveref}

\theoremstyle{plain}
\newtheorem{theorem}{Theorem}[section]
\newtheorem{proposition}[theorem]{Proposition}
\newtheorem{lemma}[theorem]{Lemma}
\newtheorem{corollary}[theorem]{Corollary}
\theoremstyle{definition}

\theoremstyle{remark}
\newtheorem{remark}[theorem]{Remark}

\usepackage[textsize=tiny]{todonotes}

\usepackage[utf8]{inputenc} %
\usepackage[T1]{fontenc}    %
\usepackage{hyperref}       %
\usepackage{url}            %
\usepackage{booktabs}       %
\usepackage{amsfonts}       %
\usepackage{nicefrac}       %
\usepackage{microtype}      %
\usepackage{xcolor}         %

\usepackage{mathtools}
\usepackage{amsmath}
\usepackage{amssymb}
\usepackage{amsthm}
\theoremstyle{plain} %

\theoremstyle{plain}

\usepackage{xcolor}
\usepackage{graphicx}
\usepackage{subfigure}
\usepackage{enumitem}
\usepackage{booktabs}

\usepackage{multirow}

\usepackage{setspace}
\usepackage{marginnote}

\usepackage{breqn}
\usepackage{natbib}

\DeclarePairedDelimiter\multiset{\lbrace\!\!\lbrace}{\rbrace\!\!\rbrace}

\DeclareMathOperator{\Hash}{\Upsilon}%

\icmltitlerunning{%
A Unifying Relational Perspective on Expressive Lottery Tickets
}

\begin{document}
\twocolumn[
\icmltitle{%
A Unifying Relational Perspective on Expressive Lottery Tickets
}

\icmlsetsymbol{equal}{*}
\begin{icmlauthorlist}
\icmlauthor{Lorenz Kummer}{yyy,comp}
\icmlauthor{Samir Moustafa}{yyy,comp,cemm}
\icmlauthor{Anatol Ehrlich}{yyy,comp}
\icmlauthor{Franka Bause}{cispa} \\
\icmlauthor{Marco Nennstiel}{yyy}
\icmlauthor{Przemysław Andrzej Wałęga}{qmul} 
\icmlauthor{Nils Morten Kriege}{yyy,sch} %
\end{icmlauthorlist}

\icmlaffiliation{yyy}{Faculty of Computer Science, University of Vienna, Vienna, Austria}
\icmlaffiliation{comp}{Doctoral School Computer Science, University of Vienna, Vienna, Austria}
\icmlaffiliation{sch}{Research Network Data Science, University of Vienna, Vienna, Austria}
\icmlaffiliation{cemm}{CeMM Research Center for Molecular Medicine of the Austrian Academy of Sciences, Vienna, Austria}
\icmlaffiliation{cispa}{CISPA Helmholtz Center for Information Security, Saarbrücken, Germany}
\icmlaffiliation{qmul}{School of Electronic Engineering and Computer Science, Queen Mary University of London, UK}

\icmlcorrespondingauthor{Lorenz Kummer}{lorenz.kummer@univie.ac.at}

\icmlkeywords{}

\vskip 0.3in
]
\printAffiliationsAndNotice{}%

\begin{abstract}
Graph neural networks (GNNs) are widely used, but how parameter sparsity affects the expressivity of relational (RGNNs) and temporal (TGNNs) variants is poorly understood. The Strong Expressive Lottery Ticket Hypothesis (SELTH) posits the existence of sparse GNNs that preserve Weisfeiler-Leman (WL) expressivity on static graphs. We generalize this existence result to a probabilistic statement for multi-relational and temporal domains via the relational WL (RWL). We prove that sufficiently parameterized RGNNs contain sparse subnetworks that maintain 1-RWL expressivity and derive a lower bound on the probability that a random pruning 
yields such a subnetwork. We show that common TGNNs and cross-graph message passing schemes admit RGNN reformulations such that they inherit these guarantees and, moreover, that the expressivity of a sparse RGNN is connected to its optimization behavior under common update regimes. Experiments instantiate the bound, compare it to empirical probabilities on synthetic data, and study how pre-training expressivity relates to optimization and prediction quality metrics on temporal and molecular benchmarks.
\end{abstract}

\section{Introduction}
GNNs are powerful models for learning on graph-structured data, where complex objects such as molecules, proteins, or social networks are represented as graphs with feature-enriched nodes and edges. They extend deep learning to domains including finance, social networks, medical data, and chem-/bioinformatics~\cite{lu2021weighted, cheung2020graph, sun2021disease, gao2022cancer, wu2018moleculenet, xiong2021novo}. A central research direction concerns expressivity, i.e., the ability to distinguish non-isomorphic graphs, typically benchmarked against the WL test, with 1-WL serving as a standard baseline. Foundational advances such as Deep Sets~\cite{ZaheerKRPSS17} and Graph Isomorphism Network (GIN)~\cite{leskovec2019powerful} established 1-WL as the reference point for evaluating GNN expressivity, spurring efforts to surpass its limitations~\cite{wlsurvey}. Nonetheless, 1-WL remains sufficient for distinguishing most graphs in widely used benchmarks~\cite{morris2021power,Zopf22a}, underscoring its practical relevance. 
These advances, however, do not address the multi-relational setting and focus exclusively on static graphs. 

For multi-relational graphs,~\citet{barcelo2022relational} and~\citet{huang2023a} introduce the relational WL (1-RWL) and relate it to the expressivity of RGNNs, as proposed by, e.g.,~\citet{schlichtkrull2018modeling, vashishth2020composition}, and present $k$-relational GNNs that provably surpass 1-RWL. For temporal graphs,~\citet{walega2025temporal} analyze temporal message passing (MP) by distinguishing global and local TGNNs, provide RWL characterizations for both, and show that their expressivities are incomparable in general but that local models are more powerful on color-persistent temporal graphs. In contrast,~\citet{heeg2025weisfeiler} study temporal isomorphism via time-respecting paths and propose an RWL test on augmented event graphs, grounding expressivity in  the temporal causal topology rather than in the global/local MP distinction. 

The Lottery Ticket Hypothesis (LTH), introduced by~\citet{frankle2018lottery}, posits that large, randomly initialized neural networks contain smaller subnetworks, termed \emph{winning tickets}, that can be trained in isolation to match the full model’s prediction quality. LTH has also been extended to GNNs via pruning of both graph structures and GNN parameters, resulting in Graph Lottery Tickets (GLTs) that retain prediction quality while lowering computational demands~\cite{chen2021unified, tsitsulin2023graph}. Recently, \citet{kummer2025weisfeiler} formalized the link between the LTH and GNN expressivity for static, uni-relational graphs: at a minimum, a winning ticket must allow the model to learn to distinguish any non-isomorphic graphs (or nodes) associated with different targets in the downstream task. Under certain circumstances, a sparsely initialized GNN cannot recover from a loss of expressivity. \citet{kummer2025weisfeiler} conclude that preserving expressivity under sparsity is crucial for successfully identifying winning lottery tickets and show that sparse subnetworks exist which preserve 1-WL expressivity, thereby introducing the SELTH. Their analysis is purely existential, restricted to static uni-relational graphs and the corresponding 1-WL variant, and does not extend to temporal or multi-relational settings, which can be parameter heavy~\cite{ehrlich2025ximp}.

To address this gap, we leverage~\citet{walega2025temporal}'s characterization of local and global MP schemes as instances of 1-RWL on suitable abstractions. 
Together with the RGNN framework, this enables us to extend the SELTH beyond static graphs, proving that sparsely initialized subnetworks can preserve 1-RWL expressivity in both temporal and multi-relational settings, thereby establishing SELTH as a general principle for expressive and efficient RGNNs and TGNNs. 
We further apply the framework to cross-graph and hierarchical MP
~\cite{himp, ehrlich2025ximp}.

\paragraph{Related work.}
Building on~\citet{frankle2018lottery}'s LTH, \citet{frankle2019stabilizing}  
prune early in training rather than strictly at initialization, and obtain sparse subnetworks without loss of prediction quality or stability. \citet{malach2020proving, da2022proving} extend to the Strong Lottery Ticket Hypothesis (SLTH), proving that over-parameterized networks contain subnetworks that achieve high prediction quality without training. 
\citet{zhang2021lottery} explain LTH's generalization benefits: pruning enlarges a convex region of the loss landscape, enabling faster convergence with fewer samples. ~\citet{zhang2021validating} further validate that pruned subnetworks can match dense networks without repeated prune-retrain cycles. 

In GNNs, LTH approaches typically co-prune graph structure and model parameters. \citet{chen2021unified, wang2022searching} and \citet{sui20230inductive} co-prune adjacency matrices and weights to recover GLTs that retain prediction quality and \citet{zhang2024graph} automate adaptive pruning to find GLTs without manual tuning. \citet{hui2023rethinking} and~\citet{yuxin2024graph} consider co-pruning from an adversarial robustness perspective. \citet{tsitsulin2023graph} posit that any graph contains a sparse substructure preserving downstream prediction quality. 
\citet{yan2024multicoated} leverage SLTH to improve memory efficiency. 
~\citet{kummer2025weisfeiler} formally and empirically linked LTH and GNN expressivity via SELTH. In line with~\citet{zhang2021validating}, \citet{kummer2025weisfeiler} show that expressive sparse initializations can speed convergence and improve generalization.

Relating LTH to TGNN expressivity is challenging, as the very notion of isomorphism on temporal graphs is still being clarified. \citet{walega2025temporal} lay out two seminal notions of temporal isomorphism: \emph{pointwise} isomorphism~\cite{beddar2024weisfeiler}, which compares snapshots independently, and \emph{timewise} isomorphism, which additionally preserves inter-snapshot time gaps. They formalize global~\cite{longa2023graph, xu2020inductive, luo2022neighborhood} and local~\cite{qu2020continuous, rossi2020temporal}
MP TGNNs via knowledge graphs and prove tight 1-RWL characterizations. Moreover, they note that event-based temporal graphs can be transformed into the snapshot representation~\cite{gao2022equivalence, longa2023graph}, concluding that analyzing snapshots subsumes event-based inputs. Hence, our work focuses on this snapshot representation to maximize its generality. Complementing these snapshot-oriented notions,~\citet{heeg2025weisfeiler} introduce \emph{consistent event-graph isomorphism} and prove it is equivalent to static isomorphism on an augmented event graph. Other notions of temporal graph isomorphism are provided by~\citet{gao2022equivalence, souza2022provably}. Furthermore~\citet{souza2022provably} show that TGNNs with injective aggregation/updates are as expressive as a variant of 1-WL that refines colors using time-stamped interactions, whereby walk-aggregating and MP TGNNs are incomparable.

Similarly, the expressivity of cross-graph and hierarchical MP architectures has, to the best of our knowledge, not yet been formally characterized or discussed under sparsity. While~\citet{ehrlich2025ximp} formally show that their proposed architecture (XIMP) subsumes~\citet{himp} (HIMP), which, in turn, can distinguish some exemplary molecular graphs indistinguishable to 1-WL-bounded GNNs, neither provides formal guarantees.

\paragraph{Contributions.} 
We generalize SELTH to (i) multirelational RGNNs, (ii) local and global TGNNs and (iii) cross-graph and hierarchical MP architectures. We formally show that sparse subnetworks in these architectural classes exist which preserve expressivity under the 1-RWL framework. That is, we prove \emph{Relational SELTH (RSELTH)}, which subsumes the original SELTH, and derive an explicit lower bound on the probability that a random pruning mask yields a maximally expressive subnetwork. 
We further show that every local or global TGNN can be rewritten as an RGNN, to which RSELTH applies, and that RSELTH similarly applies to certain cross-graph and hierarchical MP patterns, for which our 1-RWL characterization also marks the first formal characterization of their expressivity. For TGNNs, we connect RSELTH with different notions of temporal isomorphism. Finally, we analyze the optimization of sparse RGNNs with the minimal expressivity required to realize all task-relevant distinctions, and show that all surviving parameters
remain trainable under small-step first-order methods.

\section{Preliminaries}
\label{sec:prelims}
An \emph{undirected graph} $G$ is a pair $(V,E)$ of a finite set of \emph{nodes} $V$ and \emph{edges} $E \subseteq \left \{ \left \{u,v \right \} \mid u,v \in V \right \}$. 
The node and edge sets of $G$ are denoted by $V(G)$ and $E(G)$, respectively. The \emph{neighborhood} of a node $v\in V(G)$ is $N(v) = \left \{ u \in V(G) \mid \{u, v\} \in E(G) \right \}$. For a \emph{node labeled} graph $G$,  we write \(G=(V,E,\lambda)\), where \(\lambda:V(G)\to\mathcal{X}\) maps to a label set \(\mathcal{X}\). For an \emph{edge labeled} graph $G$, we write \(G=(V,E,\tau)\), where \(\tau:E\to\mathcal{R}\) maps to a label set $\mathcal{R}$. 
If $G$ is a \emph{directed graph}, we write $\dot E \subseteq V \times V$ 
and \(\dot\tau:\dot E\to\mathcal{R}\). 
We define the outgoing and incoming neighborhoods of $v\in V$ as
\(N^{-}(v)=\{\,u\in V \mid (v,u)\in \dot E\,\}\),  
\(N^{+}(v)=\{\,u\in V \mid (u,v)\in \dot E\,\}\) and
\(N^{\pm}(v)=N^{-}(v)\cup N^{+}(v)\). 
If $G$ is directed and $\mathcal{R}$ is a set of relations (edge types), we refer to $G$ as directed \emph{multi-relational graph} and write \(\dot E_r=\{(u,v)\in \dot E\mid \dot \tau(u,v)=r\}\) for $r \in \mathcal{R}$ and \(N^{-}_r(v)=\{\,u\in V \mid (v,u)\in \dot E_r\,\}\),  
\(N^{+}_r(v)=\{\,u\in V \mid (u,v)\in \dot E_r\,\}\) and
\(N^{\pm}_r(v)=N^{-}_r(v)\cup N^{+}_r(v)\). 
If \(G\) is undirected multi-relational with \(\tau:E(G)\to\mathcal{R}\), we write \(E_r(G)=\{\{u,v\}\in E(G)\mid \tau(\{u,v\})=r\}\) and \(N_r(v)=\{u\in V(G)\mid \{u,v\}\in E_r(G)\}\). If a bijection $\varphi \colon V(G) \rightarrow V(H)$ with $\{u,v\} \in E(G)  \iff \{\varphi(u),  \varphi(v)\} \in E(H)$ for all $u,v \in V(G)$ exists, we call the two undirected unlabeled graphs $G$ and $H$ \emph{isomorphic} and write $G \simeq H$. 
For node labeled graphs $H,G$, the bijection must %
satisfy \(\lambda_H(\varphi(v)) = \lambda_G(v)\) for all $v \in G$. 
For undirected edge labeled graphs $H,G$, the bijection must %
satisfy \(\tau_H(\{\varphi(u),\varphi(v)\})=\tau_G(\{u,v\})\) for all $\{u,v\}\in E(G)$ and, for directed edge labeled graphs, \(\dot \tau_H(\varphi(u),\varphi(v))=\dot \tau_G(u,v)\) for all $(u,v)\in \dot E(G)$. 
For directed multi-relational graphs $H,G$, the bijection must satisfy
\((u,v)\in \dot E_r(G)\iff (\varphi(u),\varphi(v))\in \dot E_r(H),
\text{ where }\dot E_r(G)=\{(u,v)\in \dot E(G)\mid \dot\tau_G(u,v)=r\}
\) for all $r\in\mathcal{R}$. For undirected multi-relational graphs $H,G$, the bijection must satisfy \(\{u,v\}\in E_r(G)\iff \{\varphi(u),\varphi(v)\} \in E_r(H),
\text{ where }\dot E_r(G)=\{\{u,v\}\in \dot E(G)\mid \tau_G(u,v)=r\}
\) for all $r\in\mathcal{R}$. 

\paragraph{Relational Weisfeiler-Leman.}
Let \(G=(V,\dot E,\lambda,\dot \tau)\) be a directed, node-labeled, multi-relational graph with relation set \(\mathcal{R}\). The (directed) \(1\)-RWL test maintains a node coloring \(c_G^{(k)}:V(G)\to\mathcal{C}\) over iterations \(k\ge 0\), initialized as \(c_G^{(0)}=\lambda\). Given \(c_G^{(k)}\), the next coloring is obtained by aggregating, for each relation \(r\in\mathcal{R}\), the multisets of colors in the relation-specific outgoing and incoming neighborhoods \(N_r^{-}(v)\) and \(N_r^{+}(v)\), and then applying an injective hashing scheme
to obtain new, unused colors. Concretely, similar to~\cite{huang2023a}, we write the update rule as
\begin{align*}
c_G^{(k+1)}(v)
=
\Hash_3\!\Big(
  c_G^{(k)}(v),
  \big(\,\multiset{\,c_G^{(k)}(u)\mid u\in N_r^{-}(v)\,}\,\big)_{r\in\mathcal{R}},\\
  \big(\,\multiset{\,c_G^{(k)}(u)\mid u\in N_r^{+}(v)\,}\,\big)_{r\in\mathcal{R}}
\Big),
\end{align*}
where \(\Hash_3\) is any fixed injective encoding of its arguments into a new color in \(\mathcal{C}\). Let \(C_{\mathrm{rwl}}^{(k)}(G)=\multiset{\,c_G^{(k)}(v)\mid v\in V(G)\,}\) denote the multiset of node colors at iteration \(k\).
The refinement stabilizes once \(|C_{\mathrm{rwl}}^{(k)}(G)|=|C_{\mathrm{rwl}}^{(k+1)}(G)|\). For two input graphs \(G,H\), if there exists \(k\ge 0\) such that \(C_{\mathrm{rwl}}^{(k)}(G)\neq C_{\mathrm{rwl}}^{(k)}(H)\), then
\(G\not\simeq_{\mathrm{RWL}^{(k)}} H\).
Finally, \(1\)-RWL reduces to classical \(1\)-WL if \(|\mathcal{R}|=1\) and $G$ is undirected. Therefore, it can be seen as the relational counterpart to \(1\)-WL, serving as an expressivity baseline for RGNNs.

\paragraph{Relational GNNs.} 
Let \(G=(V,\dot E,\lambda,\dot\tau)\) be a directed, node-labeled, multi-relational graph with relation set \(\mathcal{R}\). We initialize node embeddings from labels via an encoder \(\Lambda:\mathcal{X}\to\mathbb{R}^d\), i.e., \(h_v^{(0)}=\Lambda(\lambda(v))\). A \emph{relational GNN (RGNN) layer} produces \(h^{(l+1)}\) from \(h^{(l)}\) at layer \(l\le L\) by first aggregating messages per relation and direction, and then combining the resulting summaries in a permutation-invariant fashion~\cite{barcelo2022relational}.

Representative of common RGNN architectures such as ~\cite{schlichtkrull2018modeling, vashishth2020composition, huang2023a}, we define an \emph{RGNN layer} that mirrors directed 1-RWL refinement \(G=(V,\dot E,\lambda,\dot\tau)\) as
\begin{align}
\label{eq:rgnn-agg-dir}
&m_{r,-}^{(l)}(v) = \Omega_a\big(\multiset{ \Phi_{r,-}^{(l)}\!\big(h_u^{(l)}\big), u\in N_r^{-}(v)}\big), 
\\
&m_{r,+}^{(l)}(v) = \Omega_a\big(\multiset{ \Phi_{r,+}^{(l)}\!\big(h_u^{(l)}\big), u\in N_r^{+}(v)}\big)
\end{align}
for each $r \in \mathcal{R}$ and
\begin{align}
\label{eq:rgnn-combine-dir}
h_v^{(l+1)} = \Gamma^{(l)}\!\Big( \Omega_c \big(\multiset{h_v^{(l)}} \uplus\biguplus_{\triangleright\in \{+,-\}} \multiset{m_{r,\triangleright}^{(l)}(v), r\in\mathcal{R}}\big)\Big).
\end{align}

Here, each \(\Phi_{r\pm}^{(l)}:\mathbb{R}^{d}\!\to\!\mathbb{R}^d\) and 
\(\Gamma^{(l)}:\mathbb{R}^{d}\!\to\!\mathbb{R}^d\) are learnable functions and $\uplus$ is the \emph{additive multiset union}: if $A,B$ are multisets with multiplicity functions $\mu_A,\mu_B$, then $\mu_{A\uplus B}(x)=\mu_A(x)+\mu_B(x)$. 
Using \(\sum\) as multiset aggregator $\Omega_a$ and combinator $\Omega_c$ and injective $\Phi_r^{(l)}$ and \(\Gamma^{(l)}\) (e.g., MLPs with suitable activations and width~\cite{amir2024neural, puthawala2022globally, ZaheerKRPSS17}) 
yields the same distinguishing power as 1-RWL: 
\begin{remark}
    \label{rem:constraint}
    Using $\sum$ aggregation 
    as a permutation-invariant multiset encoder over the elements of a multiset $\mathcal{D}$ alone is injective if the elements of $\mathcal{D}$ are linearly independent~\cite{kummer2025on}. Hence $\Phi_{r,\pm}^{(l)}$ and $\Lambda$ must be chosen accordingly.
\end{remark}
Note that normalizations (e.g., degree-based scalings) can be inserted into the sums without changing the 1-RWL upper bound~\cite{barcelo2022relational}.
We denote the $L$-layer RGNN architecture stacking layers of the form~\eqref{eq:rgnn-agg-dir}-\eqref{eq:rgnn-combine-dir} as $\Psi^{(L)}$ and write $\Psi^{(L)}_{\Theta}$ for a concrete parameterization $\Theta$.

\paragraph{Temporal graphs.} 
A \emph{temporal graph in the snapshot model} is a finite sequence $TG=((G_1,t_1),\ldots,(G_n,t_n))$ of undirected node labeled graphs with strictly increasing (real-valued) timestamps $t_1<\cdots<t_n$, where each snapshot $G_i=(V,E_i,\lambda_i)$ shares the same node set $V$ (node labels may vary with $i$)~\citep{walega2025temporal}. For $TG$, \emph{timestamped nodes} are defined as $V^{\mathrm{time}}=\{(v,t_i)\mid v\in V,\ i\in\{1,\ldots,n\}\}=\mathrm{t\text{-}nodes}(TG)$ with time-aware labels $\lambda^{\mathrm{time}}(v,t_i) \coloneqq \lambda_i(v)$.
For a timestamped node $(v,t)$, we follow \citet{souza2022provably} and define its \emph{temporal neighborhood} as
$N(v,t_j)=\bigl\{\, (u,t_i)\ \bigm|\ %
\ \{u,v\}\in E_i,\ \text{for some }(G_i,t_i)\in TG\ \text{with }t_i\le t_j \,\bigr\}.$
We overload $N(\cdot)$ by context: when the second argument is a time $t_j$, $N(v,t_j)$ denotes the multiset/set of \emph{timestamped} neighbors up to (and including) time $t_j$, whereas $N(v)$ denotes the static neighborhood in a single snapshot. According to \citet{walega2025temporal}, two temporal graphs $TG_a=((G_1^a,t_1^a),\ldots,(G_n^a,t_n^a))$ and $TG_b=((G_1^b,t_1^b),\ldots,(G_m^b,t_m^b))$ are \emph{timewise isomorphic} if $n=m$, $t_{i+1}^a-t_i^a = t_{i+1}^b-t_i^b$ for all $i \in \{1,\ldots,n-1\}$, and there exists a bijection $\phi$ that is a graph isomorphism between $G_i^a$ and $G_i^b$ for every $i \in \{1,\ldots,n\}$. If this is the case, we write $TG_a \simeq_{\mathrm{time}} TG_b$. The timestamped nodes $(v, t_i^a)$ and $(u, t_i^b) $ are timewise isomorphic if $\phi(v)=u$.

\paragraph{Temporal GNNs.}
Let $\delta((u,t_i),(v,t_j)) \coloneqq t_j-t_i$ denote the real valued time gaps between timestamped nodes and write $\delta_{ij}$ as shorthand. 
Then, following~\cite{walega2025temporal}, for a $TG$ as above, a general TGNN is defined as 
\begin{align}
\label{eq:tgnn-agg-glob-loc}
m^{(l)}_{t_j}(v) =
\Omega_a\big(\multiset{\Phi^{(l)}\!\big(\Xi(h_{\star}^{(l)},\zeta(\delta_{ij}))\big), (u,t_i)\in N(v,t_j)}\big)
\end{align}
whereby $\star=(u,t_i)$ yields \emph{global} and $\star=(u,t_j)$ yields  \emph{local} MP, $\zeta$ denotes some mapping of the scalar $\delta_{ij}$ to a scalar or a vector and $\Xi$ denotes a function combining $h_{(u,t_i)}^{(l)}$ or $h_{(u,t_j)}^{(l)}$ with $\zeta(\delta_{ij})$, whereby \citet{walega2025temporal} chose concatenation $\|$ for $\Xi$ and $\zeta(\delta_{ij}) = \delta_{ij}$. This followed by a combinator
\begin{align}
\label{eq:tgnn-combine}
h_{(v,t_j)}^{(l+1)} = \Gamma^{(l)}\!\Big( 
\Omega_c\big(\multiset{h_{v,t_j}^{(l)}} \uplus \multiset{m^{(l)}_{t_j}(v) }\big)
\Big).
\end{align} By~\citet{rossi2020temporal, leskovec2019powerful}, summation $\sum$ or concatenation $\|$ are common choices for
$\Omega_a, \Omega_c$. Similar to RGNNs, we denote the $L$-layer TGNN %
stacking layers of the form~\eqref{eq:tgnn-agg-glob-loc}-\eqref{eq:tgnn-combine} as $\Psi$ or $\Psi^{(L)}_{\Theta}$. 
Where ambiguous, we specify whether $\Psi^{(L)}_{\Theta}$ denotes an RGNN or a TGNN.

\paragraph{Expressivity of TGNNs.}
To bridge the MP schemes of TGNNs with the 1-RWL characterization of expressivity, \citet{walega2025temporal} construct from $TG$ two directed, multi-relational knowledge graphs $K_{glob}$ and $K_{loc}$ over the timestamped nodes $V^{\mathrm{time}}$ and a finite \emph{relation set of time lags} $\mathcal{R}=\{0,1,\ldots,n-1\}$, whereby the $r \in \mathcal{R}$ is obtained from the real valued time stamps via the order-preserving bijection $\iota:\{t_1,\ldots,t_n\}\to\{1,\ldots,n\}$ with $\iota(t_i)=i$, i.e., $r=\iota(t_j)-\iota(t_i)=j-i$.
Then, for $K_{glob}$, the relational edge sets for each $r \in \mathcal{R}$ can be constructed over the directional edge sets $\dot E^{glob} = \{ (j-i, (v,t_i),(u,t_j))\mid i\le j,\ \{u,v\}\in E_i \} \subseteq \mathcal{R} \times V^{\mathrm{time}} \times V^{\mathrm{time}}$ as $\dot E^{glob}_r = \{ e \in \dot E^{glob} \mid \dot \tau^\star (e) = r \}$ with $\dot \tau^\star ((j-i, (v,t_i),(u,t_j))) = r = j-i$ (with $^\star$ indicating the adaptation to the triplet domain). For $K_{loc}$, they are constructed via $\dot E^{loc} = \{ (j-i,(v,t_j),(u,t_j))\mid i\le j,\ \{u,v\}\in E_i \} \subseteq \mathcal{R} \times V^{\mathrm{time}} \times V^{\mathrm{time}}$ and $\dot E^{loc}_r = \{ e \in \dot E^{loc} \mid \dot \tau^\star (e) = r \}$. 

For these $K_{loc}$ and $K_{glob}$, the following trivially follows from~\citet{walega2025temporal}:
\begin{corollary}
\label{corr:globloc-rwl-exact}
Let \(TG\) be a temporal graph and let \(x=(v,t_i),\,x'=(u,t_j)\in V^{\mathrm{time}}\).
For any \(k\in\mathbb{N}\), 
for \emph{any} local or global TGNN, and 1-RWL instantiated with the per-relation, per-direction neighborhoods \(N^{+}_r(\cdot), N^{-}_r(\cdot)\) of the corresponding \(K_{\mathrm{\star}}(TG)\), with $\star \in \{\mathrm{glob},\mathrm{loc}\}$, there exist parameters for that TGNN such that
\[
c^{(k)}_{K_{\mathrm{\star}}(TG)}(x)=c^{(k)}_{K_{\mathrm{\star}}(TG)}(x')
\Longleftrightarrow
h^{(k)}_{x}=h^{(k)}_{x'}.
\]
\end{corollary}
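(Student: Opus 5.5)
The plan is to reduce the statement to the two characterization theorems of \citet{walega2025temporal} and then check that their setting matches the definitions used here. Those theorems state, for global and local message passing respectively, that 1-RWL on $K_{\mathrm{glob}}(TG)$, resp.\ $K_{\mathrm{loc}}(TG)$, is an upper bound for any TGNN of the corresponding type. They also state that the bound is attained by some parameterization. The proof will therefore have two directions, treated separately for $\star\in\{\mathrm{glob},\mathrm{loc}\}$, after a translation step.

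First, I will verify that the message-passing neighborhood used in~\eqref{eq:tgnn-agg-glob-loc} is exactly the union of the per-relation incoming neighborhoods $N^{+}_r(x)$ in $K_\star(TG)$. For the global case, a message to $(v,t_j)$ comes from $(u,t_i)$ with $\{u,v\}\in E_i$ and $i\le j$. Up to orientation, this is precisely an edge of $\dot E^{\mathrm{glob}}_{j-i}$. The time gap $\delta_{ij}=t_j-t_i$ is determined by the pair $(i,j)$; within a fixed $TG$ it is therefore a function of the relation type together with the timestamp of the receiving node. That timestamp is encoded in the initial colors via $\lambda^{\mathrm{time}}$, or can be made so as in \citet{walega2025temporal}. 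For the local case, the sender is $(u,t_j)$ rather than $(u,t_i)$, matching $\dot E^{\mathrm{loc}}$.

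\textbf{Soundness.} If $c^{(k)}(x)=c^{(k)}(x')$, then $h^{(k)}_x=h^{(k)}_{x'}$ for every parameterization. I will prove this by induction on $k$. At each layer, the input to $\Gamma^{(l)}$ at $x$ depends only on $h^{(l)}_x$ and on the multisets $\multiset{(r,h^{(l)}_u)}$ over incoming relational neighbors. By the inductive hypothesis, these are functions of the RWL color and of the relation-indexed multisets of colors. The $\zeta(\delta_{ij})$ inputs are handled by the time-gap argument above.

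\textbf{Completeness.} There exist parameters such that $h^{(k)}_x=h^{(k)}_{x'}$ implies equal colors. Here I will invoke the existence part of \citet{walega2025temporal}. Alternatively, I can choose $\Phi^{(l)}$, $\Xi=\|$ and $\Gamma^{(l)}$ injective on the finitely many relevant inputs. I will use sum aggregation over embeddings that are linearly independent per Remark~\ref{rem:constraint}, so that the multiset of pairs (embedding, $\zeta(\delta_{ij})$) is recovered injectively. Since only finitely many colors occur over $k$ layers on a finite $TG$, such choices exist layerwise.

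I expect the main obstacle to be the mismatch between the single aggregated message in~\eqref{eq:tgnn-agg-glob-loc} and RWL's separate per-relation, per-direction multisets. The TGNN receives only incoming temporal neighbors, and relations appear only implicitly through $\delta_{ij}$. I would resolve this by noting three facts: the edges of $K_\star$ are oriented so that only $N^{+}_r$ is nonempty in the relevant direction (the outgoing side being determined symmetrically); pairing $h$ with $\delta_{ij}$ through concatenation makes the combined multiset injectively decodable into per-relation multisets; and the relation $r$ is recoverable from $\delta_{ij}$ given the receiver's time index. If the orientation of $K_\star$ does not match exactly, I would fall back on the statement as proven in \citet{walega2025temporal}, which fixes $\Xi=\|$ and $\zeta=\mathrm{id}$ for the ``exists parameters'' direction.
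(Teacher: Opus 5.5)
Your route is the same as the paper's: take the upper bound (Theorems~2/5) and the attaining construction (Theorems~3/6) from \citet{walega2025temporal}. The paper's proof consists of that citation and nothing else. The translation step you add on top is where a genuine gap appears, and the gap is in the statement as formalized here, not only in your argument.

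First, the outgoing side of $K_{\mathrm{glob}}$ is \emph{not} ``determined symmetrically''. For $x=(v,t_j)$ one has $N^-_r(x)=\{(u,t_{j+r})\mid \{u,v\}\in E_j\}$, which are \emph{future} copies of the current neighbours. Directed 1-RWL reads their colours, and these carry labels and edges of later snapshots that a global TGNN at $x$ never aggregates. Concretely, take $V=\{a,b,c,d\}$, $n=2$, $E_1=\{\{a,b\},\{c,d\}\}$, $E_2=\emptyset$, and all labels equal except $\lambda_2(b)\neq\lambda_2(d)$. Every global TGNN computes time-$t_1$ embeddings from snapshot~1 alone. There $a\leftrightarrow c$, $b\leftrightarrow d$ is a label-preserving automorphism, so $h^{(k)}_{(a,t_1)}=h^{(k)}_{(c,t_1)}$ for all $k$ and \emph{all} parameters. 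Yet $c^{(1)}(a,t_1)\neq c^{(1)}(c,t_1)$, because their type-$1$ out-neighbours are $(b,t_2)$ and $(d,t_2)$. $K_{\mathrm{loc}}$ is symmetric, so this particular problem does not arise there.

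Second, contrary to your claim, $\lambda^{\mathrm{time}}(v,t_i)=\lambda_i(v)$ does \emph{not} encode $t_i$. So with unevenly spaced timestamps, $\delta_{ij}$ is not a function of (colour, $r$), and the TGNN cannot recover $r$ from $\zeta(\delta_{ij})$. Take $(t_1,\dots,t_4)=(0,1,2,4)$, all labels equal, $E_1=\{\{u',w\}\}$, $E_3=\{\{u,v\}\}$, $E_2=E_4=\emptyset$. The nodes $x=(v,t_4)$ and $x'=(w,t_3)$ each have exactly one temporal neighbour, both at gap $2$, so every local or global TGNN gives $h^{(1)}_x=h^{(1)}_{x'}$. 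But the lags are $1$ and $2$, so $c^{(1)}(x)\neq c^{(1)}(x')$ on both $K_{\mathrm{glob}}$ and $K_{\mathrm{loc}}$.

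In both examples the embeddings coincide for every parameterization while the colours differ, so no choice of parameters rescues the equivalence. For the same reason, ``falling back on \citet{walega2025temporal}'' does not work: the examples show their characterizations cannot apply verbatim to the index-lag $K_\star$ with two-directional 1-RWL used here.

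Your induction becomes a correct self-contained proof once the setting is amended in two ways:
\begin{itemize}
\item Make relation types carry the actual gap $t_j-t_i$, for instance via the refined alphabet of Appendix~\ref{app:refined} or the edge features of $\widetilde K_\star$. Assuming equally spaced timestamps, or adding the time index to the initial labels, also works.
\item For $\star=\mathrm{glob}$, let 1-RWL read only $N^+_r$.
\end{itemize}
Under these changes your soundness step holds for every parameterization. Completeness then follows from your injective choices: sum aggregation over a linearly independent codebook of pairs $(h,\zeta(\delta))$, injective $\zeta$, $\Phi^{(l)}$ and $\Gamma^{(l)}$, and sufficient width. You and the paper both implicitly assume such an architecture despite the phrase ``any TGNN''.
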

The proofs of the above and all subsequent theoretical claims are provided in Appendix~\ref{app:proofs}.

\section{Relational Expressivity and Lottery Tickets} 
\label{sec:relexplth}
We now extend SELTH to RGNNs. We assume that all binary pruning masks $M$ with sparsity ratio $\rho\in(0,1)$ are constructed such that each entry is an independent Bernoulli random variable, and we write $M \sim^{i.i.d.} \mathscr{B}(1-\rho)$. Furthermore, we assume a parameter initialization $\Theta_0$ for $\Psi^{(L)}$ where each entry is independently drawn from a continuous, bounded uniform distribution on an interval $[c,d]\in \mathbb{R}$ with $c < d$, and we write $\Theta_0 \sim \mathscr{U}_{c}^{d}$. A sparse initialization of $\Psi^{(L)}$ is then given by the Hadamard product $\widehat{\Theta}_0=M\odot\Theta_0$. We also assume, for all MLPs (e.g., $\Gamma^{(l)}$), a class of real-analytic, injective, continuously differentiable zero-fixing activations $\sigma$ with a nowhere-zero derivative. The assumptions on $\sigma$ are merely technical and simplify the analysis. Extending our results to other activations is straightforward but requires case distinctions. We assume that initial node labels are encoded by an injective map $\Lambda$.
Appendix~\ref{apx:asslim} %
details all assumptions and limitations.
\theorem[Relational SELTH]{
\label{thm:relational_selth}
Let $\mathcal{D}$ be any finite %
collection of finite %
graphs, where each element of $\mathcal{D}$ is a directed node-labeled multi-relational graph $G=(V,\dot E,\lambda,\dot\tau)$.
Let $\Psi^{(L)}$ be a corresponding sufficiently overparametrized %
depth-$L$ RGNN. Then, for all $G_a,G_b\in \mathcal{D}$,
\[
G_a \not\simeq_{\mathrm{RWL}^{(L)}} G_b
\Longleftrightarrow
\widehat{\Psi}^{(L)}_{\widehat{\Theta}_0}(G_a) \neq \widehat{\Psi}^{(L)}_{\widehat{\Theta}_0}(G_b)
\]
with a probability at least $\gamma_{\mathrm{RGNN}} > 0.$ 
}\normalfont

Intuitively, $\gamma_{\mathrm{RGNN}}$ is the probability that \emph{all} branch and combine MLPs %
remain injective on the finite input sets they witness on $\mathcal{D}$. The proof of Theorem~\ref{thm:relational_selth} (Appendix~\ref{app:proofs}) yields an explicit lower bound in terms of: the maximum number $N_{\max}$ of distinct inputs witnessed by any $\Phi^{(l)}_{r\pm}$ or $\Gamma^{(l)}$, the minimum $\ell_0$-separation $s_{\min}$ between such inputs, the minimum hidden width $m_{\min}$ across these MLPs, the number of branches $|\mathcal{B}|$ per layer (tied to the relations and directions), the MLP depth $M$, message-passing depth $L$, and the pruning probability $\rho$. Appendix~\ref{app:proofs} derives $\gamma_{\mathrm{RGNN}}$ by bounding per-block non-injectivity on $\mathcal{D}$, union-bounding within each layer, and composing over $L$ layers under independent masks. A compact (slightly looser) bound is
\[
\widetilde{\gamma}_{\mathrm{RGNN}}
\geq
\Bigl(\bigl[1-\binom{\widetilde{N}_{\max}}{2}\rho^{\,\widetilde{s}_{\min}\widetilde{m}_{\min}}\bigl]_+\Bigr)^{\,L(M|\mathcal{B}|+1)},
\]
with $[x]_+:=\max\{x,0\}$, where $\widetilde{N}_{\max}:=\max\{N_{\max},N_{\max,\mathrm{comb}}\}$, $\widetilde{s}_{\min}:=\min\{s_{\min},s_{\min,\mathrm{comb}}\}$, and $\widetilde{m}_{\min}:=\min\{m_{\min},m_{\min,\mathrm{comb}}\}$ aggregate worst-case input-count, separation, and width across branch and combine MLPs; the full expression is given in Appendix~\ref{app:proofs}.
For any target $\gamma_{target}\in(0,1)$, it suffices to choose $\widetilde{m}_{\min}$ large enough so that the (clamped) bound
$\bigl([\,1-\binom{\widetilde{N}_{\max}}{2}\rho^{\,\widetilde{s}_{\min}\widetilde{m}_{\min}}\,]_+\bigr)^{\,L(M|\mathcal{B}|+1)}$ exceeds $\gamma_{target}$.
Provided $1-\binom{\widetilde{N}_{\max}}{2}\rho^{\,\widetilde{s}_{\min}\widetilde{m}_{\min}}>0$, a \emph{sufficient} width to certify $\widetilde{\gamma}_{\mathrm{RGNN}}\ge \gamma_{target}$ is
\[
\widetilde{m}_{\min}\ \ge\ \frac{1}{\widetilde{s}_{\min}\ln\rho}\,
\ln\!\Bigg(\frac{1-\big(\gamma_{target}\big)^{1/(L(M|\mathcal{B}|+1))}}{\binom{\widetilde{N}_{\max}}{2}}\Bigg).
\]
This threshold is obtained by inverting a conservative lower bound and 
may be pessimistic. We therefore call $\Psi^{(L)}$ \emph{sufficiently overparameterized} if its smallest hidden width $\widetilde{m}_{\min}$ satisfies the above inequality, which certifies $\gamma_{target}$.

Theorem~\ref{thm:relational_selth} is not a restatement of~\citet{kummer2025weisfeiler}'s SELTH, which is an \emph{existence} result for 1-WL bounded GNNs. Theorem~\ref{thm:relational_selth} gives a \emph{probabilistic} guarantee: under a random sparse initialization, directed 1-RWL distinguishability on \emph{directed multi-relational} graphs is preserved with probability at least $\gamma_{\mathrm{RGNN}}>0$. This %
requires $|\mathcal{B}|$ relation-direction branches plus a combine MLP and makes $\gamma_{\mathrm{RGNN}}$ depend on worst-case input statistics across all branch and combine blocks. With $|\mathcal{R}|=1$ and no directions, Theorem~\ref{thm:relational_selth} reduces to~\citet{kummer2025weisfeiler}'s SELTH.

\paragraph{Temporal expressivity and lottery tickets.}
\label{ssec:tempexplt}
To bring RSELTH to TGNNs, we first derive the following graph-level statement from the node-level formulation of Corollary~\ref{corr:globloc-rwl-exact} to match the setup of Theorem~\ref{thm:relational_selth}:
\begin{lemma}
\label{lem:globloc-rwl-exact_graph}
Let \(TG_a, TG_b\) be arbitrary elements of a finite collection $\mathcal{D}$ of temporal graphs and $\Psi^{(L)}$ a depth-$L$ local or global TGNN. Then, with $\star \in \{glob, loc\}$ being $\Psi^{(L)}$'s MP paradigm, there exist $\Theta$ for $\Psi^{(L)}_{\star}$ such that
\begin{align}
&K_{\star}(TG_a) \not\simeq_{\mathrm{RWL}^{(L)}} K_{\star}(TG_b) \Longleftrightarrow \\
&\Psi^{(L)}_{\Theta,\star}(TG_a) \neq \Psi^{(L)}_{\Theta,\star}(TG_b).
\end{align}
\end{lemma}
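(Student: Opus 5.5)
We lift Corollary~\ref{corr:globloc-rwl-exact} from nodes to graphs. The plan is to work on the disjoint union of all inputs and then add a graph-level injective readout. Concretely, let $\mathcal{K}_\star := \biguplus_{TG\in\mathcal{D}} K_\star(TG)$ be the disjoint union of the knowledge graphs obtained from all temporal graphs in $\mathcal{D}$; equivalently, it is $K_\star$ applied to the disjoint union of the temporal graphs, after padding to a common relation set $\mathcal{R}=\{0,\ldots,n_{\max}-1\}$.

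Since 1-RWL refinement on a disjoint union acts componentwise, node colors $c^{(L)}$ computed in $\mathcal{K}_\star$ coincide with those computed in each $K_\star(TG)$ separately. Hence one can compare colors of timestamped nodes across different temporal graphs. The same locality holds for TGNN embeddings, because messages only travel along edges of $K_\star(TG)$. Applying Corollary~\ref{corr:globloc-rwl-exact} to $\mathcal{K}_\star$ yields a single parameterization $\Theta$ such that, for all timestamped nodes $x,x'$ of any $TG_a,TG_b\in\mathcal{D}$,
\[
c^{(L)}(x)=c^{(L)}(x') \iff h^{(L)}_x=h^{(L)}_{x'} .
\]
Finiteness of $\mathcal{D}$ is what makes this single-graph application legitimate, since $\mathcal{K}_\star$ is finite.

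Given this node-level equivalence, the map $c^{(L)}(x)\mapsto h^{(L)}_x$ is a well-defined bijection between the finite set of colors and the finite set of embeddings that occur over $\mathcal{D}$. Therefore, for any $TG_a,TG_b$, the color multisets $C^{(L)}_{\mathrm{rwl}}(K_\star(TG_a))$ and $C^{(L)}_{\mathrm{rwl}}(K_\star(TG_b))$ are equal exactly when the embedding multisets $\multiset{h^{(L)}_x}$ are equal. Define the graph output $\Psi^{(L)}_{\Theta,\star}(TG)$ as a readout of the form $\sum_x \Phi_{\mathrm{ro}}(h^{(L)}_x)$. Choose $\Phi_{\mathrm{ro}}$ to map the finitely many occurring embeddings to linearly independent vectors, using for instance one-hot codes, or an MLP as in Remark~\ref{rem:constraint}. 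Sum pooling is then injective on multisets over this finite domain. Composing the bijections gives
\[
K_\star(TG_a)\not\simeq_{\mathrm{RWL}^{(L)}}K_\star(TG_b)\iff \Psi^{(L)}_{\Theta,\star}(TG_a)\neq\Psi^{(L)}_{\Theta,\star}(TG_b).
\]
If the notion $\not\simeq_{\mathrm{RWL}^{(L)}}$ refers to some iteration $k\le L$ rather than exactly $L$, monotonicity of refinement handles it. Distinct multisets at iteration $k$ stay distinct at iteration $L$, because colors at $L$ refine colors at $k$ injectively.

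The main obstacle is the uniformity of parameters. Corollary~\ref{corr:globloc-rwl-exact} is stated for a single temporal graph and provides parameters depending on that graph, whereas we need one $\Theta$ that works across graphs. The disjoint-union trick resolves this, but it has two requirements. First, the TGNN must be well defined on $\mathcal{K}_\star$ when the temporal graphs have different numbers of snapshots or different time gaps. For this we interpret the union as a single collection of timestamped nodes with the relation set padded, noting that the embeddings depend only on each node's own connected component. Second, the readout must be fixed once for the finite set of embeddings witnessed on $\mathcal{D}$. Both points are handled by finiteness, and I would state them explicitly as part of the argument.
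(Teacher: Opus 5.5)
Your route is the paper's. You take from Corollary~\ref{corr:globloc-rwl-exact} an injective correspondence $h^{(L)}_x=\psi_L(c^{(L)}(x))$ between colors and embeddings. You then use a sum readout over a linearly independent codebook, so that equal color multisets and equal outputs coincide. The paper takes $\chi(\multiset{h_x})=\alpha(\sum_x h_x)$, with the $\psi_L$-codebook itself linearly independent and $\alpha$ injective; your $\sum_x\Phi_{\mathrm{ro}}(h_x)$ is the same device. Your monotonicity remark also covers an ambiguity in the definition of $\not\simeq_{\mathrm{RWL}^{(L)}}$ that the paper leaves open. You are right that the delicate point is uniformity of $\Theta$ across graphs. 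The Corollary is stated for pairs of nodes in a single $TG$, and the paper's proof reads it, without comment, as giving one injective $\psi_L$ valid on $TG_a$ and $TG_b$ at once.

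Your repair of that point does not work as written. You claim $\mathcal{K}_\star$ is $K_\star$ of the disjoint union of the temporal graphs. This fails in the snapshot model whenever the elements of $\mathcal{D}$ do not share a timestamp sequence. All nodes of one temporal graph live on a common $t_1<\cdots<t_n$, so merging requires padding, interleaving, or re-timing, and each changes the object:
\begin{itemize}
\item Padding creates timestamped nodes that, in $K_{\mathrm{glob}}$, receive edges $(v,t_i)\to(u,t_j)$ from every earlier snapshot edge. This alters the outgoing neighborhoods of original nodes and can therefore change their directed 1-RWL colors.
\item Interleaving changes the index lags $j-i$.
\item Re-timing changes the real gaps $\zeta(\delta_{ij})$ that the TGNN actually consumes.
\end{itemize}
If you instead treat $\mathcal{K}_\star$ abstractly as a disjoint union of knowledge graphs, it is no longer of the form $K_\star(TG)$. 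Corollary~\ref{corr:globloc-rwl-exact} is quantified over single temporal graphs, so it then does not apply, and finiteness of $\mathcal{D}$ does not change that.

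What you need is a dataset-uniform node-level correspondence proved directly. Choose $\Lambda$, $\Phi^{(l)}$, $\Gamma^{(l)}$ and the aggregators injective on the finite sets of inputs witnessed over \emph{all} of $\mathcal{D}$, and induct over layers, as in Lemma~\ref{lem:inj-temporal}, whose proof defines $\eta_k$ on all colors realized on $\mathcal{D}$. This gives one injective $\eta_L$ across graphs without any union. After that, your bijection, readout and monotonicity steps go through unchanged.
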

To show that Theorem~\ref{thm:relational_selth} also applies to TGNNs, we show that Lemma~\ref{lem:globloc-rwl-exact_graph} also holds under pruning. To achieve this, we consider several different routes. Any global or local TGNN can be rewritten as an RGNN operating on an augmented $\widetilde{K}_\star(TG)$ (either encoding the real valued time gaps $\delta_{ij}$ as edge features or using a refined relation alphabet) to which Theorem~\ref{thm:relational_selth} applies. We discuss the case of a $\widetilde{K}_\star(TG)$ encoding $\delta_{ij}$ as edge features here and the alternative route along its shortcomings in Appendix~\ref{app:refined}. 

To this purpose, we construct an RGNN operating on an augmented $\widetilde{K}_{\star}(TG)$ that is algebraically identical to the corresponding TGNN operating on $TG$: Fix $\star\in\{\mathrm{glob},\mathrm{loc}\}$ and a TGNN layer as in Eqs.~\eqref{eq:tgnn-agg-glob-loc}-\eqref{eq:tgnn-combine} with aggregation/combination $\Omega_a,\Omega_c$, message map $\Phi^{(l)}$, combiner $\Gamma^{(l)}$, gap encoder $\zeta$, edge-time gaps $\delta(y,x)$ and a corresponding $K_\star(TG)$. We can realize the same update with an RGNN on $\widetilde{K}_\star(TG)$ by attaching to each directed edge $e=(y\!\to\!x)$ the edge feature $\xi_e:=\zeta(\delta(y,x))$, and using a shared message map $\Phi^{(l)}_{r,\pm}\equiv \Phi^{(l)}$, writing
\(
\widetilde{\Phi}^{(l)}(h,\xi)\;:=\;\Phi^{(l)}\!\big(\Xi(h,\xi)\big),
\)
for an appropriate $\Xi$ (e.g., $\|$). With the same $\Omega_a,\Omega_c,\Gamma^{(l)}$ as in the TGNN, the RGNN update at $x$ becomes 
\[
m_{r,\pm}(x) = \Omega_a\!\big(\multiset{\widetilde{\Phi}^{(l)}(h^{(l)}_y,\ \xi_{(y\to x)})\mid y\in N_r^{\pm}(x)}\big), \]
\[
h^{(l+1)}_x
=\Gamma^{(l)}\!\Big(
  \Omega_c\big(
    \multiset{h^{(l)}_x}\ \uplus\
    \biguplus_{r,\pm}\ \multiset{m_{r,\pm}(x))
    }
  \big)
\Big),
\]
which by construction is algebraically identical to Eqs.~\eqref{eq:tgnn-agg-glob-loc}-\eqref{eq:tgnn-combine} for the chosen (global/local) neighborhoods:  
\begin{lemma}
\label{lem:equality-edges}
Fix $\star\in\{\mathrm{glob},\mathrm{loc}\}$ and a TGNN layer with maps $(\Omega_a,\Omega_c,\Phi^{(l)},\Gamma^{(l)},\zeta,\Xi)$. Instantiate an RGNN on $\widetilde{K}_\star(TG)$ with 
\(
\widetilde{\Phi}^{(l)}(h,\xi)\;:=\;\Phi^{(l)}\!\big(\Xi(h,\xi)\big),
\)
using the \emph{same} $(\Omega_a,\Omega_c,\Gamma^{(l)})$ and \emph{strict weight tying} $\Phi^{(l)}_{r,\pm}\equiv\Phi^{(l)}$ across $(r,\pm)$.
Then for every $TG$, all $x\in V^{\mathrm{time}}$ and all 
$l\ge 0$
\[
h^{(l)}_x\ \text{(TGNN on $TG$)}\;=\;\widetilde{h}^{(l)}_x\ \text{(RGNN on $\widetilde{K}_\star(TG)$)}\,.
\]
\end{lemma}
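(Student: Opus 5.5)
We argue by induction on the layer index $l$, showing that the two update rules compute the same function of the previous layer's embeddings at every timestamped node. Since both models are stacks of layers of the same shape, it suffices to prove one-step equality. That is, if $h^{(l)}_y=\widetilde h^{(l)}_y$ for all $y\in V^{\mathrm{time}}$, then $h^{(l+1)}_x=\widetilde h^{(l+1)}_x$ for all $x$.

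\emph{Base case.} Both models initialize embeddings from the same time-aware labels $\lambda^{\mathrm{time}}(v,t_i)=\lambda_i(v)$ through the same encoder $\Lambda$, since $\widetilde K_\star(TG)$ has node set $V^{\mathrm{time}}$ with these labels. Hence $h^{(0)}_x=\widetilde h^{(0)}_x$.

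\emph{Step 1: neighbourhood bijection.} Fix $x=(v,t_j)$. We set up a multiplicity-preserving bijection between the index set of the TGNN aggregation, $\{(u,t_i)\in N(v,t_j)\}$, and the disjoint union $\biguplus_{r,\pm}\{y\in N^{\pm}_r(x)\}$ in $\widetilde K_\star(TG)$.
\begin{itemize}
\item[] For $\star=\mathrm{glob}$: each pair $\{u,v\}\in E_i$ with $i\le j$ corresponds to the edge $((u,t_i)\to(v,t_j))$ with relation $r=j-i$, so the edge carries $\xi=\zeta(\delta_{ij})$ and its source $y=(u,t_i)$ is exactly $\star$.
\item[] For $\star=\mathrm{loc}$: the source is $y=(u,t_j)$, which again matches $\star$.
\end{itemize}
Here we fix the convention, implicit in the construction of $\dot E^\star$, that messages into $x$ use the orientation in which $x$ is the receiving endpoint. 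The direction label $\pm$ and the relation $r$ are then just bookkeeping: every TGNN neighbour appears in exactly one $(r,\pm)$ bucket, once per edge occurrence.

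\emph{Step 2: equality of individual messages.} Under this bijection, the message $\widetilde\Phi^{(l)}(\widetilde h^{(l)}_y,\xi_{y\to x})=\Phi^{(l)}(\Xi(h^{(l)}_\star,\zeta(\delta_{ij})))$ equals the TGNN summand term by term. This uses the induction hypothesis together with strict weight tying $\Phi^{(l)}_{r,\pm}\equiv\Phi^{(l)}$.

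\emph{Step 3: equality of the aggregated update.} The main obstacle is reconciling the aggregation structure. The TGNN applies $\Omega_a$ once over the whole neighbourhood and combines $\multiset{h_x}\uplus\multiset{m}$. The RGNN instead applies $\Omega_a$ per bucket and then puts all bucket summaries into $\Omega_c$.
\begin{itemize}
\item[] For $\Omega_a=\Omega_c=\sum$ this is resolved by associativity and commutativity of addition: $h_x+\sum_{r,\pm}\sum_{y\in N^\pm_r(x)}(\cdot)=h_x+\sum_{(u,t_i)\in N(v,t_j)}(\cdot)$. Empty buckets contribute $0$, which matches the TGNN because those neighbours do not exist there either.
\item[] For other choices (e.g.\ concatenation), we state the required decomposability assumption explicitly: $\Omega_a$ must be a monoid operation so that $\Omega_c(\{h\}\uplus\{\Omega_a(A_1),\dots\})=\Omega_c(\{h\}\uplus\{\Omega_a(\biguplus A_k)\})$. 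This is how we read ``same $(\Omega_a,\Omega_c)$''.
\end{itemize}
Applying the shared $\Gamma^{(l)}$ to equal arguments then gives $h^{(l+1)}_x=\widetilde h^{(l+1)}_x$, which completes the induction.
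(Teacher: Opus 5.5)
Your proof is essentially the paper's: induction on $l$ from the common encoder $\Lambda$, the lag partition $N(v,t_j)=\biguplus_r N^{+}_r(x)$ in both $K_{\mathrm{glob}}$ and $K_{\mathrm{loc}}$, termwise equality of messages from $\xi_{(y\to x)}=\zeta(\delta_{ij})$ together with the induction hypothesis and weight tying, and regrouping under the shared $\Omega_c,\Gamma^{(l)}$; your explicit decomposability requirement in Step 3 makes precise a regrouping that the paper performs silently and that is automatic only for $\Omega_a=\Omega_c=\sum$. One correction to Step 1: the multiplicity-preserving bijection holds only with the incoming buckets $\biguplus_r N^{+}_r(x)$, not with $\biguplus_{r,\pm}$, because in $K_{\mathrm{glob}}$ we have $N^{-}_r(x)=\{(u,t_{j+r})\mid \{u,v\}\in E_j\}$ (future nodes for $r\ge 1$), and in $K_{\mathrm{loc}}$ we have $N^{-}_r(x)=N^{+}_r(x)$ since $\dot E^{\mathrm{loc}}_r$ is symmetric, so tied outgoing branches would add terms the TGNN never sees; your orientation convention must therefore be read as discarding the $-$ branches, which is exactly what the paper's proof does by only ever writing $m^{(l)}_{r,+}(x)$.
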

Together with Corollary~\ref{corr:globloc-rwl-exact}, Lemma~\ref{lem:equality-edges} implies that the RGNN operating on $\widetilde{K}_\star(TG)$ has exactly the same node-level expressivity as 1-RWL operating $K_\star(TG)$, which transfers to the graph level by the same argument as in Lemma~\ref{lem:globloc-rwl-exact_graph}. As $\mathcal{D}$ is finite and each $TG\in \mathcal{D}$ is finite, only finitely many time gaps $\delta$ appear as edge features on $\widetilde{K}_\star(TG)$. Hence Theorem~\ref{thm:relational_selth} applies verbatim to this RGNN via the same finite-domain injectivity argument. Moreover, as we tie parameters ($\Phi^{(l)}_{r,\pm}\equiv \Phi^{(l)}$) and choose $\Omega_a,\Omega_c,\Gamma^{(l)}$ to match the TGNN exactly, parameter matrices correspond one-to-one and pruning masks transfer verbatim from the RGNN back to the TGNN. Consequently, RSELTH, up to minor modifications, also holds for sufficiently overparameterized global and local TGNNs.
We state the corresponding TGNN reformulation of Theorem~\ref{thm:relational_selth} in Appendix~\ref{app:temporal_selth}, and, for completeness, prove it directly 
using Eqs.~\eqref{eq:tgnn-agg-glob-loc}-\eqref{eq:tgnn-combine}.
Moreover, Appendix~\ref{app:node-level-tselth} extends the result to the node level, matching the node-level focus of~\citet{walega2025temporal}.

We now relate Theorem~\ref{thm:relational_selth} applied to TGNNs to~\cite{walega2025temporal}'s notion of temporal isomorphism, which is taken to the graph level by the following Lemma. 
\begin{lemma}
\label{lemma:timewise-Kstar}
Let $TG_a=((G^a_1,t^a_1),\ldots,(G^a_n,t^a_n))$ and $TG_b=((G^b_1,t^b_1),\ldots,(G^b_n,t^b_n))$ be temporal graphs.
Fix $L\in\mathbb{N}$ and $\star\in\{\mathrm{glob},\mathrm{loc}\}$. Then for sufficient $L$:
\[
TG_a \simeq_{\mathrm{time}} TG_b \Longrightarrow K_\star(TG_a) \,\simeq_{\mathrm{RWL}^{(L)}}\, K_\star(TG_b).
\]
\end{lemma}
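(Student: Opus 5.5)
The plan is to show that a timewise isomorphism $\phi$ between $TG_a$ and $TG_b$ lifts to an isomorphism of the directed multi-relational knowledge graphs $K_\star(TG_a)$ and $K_\star(TG_b)$. Since 1-RWL color refinement is isomorphism-invariant, the color multisets $C^{(k)}_{\mathrm{rwl}}$ then coincide for every $k$, and in particular for $k=L$. This gives $K_\star(TG_a)\simeq_{\mathrm{RWL}^{(L)}}K_\star(TG_b)$ for every $L$, so the qualifier ``for sufficient $L$'' is not actually needed.

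\emph{Step 1: the lifted bijection.} Because $n=m$, the order-preserving maps $\iota$ identify snapshot indices on both sides. Define $\Phi:V^{\mathrm{time}}_a\to V^{\mathrm{time}}_b$ by $\Phi(v,t^a_i)=(\phi(v),t^b_i)$. This is a bijection because $\phi$ is a bijection on the shared node set $V$ and $i\mapsto i$ is a bijection on indices.

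\emph{Step 2: $\Phi$ preserves labels and relations.} For labels, $\lambda^{\mathrm{time}}_a(v,t^a_i)=\lambda^a_i(v)=\lambda^b_i(\phi(v))=\lambda^{\mathrm{time}}_b(\Phi(v,t^a_i))$, since $\phi$ is a labeled isomorphism $G^a_i\to G^b_i$ for each $i$. For edges in the global case, $((v,t_i),(u,t_j))\in\dot E^{glob}_r(a)$ holds iff $r=j-i\ge 0$ and $\{u,v\}\in E^a_i$. This in turn holds iff $\{\phi(u),\phi(v)\}\in E^b_i$, which is exactly the condition for $(\Phi(v,t_i),\Phi(u,t_j))\in\dot E^{glob}_r(b)$. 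The local case is identical, with both endpoints at time $t_j$. Relation labels are preserved because $r$ depends only on index differences, and those are unchanged by $\Phi$.

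The isomorphism conditions of Step 2 are the only place where care is needed. The key point is that relations are defined through indices rather than the raw real-valued gaps, so the equal-gap condition in $\simeq_{\mathrm{time}}$ is not even used, beyond guaranteeing $n=m$. Together with label preservation, this shows that $\Phi$ is an isomorphism of directed node-labeled multi-relational graphs.

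\emph{Step 3: invariance of 1-RWL.} Show by induction on $k$ that $c^{(k)}_{K_\star(TG_a)}(x)=c^{(k)}_{K_\star(TG_b)}(\Phi(x))$ for all $x$. The base case $k=0$ is label preservation. For the inductive step, $\Phi$ maps $N^{\pm}_r(x)$ bijectively onto $N^{\pm}_r(\Phi(x))$ for each $r$, so by the induction hypothesis the per-relation, per-direction multisets agree. Since the same $\Hash_3$ is applied to identical arguments, the new colors agree. Hence $C^{(L)}_{\mathrm{rwl}}(K_\star(TG_a))=C^{(L)}_{\mathrm{rwl}}(K_\star(TG_b))$, and the same holds at every earlier iteration, which is the claim.
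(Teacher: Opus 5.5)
Your proof is correct, but it takes a different route from the paper. The paper never constructs an isomorphism of the knowledge graphs. Instead it proves an auxiliary claim, $c^{(l)}_{K_\star(TG_a)}(x)=c^{(l)}_{K_\star(TG_b)}(f'(x))$ for the same lifted map $f'(v,t^a_i)=(f(v),t^b_i)$, by going through TGNN embeddings. It invokes Theorem~10 of \citet{walega2025temporal}, which says timewise-isomorphic timestamped nodes receive identical embeddings in every TGNN. It applies this to a TGNN whose layer-$l$ embeddings are $\psi_l(c^{(l)})$ with $\psi_l$ injective, as supplied by Corollary~\ref{corr:globloc-rwl-exact}. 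Injectivity of $\psi_l$ then turns equality of embeddings into equality of colors, and bijectivity of $f'$ gives equal color multisets.

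Your argument replaces both external ingredients with two direct steps. You check that $\Phi$ is an isomorphism of directed node-labeled multi-relational graphs, and you run the standard induction showing 1-RWL colors are isomorphism-invariant. That induction only needs $\Hash_3$ to be a fixed function, which the paper stipulates. This gives a self-contained, elementary proof.

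It also sidesteps a point the paper leaves implicit. Corollary~\ref{corr:globloc-rwl-exact} is stated for two nodes of the same $TG$, whereas the auxiliary claim compares colors across $K_\star(TG_a)$ and $K_\star(TG_b)$. The paper's route therefore tacitly needs a single parameterization whose $\psi_l$ is injective jointly on both graphs, for example by working on their disjoint union.

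Your route also makes two facts transparent that the paper's route obscures. Because relation types are index lags, the equal-gap condition of $\simeq_{\mathrm{time}}$ plays no role. And the conclusion holds for every $L$, so the qualifier ``for sufficient $L$'' is superfluous.

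In exchange, the paper's route ties the lemma to the TGNN-embedding viewpoint it uses elsewhere, and it yields equality of TGNN outputs alongside the color statement. That output equality, however, also follows at once from your result combined with Lemma~\ref{lem:globloc-rwl-exact_graph}.
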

Therefore, combining Lemma~\ref{lemma:timewise-Kstar} with Theorem~\ref{thm:relational_selth} via Lemma~\ref{lem:equality-edges}, if a sparse parameterization of a (local or global) TGNN distinguishes two temporal graphs, then those graphs are not timewise isomorphic. This links RSELTH to other notions of isomorphism of temporal graphs. By~\citet{walega2025temporal},~\cite{beddar2024weisfeiler}'s pointwise isomorphism is less strict than timewise isomorphism. ~\citet{heeg2025weisfeiler}'s notion is also more relaxed than timewise isomorphism: they prove it lies between time-aggregated and time-concatenated isomorphism, with the latter being equivalent to the notions proposed by~\citet{gao2022equivalence, souza2022provably}. For unlabeled snapshot graphs,~\citet{walega2025temporal}'s timewise isomorphism coincides with time-concatenated isomorphism, hence any timewise-isomorphic pair is 
consistent event-graph isomorphic (equivalently, their augmented event graphs are isomorphic). The converse does not hold in general; equivalence with timewise isomorphism holds only under constraints~\cite{heeg2025weisfeiler}.%

\paragraph{Cross-graph MP (XIMP/HIMP).}
Let $G=(V,\dot E,\lambda,\dot\tau)$ be a directed, node-labeled, (multi-)relational graph (e.g., a molecular graph) and let $\{T_i=(V_i,\dot E_i,\lambda_i,\dot\tau_i)\}_{i=1}^n$ be 
abstractions of $G$ (e.g., junction tree~\citep{jin2018junction}, extended reduced graph~\citep{erg}).
Analogous to~\citet{ehrlich2025ximp} (XIMP) and~\citet{himp} (HIMP), we encode atom-abstraction incidence by an assignment matrix $S_i\in\{0,1\}^{|V|\times |V_i|}$, where $S_i[v,u]=1$ iff $v\in V$ contributes to $u\in V_i$. 
XIMP takes $G$ and $\{T_i,S_i\}$ as input and computes $\Psi^{(L)}_{\Theta}(G,\{T_i,S_i\})$ to embed $G$. It performs MP on $G$ and each $T_i$ and two-way inter-graph MP both indirectly (I$^2$MP, $G \leftrightarrow T_i$) via $S_i$ and directly (DIMP, $ T_i \leftrightarrow T_k$) via the on-the-fly computed abstraction-abstraction incidence matrix $S_{ik}\in{0,1}^{|V_i|\times |V_k|}$, where $S_{ik}[u,w]=1$ iff there exists $v\in V$ with $S_i[v,u]=S_k[v,w]=1$. 
While I$^2$MP and DIMP both employ a single-layer post-aggregation transformation (a linear map followed by a pointwise nonlinearity), XIMP and HIMP can employ arbitrary standard GNNs for local MP on $G$ and $T_i$. Moreover, XIMP uses mean aggregation and HIMP uses concatenation or sum. Hence, to standardize the analysis, we adopt sum aggregation to 
leverage its multiset discriminative power~\cite{leskovec2019powerful} and assume a single-layer post-aggregation transformation for I$^2$MP, DIMP and MP. 

We define the \emph{compound node set} $\widetilde V \coloneqq V\,\uplus\,\biguplus_{i=1}^n V_i$ and construct a directed multi-relational \emph{compound graph} $G^{\oplus}=(\widetilde V,\ \dot E^{\oplus},\ \widetilde\lambda,\ \dot\tau^{\oplus})$ by taking the disjoint union of all intra-graph edges and adding \emph{typed} cross-graph edges:
(i) \emph{intra} edges $\dot E$ on $V$ and $\dot E_i$ on $V_i$ retain their original relation labels;
(ii) for each $i$ and each $(v,u)$ with $S_i[v,u]=1$, add $v\!\to\!u$ with relation type $\alpha_i$ and $u\!\to\!v$ with type $\bar\alpha_i$;
(iii) for $i\neq k$, add $u\!\in\!V_i \to w\!\in\!V_k$ with type $\beta_{ik}$ whenever there exists $v\!\in\!V$ such that $S_i[v,u]=S_k[v,w]=1$ and the corresponding reverse edge with type $\bar\beta_{ki}$. Up to notation, our definition is the same as~\citet{ehrlich2025ximp}, except that edge types are made explicit. As XIMP applies non-linear transformation post-aggregation, we initialize embeddings by $\widetilde h^{(0)}_x=\Lambda(\widetilde\lambda(x))$ for $x\in\widetilde V$ with an injective $\Lambda$ as in the relational case (Section~\ref{sec:relexplth}) but with the additional constraint that $\Lambda(\mathcal{X}) \subset \mathbb{R}^d$ is linearly independent (Remark~\ref{rem:constraint}).
A single \emph{cross-graph layer} on $G^{\oplus}$ is then a %
relational update %
as in Eqs.~\eqref{eq:rgnn-agg-dir}-\eqref{eq:rgnn-combine-dir}, over $\mathcal{R}^{\oplus}$, the union of all intra-graph relation types and the cross-graph types ${\alpha_i,\bar\alpha_i,\beta_{ik}, \bar\beta_{ki}}$.
By tying parameters per relation type (i.e., assigning %
linear $\Phi^{(l)}_r$ to all cross-graph types and one parameter tied $\Phi^{(l)}_r$ to each set of intra-graph relations per $G$ and $T_i$) and using $\Omega_a =\sigma \circ \sum,\Omega_c = \sum$ with $\Gamma^{(l)} =\texttt{id}$, this single layer is, except 
DIMP's normalization, by construction algebraically identical to 
XIMP's combination of MP with I$^2$MP and DIMP. As DIMP's normalization would not change the 1-RWL bound on $G^{\oplus}$ (Section~\ref{sec:prelims}), we omit it in the analysis. 
\begin{corollary}
\label{cor:ximp-rwl}
Let $\mathcal{D}$ be a finite collection of inputs $(G,\{T_i,S_i\}_{i=1}^n)$ and $G^{\oplus}$ be constructed as above for each element of $\mathcal{D}$.
Then there exists a parameterization of an RGNN on $G^{\oplus}$ such that, for all pairs in $\mathcal{D}$ and all $L\in\mathbb{N}$,
\begin{align}
&G^{\oplus}_a \not\simeq_{\mathrm{RWL}^{(L)}} G^{\oplus}_b
\;\Longleftrightarrow  \nonumber\\
&\Psi^{(L)}_{\Theta}(G_a,\{T_i^a,S_i^a\})\ \neq\ \Psi^{(L)}_{\Theta}(G_b,\{T^b_i,S^b_i\}).%
\end{align}
\end{corollary}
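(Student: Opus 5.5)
The plan is to reduce the statement to the standard fact that an RGNN of the form \eqref{eq:rgnn-agg-dir}--\eqref{eq:rgnn-combine-dir} with sum aggregation and injective components is exactly as expressive as directed 1-RWL on a finite collection of graphs. I then transfer this to the cross-graph architecture through the identity between XIMP's layer and a relational layer on $G^{\oplus}$ established just above.

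\textbf{Step 1: a finite set of witnessed inputs.} Since $\mathcal{D}$ is finite and every $G^{\oplus}$ is finite, only finitely many label values $\widetilde\lambda(x)$ and relation types in $\mathcal{R}^{\oplus}$ occur. At every depth $l\le L$ only finitely many 1-RWL colors occur. I proceed by induction on $l$, proving the node-level claim for all $x,x'$ in the disjoint union of all $G^{\oplus}\in\mathcal{D}$:
\[
c^{(l)}(x)=c^{(l)}(x')\iff \widetilde h^{(l)}_x=\widetilde h^{(l)}_{x'}.
\]

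\textbf{Step 2: base case and inductive step.} The base case $l=0$ holds because $\Lambda$ is injective.

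For the step, the direction from equal colors to equal embeddings holds because the layer is a function of exactly the data that $\Hash_3$ hashes. This data is the node's own state together with the per-relation, per-direction neighbor multisets.

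For the converse direction, I use the following facts:
\begin{itemize}
\item By the induction hypothesis, the map from colors to embeddings at layer $l$ is a bijection on the finite set of occurring colors.
\item I choose $\Phi^{(l)}_{r,\pm}$ so that the images of these finitely many embeddings are linearly independent. I also place the images for distinct pairs $(r,\pm)$, and the self term, into mutually independent coordinate blocks. This is possible with sufficient width.
\item By Remark~\ref{rem:constraint}, $\sum$ is then injective on multisets. Therefore $\Omega_c$ applied to the self term and the per-branch sums determines the tuple of neighbor multisets per relation and direction, i.e.\ the full argument of $\Hash_3$.
\item Finally, $\Gamma^{(l)}$ is chosen injective on the finite set of combined vectors. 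For example, $\sigma$ applied to a generic linear map works, since $\sigma$ is injective.
\end{itemize}

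The per-relation multisets differ from the multisets hashed by 1-RWL only through the parameter tying that the XIMP identity imposes. I expect this to be the main obstacle. Tying all cross-graph types, or all intra-graph relations of one graph, to a single $\Phi$ could merge multisets that 1-RWL keeps separate. My first attempt is to exploit the freedom in the statement: it only asks that \emph{some} parameterization of an RGNN on $G^{\oplus}$ exist. So I untie where needed, or equivalently choose the tied linear map on an augmented input that includes a one-hot code of the relation, which keeps the branches separable.

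For XIMP's $\Omega_a=\sigma\circ\sum$ with linear $\Phi$, I need to show that injectivity is preserved. This holds because linear independence makes $\sum$ injective and $\sigma$ is injective.

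\textbf{Step 3: graph level.} I lift the node-level equivalence to graphs as in Lemma~\ref{lem:globloc-rwl-exact_graph}. I take the readout to be an injective multiset encoder, namely a sum over a linearly independent re-embedding of the finitely many occurring final embeddings. Then $C^{(L)}_{\mathrm{rwl}}(G^{\oplus}_a)\neq C^{(L)}_{\mathrm{rwl}}(G^{\oplus}_b)$ holds iff the readouts differ.

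Because the layers are algebraically identical to XIMP's MP$+$I$^2$MP$+$DIMP (with DIMP's normalization omitted, which does not change the 1-RWL bound), the readout equals $\Psi^{(L)}_\Theta(G,\{T_i,S_i\})$. One parameter choice works for every $L$ because the collection of witnessed inputs is finite, so it suffices to build the layer-wise maps up to the stabilization depth and reuse them beyond it.
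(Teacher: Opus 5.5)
Your core argument is the paper's. You run an induction over the disjoint union of all $G^{\oplus}\in\mathcal{D}$ showing that, with sum aggregation over a linearly independent codebook and injective $\Phi^{(l)}_{r,\pm},\Gamma^{(l)}$, the embeddings are an injective function of the 1-RWL colors on $G^{\oplus}$, and you lift this to graphs with a multiset-injective readout. Your coordinate blocks make explicit what the paper asserts in one line.

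Your concern about tying is justified, and it is more than an obstacle in the proof. Suppose $G$ or some $T_i$ has two intra-graph relation types $r_1,r_2$ sharing one map. Because $\Omega_c=\sum$ sees only an unordered multiset of messages, take a graph with $x$ joined to $y$ by $r_1$ and to $z$ by $r_2$, and its copy with $r_1,r_2$ exchanged (same abstractions). Every node gets identical embeddings in both, yet 1-RWL separates them after one round. So literally tied XIMP can fail the forward direction. The paper's proof also unties silently: it takes injective per-relation $\Phi^{(l)}_{r,+},\Phi^{(l)}_{r,-}$ and an injective $\Gamma^{(l)}$ instead of $\Gamma^{(l)}=\texttt{id}$. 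The consequence is that your Step 3 cannot invoke algebraic identity with XIMP's MP$+$I$^2$MP$+$DIMP. After untying, block placement and a non-identity $\Gamma^{(l)}$, what you have characterized is an RGNN on $G^{\oplus}$. That is what the statement's ``parameterization of an RGNN on $G^{\oplus}$'' allows, so read $\Psi^{(L)}_\Theta$ that way rather than claim the identity survives.

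The step that fails as written is your final sentence. Stabilization at depth $l^*$ freezes the color partition, not the embedding vectors. Layer $l^*$ outputs a new finite set $H_{l^*+1}\neq H_{l^*}$, and maps chosen to be injective with linearly independent images on $H_{l^*}$ carry no guarantee on $H_{l^*+1}$. So the inputs witnessed over all depths need not form a finite set, and ``reusing'' the maps can collapse classes. There are two repairs:
\begin{itemize}
\item Follow the paper and choose fresh parameters at every layer $l$ by the same inductive step, which only ever involves the finite set witnessed at that layer. Then choose the readout for the final set at each depth.
\item Make reuse legitimate. At the stable depth, choose $\Gamma^{(l^*)}$ (an MLP interpolating finitely many points) to send each combined vector $a_v$ back to $h^{(l^*)}_v$. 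This is well defined because $h^{(l^*)}_v$ sits in the self block of $a_v$. It is injective because stability gives $c^{(l^*)}(v)=c^{(l^*)}(v')\Rightarrow c^{(l^*+1)}(v)=c^{(l^*+1)}(v')$. The configuration is then a fixed point: every repetition of layer $l^*$ returns the same embeddings. Your finiteness claim then holds, and a single readout works for all $L$.
\end{itemize}
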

As the reduction to a relational layer on the compound graph $G^{\oplus}$ yields a one-to-one correspondence with XIMP's parameters by construction, the pruning masks align. 
It suffices to directly apply Theorem~\ref{thm:relational_selth} to the corresponding RGNN operating on $\{G^{\oplus}\}$: sparse subnetworks that preserve $1$-RWL distinguishing power on $G^{\oplus}$ transfer to XIMP (and hence HIMP, as 
setting $n{=}1$ with $T_1$ the junction tree 
recovers HIMP's inter-MP), yielding SELTH for cross-graph MP.
\begin{figure*}[t]
    \begin{center}%
    \centerline{\includegraphics[width=1.0\textwidth, trim={0.25cm 0.280cm 0.25cm 0.295cm}, clip]{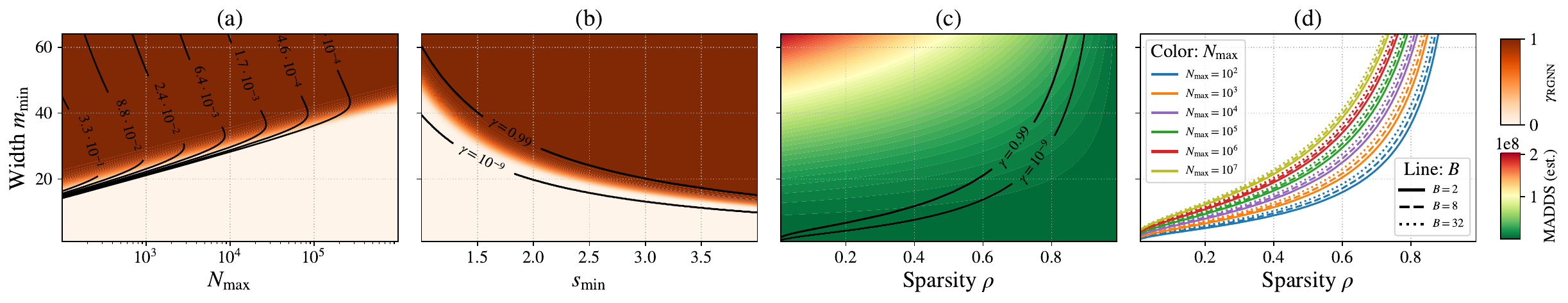}}
    \caption{\textbf{(a)} Landscape of $\gamma_{\mathrm{RGNN}}$ over $N_{\max}$ (log-scale, x) as task complexity proxy and width $m_{\min}$ (y) at fixed sparsity $\rho=0.70$. Black iso-curves show \emph{efficiency} $=\gamma_{\mathrm{RGNN}}/(\text{MADDs}\cdot10^{-6})$.
\textbf{(b)} $\gamma_{\mathrm{RGNN}}$ versus $s_{\min}$ (x) and $m_{\min}$ (y) at $\rho=0.70$; contours at $\gamma_{\mathrm{RGNN}}=0.99$ and $10^{-9}$ illustrate that increasing $s_{\min}$ expands the high-$\gamma_{\mathrm{RGNN}}$ region, reducing the width needed to hit the target. \textbf{(c)} MADDs cost landscape %
over $(\rho,m_{\min})$ with $\gamma_{\mathrm{RGNN}}=0.99$ and $10^{-9}$ frontiers overlaid, showing that achieving $\gamma_{\mathrm{RGNN}}\approx1$ typically requires either lower pruning (smaller $\rho$) or larger width. \textbf{(d)} $\gamma_{\mathrm{RGNN}}=0.99$ frontiers in the $(\rho,m_{\min})$ plane for multiple $N_{\max}$ (color) and branch counts $B = |\mathcal{B}|$ (line style): larger $N_{\max}$ or $B$ shifts the frontier toward denser or wider regimes. Unless varied per panel: $L=5$, $B=4$ (varied only in d), $M=2$, $s_{\min}=2$, and $N_{max}=1000$.
\label{fig:res_comb}
    }
    \end{center}
    \vskip -0.2in
\end{figure*} 
\paragraph{Optimization of expressive lottery tickets.}
The optimization of sparse RGNNs 
is subtle, so we defer technical details to Appendix~\ref{sec:opt_stability} and summarize the main results here. 
We call a sparse initialization with mask $M$ \emph{optimizable} on a finite dataset $\mathcal{D}$ if, when the output loss gradient is nonzero at least for one element of $\mathcal{D}$, every surviving parameter is gradient-connected to the loss and receives a nonzero update at least once over $\mathcal{D}$.
We show for RGNNs that irreducible task-expressive masks (minimal masks whose surviving parameters suffice to distinguish exactly those non-isomorphic graphs in $\mathcal{D}$ that carry different labels) are optimizable in this sense. In contrast, arbitrarily sparsified RGNNs 
at the same sparsity need not be optimizable: they may both fail to realize label-relevant distinctions and allocate disconnected parameters receiving zero gradients. Finally, we prove that optimizability under irreducible task-expressive masks persists under standard 
small-step first-order updates for sufficiently small step sizes.

\section{Numerical Illustrations and Empirics}
Our evaluation has three parts: (i) we instantiate 
Theorem~\ref{thm:relational_selth}'s bound $\gamma_{\mathrm{RGNN}}$ to visualize width--sparsity--compute trade-offs as its governing quantities vary (Figure~\ref{fig:res_comb}); (ii) on a synthetic multi-relational dataset, we instantiate $\gamma_{\mathrm{RGNN}}$ using architecture/dataset-specific input statistics, compare it to an empirical mask success rate $\gamma_{\mathrm{empirical}}$, and relate \emph{pre-training graph separability} (i.e., whether observed graphs receive distinct embeddings) as empirical expressivity proxy 
to gradient flow and training loss/accuracy under fixed masks as a proxies for optimizability (Figures~\ref{fig:res2}, ~\ref{fig:res4} and ~\ref{fig:res5}); (iii) on real temporal and molecular 
benchmarks, we examine how pre-training separability of pruned models correlates with compression across pruning rates and predicts test performance (Figure~\ref{fig:res3}).

\paragraph{Setup.}
We instantiate %
Theorem~\ref{thm:relational_selth}'s $\gamma_{\mathrm{RGNN}}$ 
to visualize how expressivity and sparsity interact 
with width and computational cost under varying $N_{\max}, s_{\min}, m_{\min}, |\mathcal{B}|, M, L$ and $\rho$ in Figure~\ref{fig:res_comb}. 
We treat $\gamma_{\mathrm{RGNN}}$ as a worst-case probability that a \emph{random} mask preserves the $L$-round 1-RWL expressivity on a finite dataset, not as a predictor of downstream prediction quality. 
We focus on the RGNN of Section~\ref{sec:prelims} and estimate the multiply-adds operations (MADDs) of an MLP similar to~\citet{adapt}; see Appendix~\ref{app:computational-cost}. 

Figure~\ref{fig:res2}(a) illustrates 
the tightness of $\gamma_{\mathrm{RGNN}}$. 
We first generate a synthetic dataset $\mathcal{D}_{syn}$ of $30$ multi-relational graphs. Then, we randomly initialize and prune a simple RGNN using $\sum$ for $\Omega_a$, $\Omega_c$ and readout. Next, we compute graph embeddings over $\mathcal{D}_{syn}$. A pruning mask at sparsity $\rho$ is counted as successful iff the resulting embeddings are pairwise distinct across all graphs. Sampling $100$ masks for each $\rho$ on a uniform grid of $50$ values in $[0.01,0.99]$ yields an empirical success rate $\gamma_{\mathrm{empirical}}$, which we plot against the mean $\overline{\gamma}_{\mathrm{RGNN}}$ obtained by instantiating $\gamma_{\mathrm{RGNN}}$ on the configuration-specific input statistics of each sample (details Appendix~\ref{app:toy-dataset-architrecture}). 
Moreover, Figure~\ref{fig:res2}(b) shows how a simple pre-training separability proxy is associated with optimization under pruning. For 
each $\rho$, we sample $20$ masks, apply each mask to a freshly initialized  RGNN with a linear 
classifier, and evaluate the \emph{fraction of separable graphs} before training, i.e., the number of distinct embeddings divided by $|\mathcal{D}_{syn}|$. We then train the model for $10$ epochs on a graph-ID classification task over $\mathcal{D}_{syn}$, masking both weights and gradients. We record (i) the final training loss, (ii) the final training accuracy, and (iii) the global gradient-flow~\cite{evci2022gradient} given by the global $\ell_2$-norm of the gradients, averaged over epochs. We average metrics across trials and report Spearman correlations between pre-training separability and post-training metrics. 
\begin{figure}[t]
    \begin{center}%
    \centerline{\includegraphics[width=0.99\columnwidth, trim={0.25cm 0.29cm 0.250cm 0.29cm}, clip]{
       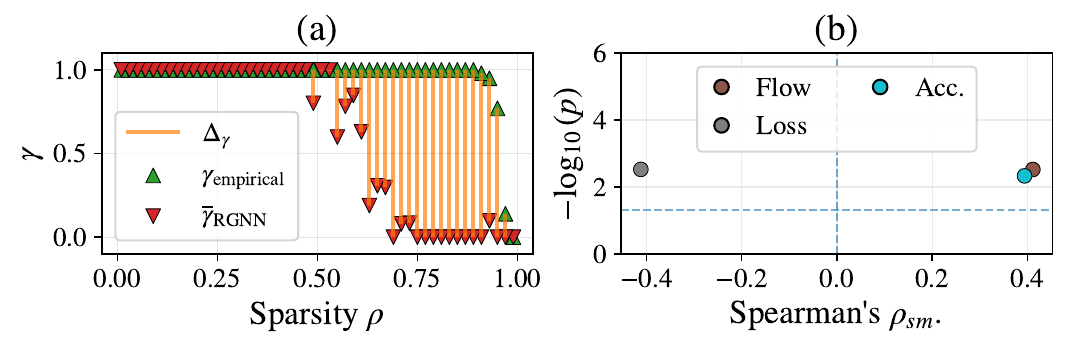
    }} %
    \caption{\textbf{(a)} 
    $\gamma_{\mathrm{empirical}}$ (green) vs. mean $\overline{\gamma}_{\mathrm{RGNN}}$ (red) for randomly sparsely initialized untrained simple RGNNs %
    on $\mathcal{D}_{syn}$. 
    Orange segments visualize %
    $\Delta_\gamma=\gamma_{\mathrm{empirical}}-\overline{\gamma}_{\mathrm{RGNN}}$. \textbf{(b)} Spearman correlation between the fraction of separable graphs before training and (i) mean gradient flow, (ii) final training loss, and (iii) final training accuracy. For $\rho$, we average results across %
    trials
    and compute Spearman's $\rho_{sm}$ and its $p$-value; points plot $(\rho_{sm},\,-\log_{10} p)$, with the dashed lines indicating $p=0.05$ and $\rho_{sm}=0$.
    \label{fig:res2}
    } %
    \end{center}
    \vskip -0.2in
\end{figure} 
While Figure~\ref{fig:res2}(b) illustrates a general trend that separability is associated with optimization under pruning over a large interval of pruning rates, it does not fully disentangle expressivity from model size. As subnetworks with lower sparsity naturally retain more parameters, it remains unclear whether the observed improvement in gradient flow, loss, or accuracy comes from preserving task-relevant expressivity or simply from having larger subnetworks. Figure~\ref{fig:res4} hence shows a fixed-sparsity control experiment: using the same RGNN configuration and $\mathcal{D}_{syn}$, for four sparsity levels $\rho \in \{0.9125, 0.925, 0.9375, 0.95\}$, we sampled $100$ random masks with identical parameter counts and compared optimization behavior across different pre-training separability levels, using $30$ training epochs. 

To complement Figure~\ref{fig:res2}(a), in Figure~\ref{fig:res5}, we perform a per-factor study of bound tightness using the same synthetic multi-relational graph generator and the same simple, randomly initialized, untrained RGNN as above. Unless varied, we fix the common hidden width and remaining dataset/model parameters to $(m, |\mathcal{D}_{\mathrm{syn}}|, n, L, |\mathcal{R}|) = (16, 30, 6, 3, 3)$, and sweep $m, n, |\mathcal{D}_{\mathrm{syn}}| \in \{4,8,16,32,64\}$, and $L, |\mathcal{R}| \in \{2,4,6,8,10\}$ one at a time. For each configuration, we evaluate 50 uniformly spaced sparsity levels $\rho \in [0.01,0.99]$. At each $\rho$, we estimate $\gamma_{\mathrm{empirical}}$ and $\gamma_{\mathrm{RGNN}}$ from 10 independently sampled pruning masks and associated statistics, respectively. %
We summarize tightness by the integrated absolute gap $\int |\gamma_{\mathrm{empirical}}(\rho)-\overline{\gamma}_{\mathrm{RGNN}}(\rho)|\, d\rho \approx \sum_\rho\Delta_\gamma(\rho)$ (see Figure~\ref{fig:res2}(a)), so smaller values indicate closer agreement between $\gamma_{\mathrm{RGNN}}$ and $\gamma_{\mathrm{empirical}}$.

Figure~\ref{fig:res3}(a) relates compression ratio, test performance and 
separability ratio of the untrained, pruned model, measured (i) at the node level after the final message-passing step for local and global TGNNs on the node-classification benchmark tgbn-trade~\cite{huang2023temporal}, and (ii) at the graph level for XIMP on one polaris ADMET (HLM) and potency (pIC50 MERS-CoV Mpro) regression benchmark~\cite{asap_admet_2025, asap_potency_2025} each. Motivated by Figure~\ref{fig:res2}(a), we sweep pruning ratios $\rho$ over 200 uniformly spaced values in $[0.0,0.995]$, with 5 samples per $\rho$. We train XIMP for 50 epochs and the TGNNs for 10 epochs, using width 16 for all learnable transformations. 
Figure~\ref{fig:res3}(b) shows the same setup but from a probabilistic perspective. We estimate, for each model-dataset pair, the probability that a pruned model achieves test performance within a small margin of its dense counterpart, conditioned on its pre-training separability. Separability values are grouped into 25 bins, and for each bin we report $P(\mathrm{WT}\mid\mathrm{Sep.})$, the probability of a model being a winning ticket (WT) \footnote{Code for reproducing %
results and figures 
is available at GitHub: 
\href{https://github.com/lorenz0890/relational_selth}{https://github.com/lorenz0890/relational_selth}}.
\begin{figure}[t]
    \vskip 0.095in
    \begin{center}%
    \centerline{\includegraphics[width=0.99\columnwidth, trim={0.33cm 0.2825cm 0.251cm 0.240cm}, clip]{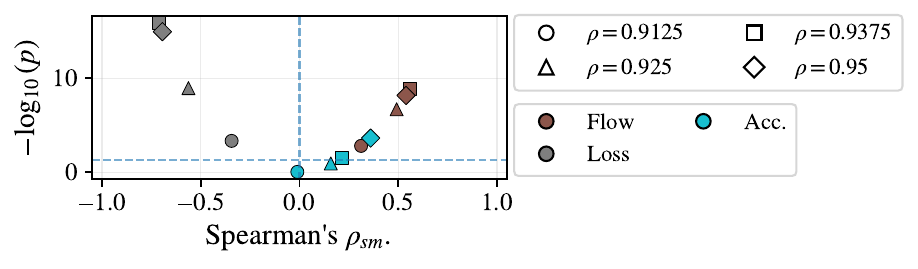}}
    \caption{Spearman correlation between the fraction of separable graphs before training and post-training metrics under fixed sparsity (simple RGNN, $\mathcal{D}_{\mathrm{syn}}$). For each $\rho$, we compute Spearman's $\rho_{sm}$ (and $p$-value) across masks between pre-training separability and (i) mean gradient flow, (ii) final training loss, and (iii) final training accuracy. %
    Points plot $(\rho_{sm}, -\log_{10} p)$, %
    dashed lines indicate $\rho_{sm}=0$ and $p=0.05$.}\label{fig:res4}%
    \end{center}
    \vskip -0.2in
\end{figure} 
\paragraph{Results and practical implications.}
The numerical behavior of $\gamma_{\mathrm{RGNN}}$ suggests practical design rules for sparse RGNN initializations. Figure~\ref{fig:res_comb}(d) shows that as $N_{\max}$ and $B$ increase, the width--sparsity trade-off shifts, so larger and more complex benchmarks typically tolerate less aggressive pruning (or require wider layers) to maintain a fixed $\gamma_{\mathrm{RGNN}}$.
Moreover, Figure~\ref{fig:res_comb}(b) shows that increases in $s_{\min}$ (which could be achieved via input encodings~\cite{kummer2025on} or %
by avoiding %
pruning schemes~\cite{deng2020model} %
inherently zeroing activations) 
can %
increase the sparsity attainable at a given width for the same target $\gamma_{\mathrm{RGNN}}$.
This also suggests a %
link to over-smoothing~\cite{chuang2025fair, rusch2023survey}, 
which %
can similarly reduce input separability.
The $\gamma_{\mathrm{RGNN}}$ contours in Figure~\ref{fig:res_comb}(c) highlight a width--pruning trade-off: along a fixed target level (e.g., $\gamma_{\mathrm{RGNN}}\approx0.99$), width can be traded for sparsity up to moderate $\rho$ while keeping the approximate MADDs budget roughly constant. At high sparsity, however, the width required to maintain the target makes cost grow exponentially. Consistently, Figure~\ref{fig:res_comb}(a) shows diminishing cost-efficiency from further widening at fixed benchmark complexity and sparsity as $\gamma_{\mathrm{RGNN}}\to1$.
Figures~\ref{fig:res_comb}(a)--(c) show that the band between $\gamma_{\mathrm{RGNN}}=10^{-9}$ and $\gamma_{\mathrm{RGNN}}=0.99$ is narrow, suggesting that most configurations are either severely underparameterized or inefficiently overparameterized. Only a small region supports a cost-effective, highly expressive %
subnetwork, where careful tuning of width and sparsity relative to dataset and architectural properties %
affects $\gamma_{\mathrm{RGNN}}$.
\begin{figure*}[t]
    \begin{center}%
    \centerline{\includegraphics[width=0.97\textwidth, trim={0.25cm 0.280cm 0.25cm 0.225cm}, clip]{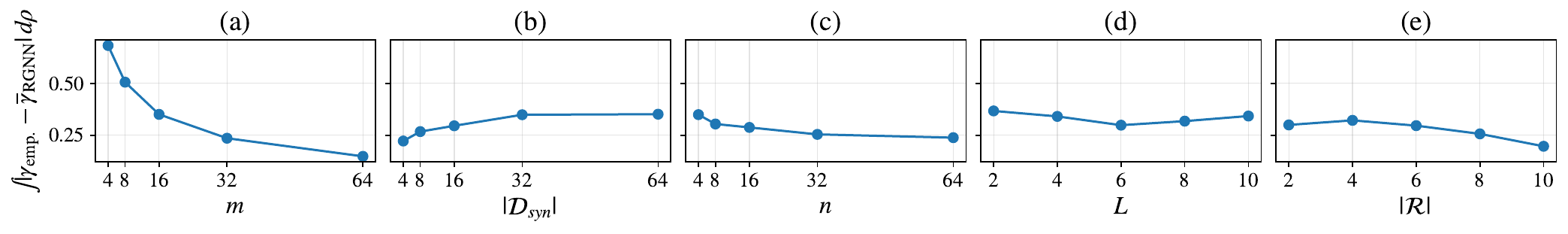}}
    \caption{Tightness of $\gamma_{\mathrm{RGNN}}$ as model and dataset characteristics vary. Each panel reports the absolute area between $\gamma_{\mathrm{empirical}}$ and mean $\overline{\gamma}_{\mathrm{RGNN}}$ over a sparsity sweep, compare Figure~\ref{fig:res2} (a), %
    so smaller area indicates a tighter bound. Panels vary \textbf{(a)} width $m$, \textbf{(b)} number of graphs $|\mathcal{D}_{\mathrm{syn}}|$, \textbf{(c)} number of nodes per graph $n$, \textbf{(d)} number of message-passing layers $L$, and \textbf{(e)} number of relations $|\mathcal{R}|$. For each $\rho$, $\gamma_{\mathrm{empirical}}$ is estimated from 10 pruning-mask trials, while $\gamma_{\mathrm{RGNN}}$ is instantiated %
    using model-specific statistics and
    averaged over 10 sparse-mask samples. Unless varied per panel: $m=16$, $|\mathcal{D}_{\mathrm{syn}}|=30$, $6$ nodes per graph, $L=3$, and $|\mathcal{R}|=3$.
\label{fig:res5}
    }
    \end{center}
    \vskip -0.2in
\end{figure*} 

\begin{figure}[t]
    \begin{center}%
    \centerline{\includegraphics[width=1.0\columnwidth, trim={0.33cm 0.2825cm 0.251cm 0.31cm}, clip]{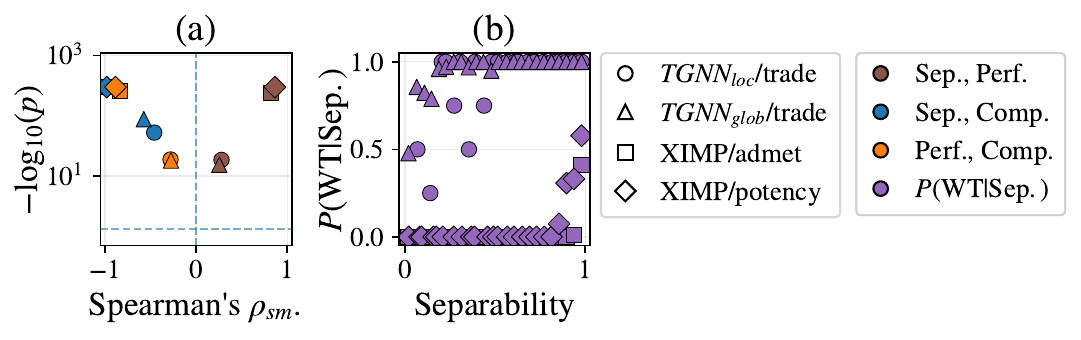}}
        \caption{%
    \textbf{(a)} For each model--dataset variant, we compute Spearman's $\rho_{sm}$ (and $p$-value) across runs between (i) separability (Sep.) and normalized test performance (Perf.), (ii) Sep.\ and compression (Compr.), and (iii) Perf.\ and Compr.; points plot $(\rho_{sm},\,-\log_{10} p)$ (log-$y$), with colors encoding the variable pair and markers the model--dataset group; dashed lines indicate $\rho_{sm}=0$ and $p=0.05$. \textbf{(b)} $P(\mathrm{WT}\mid \mathrm{Sep.})$ for pruned runs, shown per model--dataset variant over separability bins (midpoints on $x$).\label{fig:res3}%
    }
    \end{center}
    \vskip -0.2in
\end{figure} 
Figure~\ref{fig:res2}(a) compares mean $\overline{\gamma}_{\mathrm{RGNN}}$ to $\gamma_{\mathrm{empirical}}$. The bound is tight when expressivity holds with high probability (roughly $\rho\lesssim0.65$ here), becomes more conservative in the transition region where expressivity fails (approx. $\rho\in[0.65,0.95]$), and tightens again as success vanishes for $\rho\to1$. The observed variations of $\overline{\gamma}_{\mathrm{RGNN}}$ in Figure~\ref{fig:res2}(a) reflect the dependency of $\gamma_{\mathrm{RGNN}}$ on statistics obtained from concrete model and mask instantiations.
This matches the %
standard deviation ($0.18$) 
of the empirical fraction of distinguishable graphs over $\rho\in[0.65,0.95]$.
Figure~\ref{fig:res5} %
studies bound tightness through the integrating the gap $\Delta_\gamma$.
The clearest effect is width: as $m$ increases, the gap decreases%
, indicating that once the model is sufficiently overparameterized, the lower bound tracks the empirical success curve more closely. Increasing $|\mathcal{D}_{\mathrm{syn}}|$ has the opposite effect and makes the guarantee more conservative, which is consistent with the need to control more witnessed inputs and %
distinctions on the finite %
$\mathcal{D}_{\mathrm{syn}}$. The trends with $n$ and $|\mathcal{R}|$ require more caution. In our %
setup, varying %
$n$ and $|\mathcal{R}|$ does not only change a single %
quantity in the bound; it also changes the synthetic graphs generated (see Appendix~\ref{app:toy-dataset-architrecture}), %
and, therefore, the witnessed statistics used to instantiate $\gamma_{\mathrm{RGNN}}$%
. %
Hence, a smaller gap in panels (c) and (e) should not be read as saying that larger graphs or more relations intrinsically improve the lower bound itself. Rather, these changes %
move the sparsity regimes in which both $\gamma_{\mathrm{emp.}}$ and $\overline{\gamma}_{\mathrm{RGNN}}$ curves are %
closer (compare Figure~\ref{fig:res2}(a)). %
Thus, Figure~\ref{fig:res5} complements rather than contradicts Figure~\ref{fig:res_comb}: Figure~\ref{fig:res_comb} analyzes how the bound varies as a function of its governing quantities, whereas Figure~\ref{fig:res5} measures how well the instantiated bound matches empirical behavior for %
a specific construction. %

Figure~\ref{fig:res2}(b) shows that higher pre-training separability is associated with stronger gradient flow, lower final training loss, and higher final training accuracy. Figure~\ref{fig:res4} strengthens this conclusion. Even at fixed sparsity, higher pre-training separability remains significantly associated with stronger gradient flow and lower final loss across all tested sparsity levels, and with higher accuracy at the higher sparsity levels. Thus, the optimization trend in Figure~\ref{fig:res2}(b) is not merely a byproduct of larger subnetworks, but persists in a within-sparsity comparison as well. This matches Section~\ref{sec:relexplth}'s optimization discussion: masks that preserve more label-relevant distinctions at initialization retain more gradient-connected paths, so loss signals reach more surviving weights. %
Moreover, this aligns with the hypothesis that more expressive sparse RGNN initializations %
are easier to optimize, which is consistent with~\citet{kummer2025weisfeiler}'s empirical findings and theoretical Gradient Diversity~\cite{yin2018gdiv} analysis for classic GNNs. %
Figure~\ref{fig:res3} further supports this on real-world benchmarks and practical architectures: pre-training separability correlates with and predicts test-set performance and is anticorrelated with pruning compression.
In practice, this highlights the importance of expressivity-aware sparsification for finding winning lottery tickets.

\section{Conclusion}
We introduce %
RSELTH, proving that sufficiently wide RGNNs contain sparse subnetworks that match 1-RWL expressivity on any finite dataset and deriving an explicit lower bound $\gamma_{\mathrm{RGNN}}$ on the probability that a random pruning mask yields such a subnetwork. We show the theory extends to TGNNs and XIMP/HIMP-type architectures. Our %
results indicate that (i) $\gamma_{\mathrm{RGNN}}$ is tight in the high- and low-success regimes and becomes most conservative near the critical sparsity threshold where expressivity breaks down, %
(ii) only a narrow band of width--sparsity configurations yields cost-efficient, highly expressive sparse initializations, %
(iii) %
the attainable expressivity at fixed width and sparsity depends strongly on dataset complexity and %
(iv) more expressive sparse initializations can be easier to optimize. %
Together, %
our results position RSELTH as a 
unifying perspective on sparse, expressive RGNNs and TGNNs and beyond.

\medskip

{
\small
\newpage
\section*{Acknowledgements}
This work was supported in part by the Vienna Science and Technology
Fund (WWTF) and the City of Vienna through grants [10.47379/VRG19009] and [10.47379/ICT22059].

\section*{Impact Statement}
This work advances the theoretical understanding of how parameter sparsity interacts with expressivity in relational and temporal graph neural networks. By clarifying when random pruning can preserve discriminative power, the results may support the design of more compute- and memory-efficient graph models, which can reduce deployment costs and energy use in applications where graph learning is already standard.

The methods studied here are general and could be applied to sensitive relational or temporal data (e.g., social, financial, or behavioral graphs). Improved efficiency can lower the barrier to deploying such models at scale, which may amplify risks around privacy, profiling, or biased decision-making if used without appropriate safeguards.

Our contributions do not introduce new data collection or new capabilities targeted at a specific domain, and they do not address fairness, privacy, or security directly. Practitioners should treat the theory as one component in a broader responsible pipeline, including dataset governance, privacy-preserving training where appropriate, and domain-specific evaluations for bias and misuse.

\bibliographystyle{icml2026}
\bibliography{icml2026}
}

\newpage
\onecolumn
\appendix
\section{TGNN $\rightarrow$ RGNN via refined relational alphabet}
\label{app:refined}
The construction of a $\widetilde{K}_\star(TG)$ using an augmented relation alphabet is a little more intricate.
For a fixed temporal graph $TG=((G_1,t_1),\dots,(G_n,t_n))$ in $\mathcal{D}$ with strictly increasing timestamps, let the finite set of realized gaps as
\(
\Delta(TG)\; \coloneqq \;\{\,t_j-t_i\mid 1\le i\le j\le n\text{, and there is a corresponding edge in }K_\star(TG)\,\}. %
\)
Furthermore, let $\sigma:\Delta(TG)\to\{1,\dots,B\}$ be any injective enumeration of these gaps (with $B \coloneqq |\Delta(TG)|$).
We define the \emph{refined} relation alphabet as
\(
\widetilde{\mathcal{R}}\; \coloneqq \;\big\{\, (r,b)\ \big|\ r\in\{0,\dots,n{-}1\},\ b\in\{1,\dots,B\}\text{, and}\ \exists\,i\le j:\ j{-}i=r,\ \sigma(t_j{-}t_i)=b\,\big\}.
\)
We then form a refined temporal knowledge graph $\widetilde{K}_\star(TG)=(V^{\mathrm{time}},\dot E^\star,\lambda^{\mathrm{time}},\tilde\tau^\star)$ by keeping the same node set and directed edge set as in $K_\star(TG)$, but labeling each edge $((v,t_i)\!\to\!(u,t_j))$ by
\(
\tilde\tau^\star\!\big((v,t_i),(u,t_j)\big)\  \coloneqq \ \big(j{-}i,\ \sigma(t_j{-}t_i)\big)\ \in\ \widetilde{\mathcal{R}}.
\)
Then, we write the per-relation, per-direction neighborhoods on $\widetilde{K}_\star(TG)$ as
\(
N_{(r,b)}^{-}(x)\;=\;\{\,y\mid \tilde\tau^\star(x,y)=(r,b)\,\}\)
and \(%
N_{(r,b)}^{+}(x)\;=\;\{\,y\mid \tilde\tau^\star(y,x)=(r,b)\,\}.
\)

\begin{lemma}%
\label{lem:tgnn-to-rgnn}
Let $\star\in\{\mathrm{glob},\mathrm{loc}\}$ and a TGNN layer as in Eqs.~\eqref{eq:tgnn-agg-glob-loc}-\eqref{eq:tgnn-combine} with maps $\Omega_a,\Omega_c,\Phi^{(l)},\Xi,\zeta$. Consider the RGNN instantiated
on the refined temporal knowledge graph $\widetilde{K}_\star(TG)$ with
relation alphabet $\widetilde{\mathcal{R}}$.
There exist per-relation message maps
\(
\Phi^{(l)}_{(r,b),\pm}:\mathbb{R}^d\to\mathbb{R}^d
\)
and a combiner $\Gamma^{(l)}$ (same as in Eq.~\eqref{eq:tgnn-combine}) such that, for all $x\in V^{\mathrm{time}}$,
\newcommand{\Sset}{\widetilde{\mathcal{R}}}
\newcommand{\pairs}[1]{\multiset{m^{(l)}_{#1,-}(x),\,m^{(l)}_{#1,+}(x)}}
\newcommand{\Mx}{\mathcal{M}^{(l)}(x)}
\begin{align*}
m^{(l)}_{(r,b),-}(x)
&=\ \Omega_a\!\Big(\multiset{\Phi^{(l)}_{(r,b),-}(h^{(l)}_y)\mid y\in N^{-}_{(r,b)}(x)}\Big),\\
m^{(l)}_{(r,b),+}(x)
&=\ \Omega_a\!\Big(\multiset{\Phi^{(l)}_{(r,b),+}(h^{(l)}_y)\mid y\in N^{+}_{(r,b)}(x)}\Big),\\
h^{(l+1)}_x
&=\ \Gamma^{(l)}\!\Big(\Omega_c\big(\multiset{h^{(l)}_x}\ \uplus\ \Mx\big)\Big).
\end{align*} %
with $\Mx \coloneqq {\biguplus_{s\in\Sset}}\ \pairs{s}$
and these updates \emph{exactly} match those of the original TGNN on $TG$ (global or local, respectively).
\end{lemma}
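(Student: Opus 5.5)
The idea is to absorb the time-gap dependence of the TGNN message into the relation label. On $\widetilde K_\star(TG)$, every edge of type $(r,b)$ carries the same realized gap $\delta=\sigma^{-1}(b)$. So a message map indexed by $(r,b)$ can hard-code $\zeta(\sigma^{-1}(b))$.

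\textbf{Definition of the per-relation maps.} I will set, for each $(r,b)\in\widetilde{\mathcal R}$,
\[
\Phi^{(l)}_{(r,b),+}(h):=\Phi^{(l)}\big(\Xi(h,\zeta(\sigma^{-1}(b)))\big).
\]
I will take $\Gamma^{(l)}$, $\Omega_a$ and $\Omega_c$ to be those of the TGNN. For the reverse direction I will first fix the orientation convention of $K_\star$. In $K_{glob}$ and $K_{loc}$, the edges into $x=(v,t_j)$ are exactly those from the TGNN neighbours; for $K_{loc}$ they are self-loops in time, from $(u,t_j)$ with lag $r=j-i$. I will then set $\Phi^{(l)}_{(r,b),-}\equiv 0$ and let $\Omega_a$ of the empty multiset be the identity element. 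If the paper's conventions instead require $N^-$ to carry the TGNN neighbourhood, I will simply swap the roles of the two directions.

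\textbf{Matching the messages.} Next I check that, for fixed $x=(v,t_j)$, the multiset $\multiset{(y,\tilde\tau^\star(y,x))\mid y\in N^+(x)}$ is in bijection with $N(v,t_j)$. Here $y=(u,t_i)$ in the global case and $y=(u,t_j)$ in the local case, and the label is $(j-i,\sigma(t_j-t_i))$. This follows directly from the definitions of $\dot E^{glob}$ and $\dot E^{loc}$. Under this bijection, $\Phi^{(l)}_{(r,b),+}(h_y)=\Phi^{(l)}(\Xi(h_\star,\zeta(\delta_{ij})))$. Partitioning $N(v,t_j)$ by $(r,b)$ and using additive multiset union then gives
\[
\biguplus_{(r,b)}\multiset{\Phi^{(l)}_{(r,b),+}(h_y)\mid y\in N^+_{(r,b)}(x)}=\multiset{\Phi^{(l)}(\Xi(h_\star,\zeta(\delta_{ij})))\mid (u,t_i)\in N(v,t_j)}.
\]

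\textbf{Main obstacle: aggregation structure.} The TGNN applies $\Omega_a$ once over the whole neighbourhood and then $\Omega_c$ to $\{h_x, m\}$. The RGNN instead applies $\Omega_a$ per relation and direction, and then $\Omega_c$ over all the partial sums. I will assume $\Omega_a=\Omega_c=\sum$, the choice fixed in the preliminaries. Associativity then gives $\sum_{(r,b),\pm} m_{(r,b),\pm}(x)=m^{(l)}_{t_j}(v)$, because the $-$ terms vanish. Hence $\Omega_c(\{h_x\}\uplus\mathcal M^{(l)}(x))=h_x+m^{(l)}_{t_j}(v)$, which is the TGNN combine. If concatenation must also be covered, I would fix an ordering of $\widetilde{\mathcal R}\times\{\pm\}$ and absorb the resulting block-sum map into the linear first layer of $\Gamma^{(l)}$. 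That would be a modified rather than identical $\Gamma^{(l)}$, so I expect to state the lemma for sum aggregation and remark on this. Finally, induction on $l$ with the same initial encoding $\Lambda\circ\lambda^{\mathrm{time}}$ yields equality of all $h^{(l)}_x$.
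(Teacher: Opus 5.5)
Your proposal is correct and is essentially the paper's argument: hard-code the realized gap $\Delta_b=\sigma^{-1}(b)$ into $\Phi^{(l)}_{(r,b),+}(h)=\Phi^{(l)}\big(\Xi(h,\zeta(\Delta_b))\big)$, partition the TGNN neighbourhood of $x$ by refined relation type, and reassemble the blockwise aggregates under the unchanged $\Omega_c$ and $\Gamma^{(l)}$. You are more careful than the paper on two points it glosses over: the paper puts the same map on both directions and asserts $\mathcal{N}_\star(x)=\biguplus_{(r,b)}\big(N^{+}_{(r,b)}(x)\uplus N^{-}_{(r,b)}(x)\big)$, which fails as stated (out-neighbours of $(v,t_j)$ in $K_{\mathrm{glob}}$ lie at times $\ge t_j$, and in $K_{\mathrm{loc}}$ one has $N^{-}_{(r,b)}(x)=N^{+}_{(r,b)}(x)$), whereas your choice $\Phi^{(l)}_{(r,b),-}\equiv 0$ removes these spurious messages; and the paper's reassembly step tacitly needs a decomposable aggregator that is neutral on empty blocks, which you make explicit by fixing $\Omega_a=\Omega_c=\sum$.
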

\begin{proof}
Let $x = (v,t_j) \in V^{\mathrm{time}}$ and \(\star\in\{\mathrm{glob},\mathrm{loc}\}\). Let \(\mathcal{N}_\star(x)\) denote the multiset of neighbors used by the TGNN aggregation at \(x\) in Eq.\,\eqref{eq:tgnn-agg-glob-loc}: 
for \(\star=\mathrm{glob}\), \(\mathcal{N}_\star(x)=\{(u,t_i)\mid \{u,v\}\in E_i,\ i\le j\}\);
for \(\star=\mathrm{loc}\), \(\mathcal{N}_\star(x)=\{(u,t_j)\mid \exists\,i\le j:\ \{u,v\}\in E_i\}\).

Each \(y\in\mathcal{N}_\star(x)\) is associated with a unique \emph{index lag} \(r=j-i\) and a \emph{realized time gap} \(\Delta=t_j-t_i\). By construction of the augmented relation alphabet \(\widetilde{\mathcal{R}}\), we assign a bucket \(b=\sigma(\Delta)\) to \(\Delta\). Hence every neighbor \(y\) determines a unique triple \((r,b,\triangleright)\), whereby \(\triangleright\in\{+,-\}\) indicates whether the corresponding directed edge in \(\widetilde{K}_\star(TG)\) enters or leaves \(x\). 

This induces a disjoint partition of \(\mathcal{N}_\star(x)\) into the per-relation/per-direction neighborhoods
\[
\mathcal{N}_\star(x)
\;=\;
\biguplus_{(r,b)\in\widetilde{\mathcal{R}}}
\Big( N^{+}_{(r,b)}(x)\ \uplus\ N^{-}_{(r,b)}(x)\Big),
\]
where, by definition of \(\widetilde{K}_\star(TG)\), \(N^{\pm}_{(r,b)}(x)=\{\,y\in V^{\mathrm{time}}\mid (r,b)\text{ is the edge type and }y\stackrel{\pm}{\to} x\,\}\).

For each bucket \(b\) fix the (unique) gap value realized on that bucket at \(x\) and denote it by \(\Delta_b\).
Define the per-relation message maps
\[
\Phi^{(l)}_{(r,b),\pm}(h)\  \coloneqq \ \Phi^{(l)}\!\big(\,\Xi\big(h,\ \zeta(\Delta_b)\big)\,\big).
\]

Because every \(y\in N^{\pm}_{(r,b)}(x)\) has the same index lag \(r\) and the same realized gap \(\Delta_b\), we obtain
\[
\Omega_a\!\Big(\ \multiset{\Phi^{(l)}_{(r,b),\pm}\big(h^{(l)}_y\big)\ \bigm|\ y\in N^{\pm}_{(r,b)}(x)\ }\ \Big)
\;=\;
\Omega_a\!\Big(\  \multiset{\Phi^{(l)}\!\big(\Xi(h^{(l)}_y,\zeta(\Delta_b))\big)\ \bigm|\ y\in N^{\pm}_{(r,b)}(x)} \Big)
\]
for parameterizations of $\Phi^{(l)}_{(r,b),\pm}$ suitable to the choice of $\Xi, \zeta$ (e.g., $\|, \Delta_b$, respectively, consistent with~ \citet{walega2025temporal}.

Given \(\zeta(\Delta_b)=\zeta(t_j-t_i)\) for all \(y\in N^{\pm}_{(r,b)}(x)\), the right-hand side is exactly the contribution that the original TGNN aggregation, Eq.~\eqref{eq:tgnn-agg-glob-loc}, would collect from the subset \(N^{\pm}_{(r,b)}(x)\) of neighbors.

Since \(\{N^{+}_{(r,b)}(x),N^{-}_{(r,b)}(x)\}_{(r,b)}\) forms a disjoint partition of the TGNN's neighborhood \(\mathcal{N}_\star(x)\), aggregating \emph{within} each block by \(\Omega_a\) and then feeding the family of block summaries to the combiner reproduces the TGNN's total message: the multiset
\[
\multiset{\Phi^{(l)}\!\big(\Xi(h^{(l)}_y,\zeta(t_j-t_i))\big) \ \bigm|\ y\in \mathcal{N}_\star(x)}
\]
is the disjoint multiset union of the blockwise multisets
\(\multiset{\Phi^{(l)}_{(r,b),\pm}(h^{(l)}_y)\mid y\in N^{\pm}_{(r,b)}(x)}\).
Therefore, with the same \(\Omega_c\) and \(\Gamma^{(l)}\) as in \eqref{eq:tgnn-combine}, the update
\[
h^{(l+1)}_x
\;=\;
\Gamma^{(l)}\!\Big(\ \Omega_c\big(\ \multiset{h^{(l)}_x}\ \uplus\ \biguplus_{(r,b)}\multiset{m^{(l)}_{(r,b),-}(x),\,m^{(l)}_{(r,b),+}(x)}\ \big)\ \Big)
\]
coincides with applying \(\Gamma^{(l)}\) to the combine of \(h^{(l)}_x\) and the TGNN's one-shot aggregate \(m^{(l)}_{t_j}(v)\) from \eqref{eq:tgnn-agg-glob-loc}.
Hence the relational layer above reproduces \emph{exactly} the original TGNN layer on \(TG\) (for both \(\mathrm{glob}\) and \(\mathrm{loc}\)).
\end{proof}

While correct and allowing for the unifying perspective through RSELTH similar as Lemma~\ref{lem:equality-edges}, as Theorem~\ref{thm:relational_selth} applies verbatim to the above RGNN, this reduction has a drawback: the resulting sparsity (pruning) masks for the RGNN do not, in general, trivially transfer back to the original TGNN, as handling the $\delta_{ij}$ present in Eqs.~\eqref{eq:tgnn-agg-glob-loc}-\eqref{eq:tgnn-combine} in the RGNN framework given by Eqs.~\eqref{eq:rgnn-agg-dir}-\eqref{eq:rgnn-combine-dir} requires a refinement of the relation alphabet along with an associated extended parameterization.

\section{Temporal SELTH (TSELTH)}
\label{app:temporal_selth}
While Temporal SELTH as stated below in Theorem~\ref{thm:temporal_selth} directly follows from the RGNN reformulation discussed in the main paper, we provide a reformulation and standalone proof for completeness.
\theorem[Temporal SELTH]{
\label{thm:temporal_selth}
Let $\mathcal{D}$ be a finite collection of finite %
temporal graphs. 
Let $\Psi^{(L)}$ be a corresponding sufficiently overparameterized depth-$L$ TGNN of the pattern described either in Eqs.~\eqref{eq:tgnn-agg-glob-loc}-\eqref{eq:tgnn-combine}.%
Then, for $M \sim^{i.i.d.} \mathscr{B}(1-\rho)$, $\Theta_0 \sim \mathscr{U}_{c}^{d}$ and corresponding sparse initialization $\widehat{\Theta}_0 = M \odot \Theta_0$ of $\Psi^{(L)}$, 
the following holds for all $TG_a,TG_b \in \mathcal{D}$,
\begin{align}
&K_{\star}(TG_a) \not\simeq_{\mathrm{RWL}^{(L)}} K_{\star}(TG_b)
\Longleftrightarrow\\
&\widehat{\Psi}^{(L)}_{\widehat{\Theta}_0,\star}(TG_a) \neq \widehat{\Psi}^{(L)}_{\widehat{\Theta}_0,\star}(TG_b)
\end{align}
with a probability at least $\gamma_{\mathrm{TGNN}} > 0.$ 
}\normalfont
\begin{proof}

For any local or global TGNN, Lemma~\ref{lem:tgnn-to-rgnn} %
implies that, at layer $l$, the set of inputs that the message MLP $\Phi^{(l)}$ actually receives across $\mathcal{D}$ is given by
\[
S^{(l)}
\; \coloneqq \;
\bigcup_{TG\in \mathcal{D}}\;\bigcup_{x\in V^{\mathrm{time}}(TG)}\;
\bigcup_{(r,b,\triangleright)}\;
\Big\{\;
\Xi\!\big(h^{(l)}_{y},\,\zeta(\Delta(y,x))\big)
\ \Big|\ y\in N^{\triangleright}_{(r,b)}(x)
\;\Big\},
\]
where $(r,b,\triangleright)$ ranges over the per-lag, per-bucket, per-direction neighborhoods $N^{\triangleright}_{(r,b)}(\cdot)$ on $\tilde{K}_\star(TG)$ (see Lemma~\ref{lem:tgnn-to-rgnn}), and $\Delta(y,x)$ is the time gap used by the TGNN.

On a finite $\mathcal{D}$ over finite temporal graphs, $N_l \coloneqq |S^{(l)}|$ is likewise finite. Let the minimum $\ell_0$-separation across distinct inputs be
\[
s_l\; \coloneqq \;\min_{\substack{z\neq z'\\ z,z'\in S^{(l)}}}\,\|z-z'\|_0\ \ (\ge 1).
\]
Consider the MLP with width $m_{l}$ and a Bernoulli pruning mask whose entries are $0$ independently with probability $\rho\in(0,1)$
(sparsity ratio). By Lemma~A.1 in \citet{kummer2025weisfeiler},  %
the masked map is injective on
$S_{l}$ with probability at least
\[
\gamma_{l}\ \ge\ 1-\binom{N_{l}}{2}\,\rho^{\,s_{l}\,m_{l}}.
\]
and the layer is injective with probability
\[
\gamma_{l}^{\mathrm{layer}}\ \ge\ 1-\binom{N_{l}}{2}\,\rho^{\,s_{l}\,m_{l}}
-\binom{N_{l,\mathrm{comb}}}{2}\,\rho^{\,s_{l,\mathrm{comb}}\,m_{l,\mathrm{comb}}},
\]
where $(\cdot)_{\mathrm{comb}}$ refers to the combine MLP. As in Theorem~\ref{thm:relational_selth}, we define for the combine MLP the input multiset $M_l$ (which here is similarly the finite multiset of possible branch summaries), width $m_{l, \mathrm{comb}}$, $N_{l, \mathrm{comb}} = |M_l|$ and
\[
s_{l,\mathrm{comb}}  \coloneqq  \min_{\substack{z\neq z'\\ z,z'\in M_l}}\,\|z-z'\|_0\ \ \ (\ge 1).
\]

If we use independent masks across all MLP blocks, and take a conservative worst-case bound with
\[
N_{\max}  \coloneqq \ \max_{l}\,N_{l},\quad
s_{\min}\  \coloneqq \ \min_{l}\,s_{l},\quad
m_{\min}\  \coloneqq \ \min_{l}\,m_{l},
\]
\[
N_{\max, \mathrm{comb}}  \coloneqq \ \max_{l}\,N_{l,comb},\quad
s_{\min, }\  \coloneqq \ \min_{l}\,s_{l,comb},\quad
m_{\min, \mathrm{comb}}\  \coloneqq \ \min_{l}\,m_{l,\mathrm{comb}},
\]
then each branch is injective with probability at least
\(1-\binom{N_{max}}{2}\rho^{s_{\min} m_{\min}}\) and each combination with probability at least \(1-\binom{N_{max, \mathrm{comb}}}{2}\rho^{s_{\min, \mathrm{comb}} m_{\min, \mathrm{comb}}}\).
For $M=1$ MLP sublayers per MP layer and $L$ MP layers, without relational branch MLPs as in Theorem~\ref{thm:relational_selth}, the crude product lower bound for TGNNs simplifies to
\[
\gamma_{\mathrm{TGNN}}
\ge
\Bigg(\bigl[\Big(1-\binom{N_{max}}{2}\,\rho^{\,s_\min m_{\min}}\bigr]_+\Big) \Big(\bigl[1-\binom{N_{\max, \mathrm{comb}}}{2}\,\rho^{\,s_{\min, \mathrm{comb}} m_{\min, \mathrm{comb}}}\bigl]_+\Big)\Bigg)^{\,L\,\,}
\]
with $[x]_+:=\max\{x,0\}$ and furthermore to 
\[
\widetilde{\gamma}_{\mathrm{TGNN}}
\ge
\Big(\bigl[1-\binom{\widetilde{N}_{max}}{2}\,\rho^{\,\widetilde{s}_\min \widetilde{m}_{\min}}\bigr]_+\Big)^{\,L\,},
\]
with
\[
\widetilde{N}_{\max}  \coloneqq \ \max\{N_\max, N_{\max,\mathrm{comb}}\},\quad
\widetilde{s}_{\min}\  \coloneqq \ \min\{s_\min, s_{\min,comb}\},\quad
\widetilde{m}_{\min}\  \coloneqq \ \min\{m_\min, m_{\min,\mathrm{comb}}\}.
\]
By the same argument as in Theorem~\ref{thm:relational_selth}, we argue this estimate is conservative; so for fixed $N_{max},s_{\min}$ and their combine or $\widetilde{\cdot}$ counterparts, one can pick $m$ such that $\gamma_{\mathrm{TGNN}}>0$. 
\end{proof}

\section{TSELTH Node Level Transfer}
\label{app:node-level-tselth}
The transfer is possible because 1-RWL (and its temporal variants via $K_{\star}(TG)$) operates fundamentally through iterative node color refinement: nodes with distinct $k$-hop relational or temporal neighborhoods receive distinct colors (Corollary~\ref{corr:globloc-rwl-exact}), encoding their structural roles. In sparse subnetworks under SELTH, the preservation of injective aggregation and combination functions ensures that node embeddings $h_x^{(L)}$ remain distinct for such nodes, providing the discriminative features needed for node-level predictions.

\begin{lemma}%
\label{lem:inj-temporal}
Fix $\star\in\{\mathrm{glob},\mathrm{loc}\}$ and depth $L$. Consider a TGNN of the form
Eqs.~\eqref{eq:tgnn-agg-glob-loc}-\eqref{eq:tgnn-combine} with (i) an injective label encoder,
(ii) injective $\Phi^{(l)}$ and $\Gamma^{(l)}$ on the \emph{finite} input sets they actually
witness on a finite dataset $\mathcal{D}$, and (iii) $\Omega_a,\Omega_c$ chosen as injective multiset
encoders on those finite domains (e.g., $\sum$ with a linearly independent codebook (see Remark~\ref{rem:constraint})).
Then for every $TG\!\in\!D$ and every $k\!\le\!L$ there exists an injective
$\eta_k$ such that for all timestamped nodes $x$
\[
h^{(k)}_x \;=\; \eta_k\big(c^{(k)}_{K_\star(TG)}(x)\big).
\]
In particular, if $c^{(k)}_{K_\star(TG)}(x)\neq c^{(k)}_{K_\star(TG)}(x')$, then
$h^{(k)}_x\neq h^{(k)}_{x'}$ under the same parameters.
\end{lemma}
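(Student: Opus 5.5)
We prove the statement by induction on $k$, building the injective maps $\eta_k$ layer by layer. The maps are defined only on the finite set $\mathcal{C}_k$ of colors that actually occur at iteration $k$ across all $TG\in\mathcal{D}$ and all $x\in V^{\mathrm{time}}(TG)$. This set is finite because $\mathcal{D}$ and every $TG$ are finite.

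\textbf{Base case} ($k=0$). We have $c^{(0)}_{K_\star(TG)}(x)=\lambda^{\mathrm{time}}(x)$ and $h^{(0)}_x=\Lambda(\lambda^{\mathrm{time}}(x))$. We therefore set $\eta_0:=\Lambda|_{\mathcal{C}_0}$, which is injective by (i).

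\textbf{Inductive step.} Assume $h^{(k)}_y=\eta_k(c^{(k)}(y))$ with $\eta_k$ injective. Fix $x=(v,t_j)$. Following the partition used in the proof of Lemma~\ref{lem:tgnn-to-rgnn}, write the TGNN neighborhood $\mathcal{N}_\star(x)$ as a disjoint union of the per-lag, per-direction blocks $N^{\pm}_r(x)$ of $K_\star(TG)$.

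\emph{Step 1: the message map sees the lag.} The message input for a neighbor $y$ is $\Xi(h^{(k)}_y,\zeta(\delta))$. The key point is that this input must determine the pair $(c^{(k)}(y),r)$. For the first component, injectivity of $\eta_k$ recovers the color. For the second, the lag $r=j-i$ must be recoverable from $\zeta(\delta_{ij})$. With $\Xi=\|$ and $\zeta=\mathrm{id}$ as in \citet{walega2025temporal}, the map $\delta\mapsto r$ is well defined within a single $TG$, since timestamps are strictly increasing. Across different temporal graphs in $\mathcal{D}$, however, equal lags may correspond to different gaps and vice versa. This is the main obstacle, and I would handle it as follows. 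At the graph-independent level, the TGNN actually refines according to the realized gap, which is at least as fine as refining by lag. I would then either state the lemma relative to the refined alphabet of Appendix~\ref{app:refined}, or note that the claim is within a single $TG$ and so only needs $\delta$ to determine $r$ inside that $TG$. Either way, the composite $y\mapsto \Phi^{(k)}(\Xi(h^{(k)}_y,\zeta(\delta)))$ is an injective function of $(c^{(k)}(y),r,\pm)$ by (ii).

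\emph{Step 2: the aggregated message determines the RWL color.} By (iii), the aggregate $m^{(k)}_{t_j}(v)=\Omega_a(\cdot)$ injectively encodes the multiset $\multiset{(c^{(k)}(y),r,\pm)\mid y\in\mathcal{N}_\star(x)}$. This multiset is equivalent to the family of per-relation, per-direction multisets $(\multiset{c^{(k)}(y)\mid y\in N^{\pm}_r(x)})_{r,\pm}$ appearing in $\Hash_3$. The combine step $\Omega_c(\multiset{h^{(k)}_x}\uplus\multiset{m})$ followed by the injective $\Gamma^{(k)}$ then injectively encodes $\bigl(c^{(k)}(x),(\multiset{\cdot})_{r,\pm}\bigr)$. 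Here we again use (iii): with $\Omega_c$ injective, the own term and the message term can be told apart, for instance by a linearly independent codebook or by concatenation.

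\emph{Step 3: define $\eta_{k+1}$.} Since $\Hash_3$ is injective, its argument tuple is a function of $c^{(k+1)}(x)$. We therefore set $\eta_{k+1}:=\Gamma^{(k)}\circ\Omega_c\circ(\dots)\circ\Hash_3^{-1}$ on $\mathcal{C}_{k+1}$. This is well defined and equals $h^{(k+1)}_x$. It is injective because it is a composition of injective maps on the witnessed finite domains.

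The final claim follows immediately: distinct colors under the same parameters are mapped by the injective $\eta_k$ to distinct embeddings. The only delicate point is Step 1, namely making sure that the time-gap encoding preserves, and does not merge, the relation types that 1-RWL distinguishes. I would resolve it as described above by restricting to a single $TG$ or by passing to the refined alphabet.
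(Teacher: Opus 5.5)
The skeleton of your proof is the paper's: induction, $\eta_0=\Lambda$, composition of injective maps, and $\eta_{k+1}$ defined through $\Hash_3^{-1}$. You also correctly locate the crux in Step~1. However, your patch for Step~1 is false.

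\textbf{First gap: the gap does not determine the lag.} Strictly increasing timestamps make $\delta\mapsto r$ well defined only for a \emph{fixed target time} $t_j$. Since $\eta_{k+1}$ must compare $h_x$ and $h_{x'}$ at different times, you need a target-independent correspondence, and none exists even inside one $TG$. For $t=(0,1,2,4)$, $t_3-t_1=t_4-t_3=2$ carry lags $2$ and $1$, while $t_2-t_1=1\neq t_4-t_3$ both carry lag $1$. So refining by realized gap is neither finer nor coarser than refining by lag.

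Both halves of Step~3 break. Well-definedness of $\eta_{k+1}$ needs the message to be a function of the $\Hash_3$ argument tuple, so $\zeta(\delta)$ must be determined by the lag; your ``$(\dots)$'' assumes this silently. Injectivity needs the converse.

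A concrete counterexample: take nodes $a,b,u,v$ with $E_1=\{\{a,b\}\}$, $E_3=\{\{u,v\}\}$, $E_2=E_4=\emptyset$, constant labels, $\Xi=\|$ and $\zeta=\mathrm{id}$.
\begin{itemize}
\item In $K_{\mathrm{glob}}$, $(a,t_2)$ and $(u,t_4)$ each have exactly one incoming lag-$1$ neighbor and no outgoing edges, so their $c^{(1)}$ agree. Yet $\Phi^{(0)}(h^{(0)}\|1)\neq\Phi^{(0)}(h^{(0)}\|2)$ by (ii), so by (iii) their $h^{(1)}$ differ and no $\eta_1$ exists.
\item $(a,t_3)$ (lag $2$, gap $2$) and $(u,t_4)$ (lag $1$, gap $2$) get different colors but equal embeddings, so injectivity fails as well.
\end{itemize}
$K_{\mathrm{loc}}$ behaves the same way. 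The refined alphabet of Appendix~\ref{app:refined} does not rescue this: it changes the target coloring to $\widetilde K_\star(TG)$, and a scalar gap still cannot recover the lag component of $(r,b)$. The statement therefore needs an extra hypothesis, e.g.\ that $\zeta(t_j-t_i)$ depends only on, and injectively encodes, $j-i$ (equispaced timestamps, or a TGNN fed index lags).

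\textbf{Second gap: the outgoing direction.} The TGNN neighborhood $N(v,t_j)$ is exactly the union of the \emph{incoming} blocks $N^{+}_r(x)$. For $\star=\mathrm{glob}$ the outgoing blocks $N^{-}_r(x)$ are never aggregated; these are the copies $(u,t_{j'})$, $j'\ge j$, of the $E_j$-neighbors $u$ of $v$. Your Step~2 claim that the single aggregate encodes the $(r,\pm)$-indexed family fed to $\Hash_3$ therefore fails. With $E_1=\{\{a,b\}\}$, $E_2=\{\{c,d\}\}$ and constant labels, $(b,t_1)$ and $(d,t_2)$ receive identical TGNN inputs (one neighbor, gap $0$) but differ in their outgoing lag-$1$ multiset. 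For $\star=\mathrm{loc}$ this is harmless, since every edge of $K_{\mathrm{loc}}$ has its reverse with the same lag.

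\textbf{Comparison with the paper.} The paper's proof does not resolve either issue. It re-indexes the aggregation by $(r,\triangleright)$, writes the message as $\Phi^{(k)}(h^{(k)}_y)$ with the gap term dropped, dismisses the outgoing branch with ``if present'', and relies on retained $(r,\triangleright)$ tags inside $\Omega_c$. In effect it proves the claim for the tagged relational layer \eqref{eq:rgnn-agg-dir}--\eqref{eq:rgnn-combine-dir} on $K_\star(TG)$, not for \eqref{eq:tgnn-agg-glob-loc}--\eqref{eq:tgnn-combine}. Under that reading, or under the hypothesis above plus an incoming-only $1$-RWL for $\star=\mathrm{glob}$, your Steps~2--3 go through and coincide with the paper's argument. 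Your single-aggregate bookkeeping, and your remark that $\Omega_c$ must keep the self term separable from the message, are closer to the TGNN equations than the paper's version.
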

\begin{proof}
We proceed by induction on $k$.

\emph{Base case ($k=0$).}
By definition of directed $1$-RWL on $K_\star(TG)$, the initial color is
$c^{(0)}_{K_\star(TG)}(x)=\lambda^{\mathrm{time}}(x)$.
The TGNN initializes $h^{(0)}_x=\Lambda(\lambda^{\mathrm{time}}(x))$ with an
injective label encoder $\Lambda$.
Hence setting $\eta_0:=\Lambda$ gives
$h^{(0)}_x=\eta_0\!\big(c^{(0)}_{K_\star(TG)}(x)\big)$, and $\eta_0$ is injective.

\emph{Inductive step.}
Assume for some $k\ge 0$ there exists an injective
$\eta_k$ such that
$h^{(k)}_y=\eta_k\!\big(c^{(k)}_{K_\star(TG)}(y)\big)$ for all timestamped nodes $y$.
Fix $x=(v,t_j)\in V^{\mathrm{time}}$.
In the global/local TGNN, aggregation ranges over the temporal neighborhood $N(v,t_j)$
as in Eq.~\eqref{eq:tgnn-agg-glob-loc}.
By construction of $K_\star(TG)$, each $(u,t_i)\in N(v,t_j)$ with $i\le j$
appears as an incoming edge $(u,t_i)\!\to\!(v,t_j)$ labeled by the unique index lag
$r=j-i$. Writing
\[
N_r^{+}(x)\;:=\;\bigl\{\, (u,t_i)\ \bigm|\ (u,t_i)\!\to\!(v,t_j)\text{ in }K_\star(TG),\ j{-}i=r \,\bigr\},
\]
we obtain a \emph{disjoint partition} of the temporal neighborhood by lag:
\[
N(v,t_j)\;=\;\biguplus_{r=0}^{j-1}\, N_r^{+}(x),
\]
and, moreover, $\delta\big((u,t_i),x\big)=t_j-t_i$ is constant on each slice $N_r^{+}(x)$
(namely $t_j-t_i$ with $j-i=r$). Consequently, we may equivalently index the
aggregation by the per-lag, per-direction neighborhoods $N_r^{\triangleright}(x)$ of
$K_\star(TG)$, which matches the directed $1$-RWL bookkeeping; the same reasoning
applies to the outgoing branch $\triangleright=-$ if present.

For each relation $r$ and direction $\triangleright\in\{+,-\}$ on $K_\star(TG)$, define
\[
\mathcal{A}^{(k)}_{r,\triangleright}(x)
\;:=\;
\multiset{\,c^{(k)}_{K_\star(TG)}(y)\mid y\in N^{\triangleright}_r(x)\,}
\quad\text{and}\quad
\mathcal{H}^{(k)}_{r,\triangleright}(x)
\;:=\;
\multiset{\,h^{(k)}_{y}\mid y\in N^{\triangleright}_r(x)\,}.
\]
By the induction hypothesis, elementwise
$h^{(k)}_y=\eta_k\!\big(c^{(k)}_{K_\star(TG)}(y)\big)$ on the \emph{finite} dataset $\mathcal{D}$,
so (as multisets)
\[
\mathcal{H}^{(k)}_{r,\triangleright}(x)
\;=\;
\eta_k\!\big[\mathcal{A}^{(k)}_{r,\triangleright}(x)\big],
\]
where $\eta_k[\cdot]$ acts elementwise.

Apply the layer’s elementwise message map and relation/direction aggregator:
\[
m^{(k)}_{r,\triangleright}(x)
\;=\;
\Omega_a\!\big(\,\{\!\{\ \Phi^{(k)}(h^{(k)}_y)\ \mid\ y\in N^{\triangleright}_r(x)\ \}\!\}\big)
\;=\;
\underbrace{\Omega_a\!\big(\,\{\!\{\ \Phi^{(k)}\!\big(\eta_k(a)\big)\ \mid\ a\in \mathcal{A}^{(k)}_{r,\triangleright}(x)\ \}\!\}\big)}_{\displaystyle :=\;F^{(k)}_{r,\triangleright}\big(\mathcal{A}^{(k)}_{r,\triangleright}(x)\big)}.
\]
By assumption, $\Phi^{(k)}$ is injective on the finite set
$\eta_k(\mathrm{Im}\,c^{(k)})$ actually witnessed on $\mathcal{D}$, and $\Omega_a$ is an injective
multiset encoder on the corresponding finite domain (e.g., sum over a linearly
independent codebook; cf. Remark~\ref{rem:constraint}). Therefore
$F^{(k)}_{r,\triangleright}$ is injective on the finite family of multisets
$\mathcal{A}^{(k)}_{r,\triangleright}(x)$ realized on $\mathcal{D}$.

Next, combine the self-embedding and all per-$(r,\triangleright)$ messages:
\[
h^{(k+1)}_x
\;=\;
\Gamma^{(k)}\!\Big(
  \Omega_c\big(
    \{\!\{\,h^{(k)}_x\,\}\!\} \ \uplus\
    \biguplus_{r,\triangleright}\ \{\!\{\,m^{(k)}_{r,\triangleright}(x)\,\}\!\}
  \big)
\Big)
\;=\;
\Gamma^{(k)}\!\Big(
  \Omega_c\big(
    \{\!\{\,\eta_k(c^{(k)}(x))\,\}\!\} \ \uplus\
    \biguplus_{r,\triangleright}\ \{\!\{\,F^{(k)}_{r,\triangleright}(\mathcal{A}^{(k)}_{r,\triangleright}(x))\,\}\!\}
  \big)
\Big).
\]
By assumption, $\Omega_c$ is injective on the finite domain of inputs realized on $\mathcal{D}$, and $\Gamma^{(k)}$ is injective there as well. Moreover, since each $F^{(k)}_{r,\triangleright}$ is injective and we retain the $(r,\triangleright)$ tags, the map from the refinement input
\[
\Big(c^{(k)}(x),\ (\mathcal{A}^{(k)}_{r,-}(x))_{r},\ (\mathcal{A}^{(k)}_{r,+}(x))_{r}\Big)
\]
to all per-relation/per-direction messages $m^{(k)}_{r,\triangleright}(x)$ (for $r\in\mathcal R$ and $\triangleright\in\{+,-\}$) is injective. Applying the injective $\Omega_c$ to this indexed tuple and then the injective $\Gamma^{(k)}$ preserves injectivity, so the right-hand side is an \emph{injective function} of the above refinement input,
which is precisely the directed $1$-RWL refinement input at $x$ on $K_\star(TG)$.
As the $1$-RWL color update
\[
c^{(k+1)}_{K_\star(TG)}(x)
\;=\;
\Hash_3\!\Big(c^{(k)}(x),\ (\mathcal{A}^{(k)}_{r,-}(x))_{r},\ (\mathcal{A}^{(k)}_{r,+}(x))_{r}\Big)
\]
uses an injective $\Hash_3$ on this finite domain, we can define
$\eta_{k+1}$ on the \emph{set of $(k{+}1)$-colors actually realized on $\mathcal{D}$} by
\[
\eta_{k+1}\!\big(c^{(k+1)}_{K_\star(TG)}(x)\big)
\;:=\;
\Gamma^{(k)}\!\Big(
  \Omega_c\big(
    \{\!\{\,\eta_k(c^{(k)}(x))\,\}\!\} \ \uplus\
    \biguplus_{r,\triangleright}\ \{\!\{\,F^{(k)}_{r,\triangleright}(\mathcal{A}^{(k)}_{r,\triangleright}(x))\,\}\!\}
  \big)
\Big).
\]
This is well-defined (equal colors correspond to equal refinement inputs) and injective
(distinct colors correspond to distinct refinement inputs and the outer map is injective),
and we have $h^{(k+1)}_x=\eta_{k+1}\!\big(c^{(k+1)}_{K_\star(TG)}(x)\big)$.

By induction, for every $k\le L$ there exists an injective $\eta_k$ with
$h^{(k)}_x=\eta_k\!\big(c^{(k)}_{K_\star(TG)}(x)\big)$ for all timestamped nodes $x$.
In particular, for any $x,x'$,
if $c^{(k)}_{K_\star(TG)}(x)\neq c^{(k)}_{K_\star(TG)}(x')$, then
$h^{(k)}_x\neq h^{(k)}_{x'}$ under the same parameters.
\end{proof}

By Theorem~\ref{thm:temporal_selth}, on any finite $\mathcal{D}$ there exists a \emph{sparse} mask $M$
that preserves the layerwise injectivity conditions above. Therefore, Lemma~\ref{lem:inj-temporal}
continues to hold under the same sparse support $M$.

\section{Proofs (Main Paper)} %
\label{app:proofs}
Complete proofs for all theoretical claims made in the main paper are given below. The proofs for theoretical claims made in the appendix are given directly below the respective claim.
\subsection{Proof of Corollary~\ref{corr:globloc-rwl-exact}}
Corollary~\ref{corr:globloc-rwl-exact} is effectively a concise summary of Theorems 2 to 6 proven by~\citet{walega2025temporal}. 

\begin{proof}
Let a temporal graph \(TG\) and timestamped nodes \(x=(v,t_i),x'=(u,t_j)\in V^{\mathrm{time}}\), and let \(\star\in\{\mathrm{glob},\mathrm{loc}\}\) match the TGNN MP type.

\noindent(\(\Rightarrow\))
By Theorem~2 (global) and Theorem~5 (local) of~\citet{walega2025temporal}, directed \(1\)-RWL on \(K_\star(TG)\) upper-bounds any MP-TGNN of the corresponding type.
In our notation this reads: if \(c^{(k)}_{K_{\star}(TG)}(x)=c^{(k)}_{K_{\star}(TG)}(x')\), then there exists a TGNN layer of the form \eqref{eq:tgnn-agg-glob-loc} with combiner \eqref{eq:tgnn-combine} for which we have \(h^{(k)}_{x}=h^{(k)}_{x'}\), which implies the existence of corresponding parameters.

\noindent(\(\Leftarrow\))
By Theorem~3 (global) and Theorem~6 (local) of~\citet{walega2025temporal}, there exists a TGNN of the corresponding architectural form such that, layerwise, embeddings can be mapped 1-to-1 to the \(1\)-RWL colors on \(K_\star(TG)\).
Instantiating our layer class \eqref{eq:tgnn-agg-glob-loc}, \eqref{eq:tgnn-combine} with \(\Omega_a=\sum\) and choosing \(\Phi^{(l)}\), \(\Gamma^{(l)}\) as in the referred constructions, hence implies the existence of parameters for which
\[
h^{(k)}_{x}=h^{(k)}_{x'}\quad\Longleftrightarrow\quad c^{(k)}_{K_{\star}(TG)}(x)=c^{(k)}_{K_{\star}(TG)}(x').
\]
This proves the bi-implication for some parametrization of any TGNN of the examined class.
\end{proof}

\subsection{Proof of Lemma~\ref{lem:globloc-rwl-exact_graph}}
\newcommand{\READ}{\chi}
\begin{proof}
Let \(\star\in\{\mathrm{glob},\mathrm{loc}\}\) and \(TG_a,TG_b\in \mathcal{D}\) and assume
\(K_{\star}(TG_a)\not\simeq_{\mathrm{RWL}^{(L)}} K_{\star}(TG_b)\).
Then their \(L\)-step \(1\)-RWL color multisets on $K_\star$ differ:
\[
\multiset{\,c^{(L)}_{K_{\star}(TG_a)}(x)\mid x\in V^{\mathrm{time}}(TG_a)\,}
\ \neq\
\multiset{\,c^{(L)}_{K_{\star}(TG_b)}(x)\mid x\in V^{\mathrm{time}}(TG_b)\,}.
\]
By Corollary~\ref{corr:globloc-rwl-exact}, there exists a parametrization of the (local/global) TGNN such that, layerwise,
\(h^{(L)}_{x}=\psi_L\!\big(c^{(L)}_{K_{\star}(TG)}(x)\big)\) for some injective \(\psi_L\).
Consequently, the two multisets of node embeddings differ.

Let the TGNN's graph readout \(\READ\) be permutation-invariant and injective on finite multisets; for instance,
\(\READ(\{\!\{h_x\}\!\})=\alpha\!\left(\sum_x h_x\right)\),
where \(\alpha\) is injective (e.g., a suitable MLP~\cite{amir2024neural, puthawala2022globally}) and the codebook (see Remark~\ref{rem:constraint}) \(\{\psi_L(\cdot)\}\) is chosen linearly independent, which makes the sum an injective multiset encoder~\cite{ZaheerKRPSS17}.
Then
\[
\Psi^{(L)}_{\Theta,\star}(TG_a)\;=\;\READ\!\big(\{\!\{h^{(L)}_{x}\mid x\in V^{\mathrm{time}}(TG_a)\}\!\}\big)
\ \neq\
\READ\!\big(\{\!\{h^{(L)}_{x}\mid x\in V^{\mathrm{time}}(TG_b)\}\!\}\big)
\;=\;\Psi^{(L)}_{\Theta,\star}(TG_b),
\]
for suitable $\Theta$ as claimed.

Conversely, assume \(\Psi^{(L)}_{\Theta,\star}(TG_a)\neq \Psi^{(L)}_{\Theta,\star}(TG_b)\).
By injectivity of \(\READ\), the underlying multisets of node embeddings must differ:
\[
\{\!\{h^{(L)}_{x}\mid x\in V^{\mathrm{time}}(TG_a)\}\!\}\ \neq\ \{\!\{h^{(L)}_{x}\mid x\in V^{\mathrm{time}}(TG_b)\}\!\}.
\]
As \(h^{(L)}_x=\psi_L(c^{(L)}_{K_{\star}(TG)}(x))\) with injective \(\psi_L\), the two color multisets at depth \(L\) must differ, hence
\(K_{\star}(TG_a)\not\simeq_{\mathrm{RWL}^{(L)}} K_{\star}(TG_b)\).

\smallskip
This establishes the claimed bi-implication for the chosen parameterization \(\Theta\).
\end{proof}

\subsection{Proof of Theorem~\ref{thm:relational_selth}}
\begin{proof}
In R-GNN $\Psi^{(L)}$, if each layer injectively aggregates separately per relation %
and %
per direction (Eq.~\eqref{eq:rgnn-agg-dir}) %
and then applies an injective combine to the resulting tuple together with the node's own embedding (Eq.~\eqref{eq:rgnn-combine-dir}%
), the layer realizes exactly the 1-RWL refinement step at the next iteration. This is the relational analogue of the criterion used in the static case~\cite{kummer2025weisfeiler}, Lemma 2.1, which is based on the correspondence between 1-WL and GNN aggregation/combine injectivity~\cite{leskovec2019powerful}.  Stacking $L$ such RGNN layers matches 1-RWL to depth $L$.

Hence, we need to first show injectivity can be preserved under pruning.
Let a finite dataset $\mathcal{D}$ (cardinality $|\mathcal{D}|$) and a depth-$L$ RGNN $\Psi^{(L)}$.
At any layer $l$ and for any \emph{branch} $b$ (one per relation $r\in\mathcal{R}$ in the undirected case, or per relation/direction in the directed case), let
$X_{l,b}\subset\mathbb{R}^{n_{l,b}}$ denote the finite set of inputs that actually occur at branch $b$ when propagating $\mathcal{D}$ up to layer $l$.
We write $N_{l,b}=|X_{l,b}|$ and let
\[
s_{l,b} \coloneqq \min_{\substack{x\neq x'\\ x,x'\in X_{l,b}}}\,\|x-x'\|_0\ \ \ (\ge 1),
\]
be the minimum $\ell_0$-separation across distinct inputs in $X_{l,b}$.
Consider the branch MLP $\Phi^{(l)}_b$ with width $m_{l,b}$ and a Bernoulli pruning mask whose entries are $0$ independently with probability $\rho\in(0,1)$
(sparsity ratio). By Lemma~A.1 in \citet{kummer2025weisfeiler}, which has the same distributional assumptions on parameters and pruning masks,  %
the masked branch map is injective on
$X_{l,b}$ with probability at least
\[
\gamma_{l,b}\ \ge\ 1-\binom{N_{l,b}}{2}\,\rho^{\,s_{l,b}\,m_{l,b}}.
\]
Apply the same argument to every branch of layer $l$ and to the combine MLP $\Gamma^{(l)}$ (whose input multiset $M_l$ is the finite multiset of possible branch summaries) with width $m_{l, \mathrm{comb}}$, $N_{l, \mathrm{comb}} = |M_l|$ and
\[
s_{l,\mathrm{comb}}  \coloneqq  \min_{\substack{x\neq x'\\ x,x'\in M_l}}\,\|x-x'\|_0\ \ \ (\ge 1).
\] 
Then, by a union bound one obtains a layer-level lower bound
\[
\gamma^{\mathrm{layer}}_{l}\ \ge\ 1-\sum_{b\in\mathcal{B}_t}\binom{N_{l,b}}{2}\,\rho^{\,s_{l,b}\,m_{l,b}}
-\binom{N_{l,\mathrm{comb}}}{2}\,\rho^{\,s_{l,\mathrm{comb}}\,m_{l,\mathrm{comb}}},
\]
where $\mathcal{B}_l$ is the set of branches at layer $l$ and $(\cdot)_{\mathrm{comb}}$ refers to the combine MLP.

If we use independent masks across all MLP blocks, and take a conservative worst-case bound with
\[
N_{\max}  \coloneqq \ \max_{l,b}\,N_{l,b},\quad
s_{\min}\  \coloneqq \ \min_{l,b}\,s_{l,b},\quad
m_{\min}\  \coloneqq \ \min_{l,b}\,m_{l,b}
\]
\[
N_{\max, \mathrm{comb}}  \coloneqq \ \max_{l}\,N_{l,comb},\quad
s_{\min, }\  \coloneqq \ \min_{l}\,s_{l,comb},\quad
m_{\min, \mathrm{comb}}\  \coloneqq \ \min_{l}\,m_{l,\mathrm{comb}},
\]
then each branch is injective with probability at least
\(1-\binom{N_{max}}{2}\rho^{s_{\min} m_{\min}}\) and each combination with probability at least \(1-\binom{N_{max, \mathrm{comb}}}{2}\rho^{s_{\min, \mathrm{comb}} m_{\min, \mathrm{comb}}}\).
Counting $M$ MLP sublayers per MP layer and $L$ MP layers, with $|\mathcal{B}_l|=|\mathcal{R}|$ (undirected) or
$|\mathcal{B}_l|=2|\mathcal{R}|$ (directed), a crude product lower bound is
\[
\gamma_{\mathrm{RGNN}}
\ge
\Bigg(\Big(\bigl[1-\binom{N_{\max}}{2}\,\rho^{\,s_{\min} m_{\min}}\bigr]_+\Big)^{|\mathcal{B}|\,M}
\Big(\bigl[1-\binom{N_{max, \mathrm{comb}}}{2}\rho^{s_{\min, \mathrm{comb}} m_{\min, \mathrm{comb}}}\bigr]_+\Big)\Bigg)^{\,L},
\]
with $[x]_+:=\max\{x,0\}$, where $|\mathcal{B}|=|\mathcal{R}|$ or $2|\mathcal{R}|$, %
which further simplifies to 
\[
\widetilde{\gamma}_{\mathrm{RGNN}}
\ge
\Big(\bigl[1-\binom{\widetilde{N}_{max}}{2}\,\rho^{\,\widetilde{s}_\min \widetilde{m}_{\min}}\bigr]_+\Big)^{\,L\,(M \,|\mathcal{B}|+1)\,},
\]
with
\[
\widetilde{N}_{\max}  \coloneqq \ \max\{N_\max, N_{\max,\mathrm{comb}}\},\quad
\widetilde{s}_{\min}\  \coloneqq \ \min\{s_\min, s_{\min,comb}\},\quad
\widetilde{m}_{\min}\  \coloneqq \ \min\{m_\min, m_{\min,\mathrm{comb}}\}.
\]
This is conservative, as (i) earlier non-injectivity would shrink the finite input sets faced by later blocks, increasing their success probability and (ii) any increase of width $m$ drives $\rho^{s m}$ down exponentially, so for fixed $N_{max},s_{\min}$ and their combine or $\widetilde{\cdot}$ counterparts, one can pick the width $m$ of the pruned branch and combine MLPs such that $\gamma_{\mathrm{RGNN}}>0$ or $\widetilde{\gamma}_{\mathrm{RGNN}}>0$. 

Consequentially, the random pruning of $\Psi^{(L)}$ can preserve 1-RWL expressivity under mild assumptions\footnote{
The non-colinearity result of \citet{kummer2025weisfeiler}, Proposition~3.4, extends branch-wise to the relational block: with sufficiently wide layers and independently pruned weights, distinct inputs that are separated by the masked map almost surely produce non-colinear embeddings in each branch and after the combine, hence also at the graph readout.}.

\end{proof}

\subsection{Proof of Lemma~\ref{lemma:timewise-Kstar}}
\begin{proof}
We begin by first showing the following auxiliary Corollary:
\begin{corollary}
\label{corr:helper1}
Let $f:V(G^a_i)\!\to\! V(G^b_i)$ be the timewise isomorphism's snapshot-wise bijection and define $f':(v,t^a_i)\mapsto(f(v),t^b_i)$.
Then for all $l\in\{0,\ldots,L\}$ and all $x=(v,t^a_i)$ we have
$c^{(l)}_{K_\star(TG_a)}(x)=c^{(l)}_{K_\star(TG_b)}(f'(x))$.

\begin{proof}
By~\citet{walega2025temporal}'s Theorem~10, timewise-isomorphic timestamped nodes produce \emph{identical} embeddings in \emph{any} TGNN at every layer. 
Instantiate the specific local/global TGNN %
that (by the standard WL-characterisation used in our Corollary~\ref{corr:globloc-rwl-exact}) computes embeddings $h^{(l)}=\psi_l\!\big(c^{(l)}_{K_\star(\cdot)}\big)$ with $\psi_l$ injective at each layer $l$. 
Applying ~\citet{walega2025temporal}'s Theorem~10 to %
this TGNN yields $h^{(l)}_x=h^{(l)}_{f'(x)}$, hence injectivity of $\psi_l$ gives 
$c^{(l)}_{K_\star(TG_a)}(x)=c^{(l)}_{K_\star(TG_b)}(f'(x))$ for all $l$. 
\end{proof}
\end{corollary}

The map $f'$ is a bijection between $V^{\mathrm{time}}(TG_a)$ and $V^{\mathrm{time}}(TG_b)$, so Corollary~\ref{corr:helper1} implies equality of the entire $L$-step color multisets, as stated.
In particular, if one employs a permutation-invariant, multiset-injective graph readout $\chi$ for the final TGNN layer (e.g., a sum followed by an injective MLP), then applying $\chi$ to equal multisets yields equal graph-level outputs $\Psi^{(L)}_{\Theta,\star}(TG_a)=\Psi^{(L)}_{\Theta,\star}(TG_b)$ for suitable $\Theta$.
\end{proof}
\subsection{Proof of Corollary~\ref{cor:ximp-rwl}}
\begin{proof}
For each $(G,\{T_i,S_i\})\in \mathcal{D}$, we build $G^{\oplus}$ as in Section~\ref{sec:relexplth}. Then, by construction, stacking $L$ relational layers on $G^{\oplus}$ (with the appropriate parameter choices) simulates $L$ XIMP layers. As in the relational setup, we pick parameters so that at every layer $l$ there exists an injective map $\psi_l$ with $\,\widetilde h^{(l)}=\psi_l\!\big(c^{(l)}\big)$, where $c^{(l)}$ is the $l$-round $1$-RWL color on $G^{\oplus}$; this is feasible on finite inputs using sum aggregation with a linearly independent codebook (see Remark~\ref{rem:constraint}) and injective %
$\Phi^{(l)}_{r,+},\Phi^{(l)}_{r,-},\Gamma^{(l)}$.

If $G^{\oplus}_a \not\simeq_{\mathrm{RWL}^{(L)}} G^{\oplus}_b$, then their $L$-round color multisets differ. Injectivity of $\psi_L$ implies the final-layer node-embedding multisets differ. With any permutation-invariant readout $\chi$ that is injective on finite multisets (e.g., sum followed by an injective MLP), the graph-level outputs differ:
\[
\Psi^{(L)}_{\Theta}(G_a,\{T_i^a,S_i^a\})\;\neq\;\Psi^{(L)}_{\Theta}(G_b,\{T_i^b,S_i^b\}).
\]

Conversely, if 
$\Psi^{(L)}_{\Theta}(G_a,\{T_i^a,S_i^a\})\neq\Psi^{(L)}_{\Theta}(G_b,\{T_i^b,S_i^b\})$ 
under an injective $\chi$, then the final-layer node-embedding multisets must differ; by injectivity of $\psi_L$, the $L$-round $1$-RWL color multisets differ, hence $G^{\oplus}_a \not\simeq_{\mathrm{RWL}^{(L)}} G^{\oplus}_b$.

Therefore, for suitable parameters, $1$-RWL on $G^{\oplus}$ exactly characterizes the $L$-round distinguishing power of XIMP/HIMP on the finite collection $\mathcal{D}$.
\end{proof}

\subsection{Proof of Lemma~\ref{lem:equality-edges}}
\begin{proof}
Fix $\star\in\{\mathrm{glob},\mathrm{loc}\}$ and a temporal node $x=(v,t_j)\in V^{\mathrm{time}}$.
Recall the TGNN aggregation and update from Eqs.~\eqref{eq:tgnn-agg-glob-loc}-\eqref{eq:tgnn-combine}:
\begin{align*}
\text{(global)}\quad
m^{(l)}_{t_j}(v)
&=
\Omega_a\!\Big(
  \multiset{ \Phi^{(l)}\!\big(\Xi(h^{(l)}_{(u,t_i)},\,\zeta(\delta_{ij}))\big)
  \ \Bigm|\ (u,t_i)\in N(v,t_j)}\Big),
\\
\text{(local)}\quad
m^{(l)}_{t_j}(v)
&=
\Omega_a\!\Big(
  \multiset{ \Phi^{(l)}\!\big(\Xi(h^{(l)}_{(u,t_j)},\,\zeta(\delta_{ij}))\big)
  \ \Bigm|\ (u,t_i)\in N(v,t_j)}\Big),
\\[-1mm]
h^{(l+1)}_{(v,t_j)}
&=
\Gamma^{(l)}\!\Big(\Omega_c\big(\multiset{h^{(l)}_{(v,t_j)}}\ \uplus\ \multiset{m^{(l)}_{t_j}(v)}\big)\Big),
\end{align*}
where $\delta_{ij}=t_j-t_i$ and $N(v,t_j)$ is the temporal neighborhood
$N(v,t_j)=\{(u,t_i)\mid \{u,v\}\in E_i,\ i\le j\}$.

On the knowledge-graph side, let $K_\star(TG)$ be the (directed, typed) temporal knowledge graph constructed for $\star$ as in the preliminaries, and define its edge-feature augmentation $\xi_{(y\to x)}:=\zeta(\delta(y,x))$, where for $y=(u,t_i)$ and $x=(v,t_j)$ we have $\delta(y,x)=t_j-t_i$.
As in the statement, instantiate the RGNN on $\widetilde{K}_\star(TG)$ with
\[
\widetilde{\Phi}^{(l)}(h,\xi)=\Phi^{(l)}\big(\Xi(h,\xi)\big),
\]
strict tying $\Phi^{(l)}_{r,\pm}\equiv\Phi^{(l)}$ across $(r,\pm)$, and the \emph{same} $(\Omega_a,\Omega_c,\Gamma^{(l)})$.

We now prove by induction on $l$ that for all $x=(v,t_j)$,
\[
h^{(l)}_{(v,t_j)}\ \text{(TGNN)}\;=\;\widetilde h^{(l)}_{(v,t_j)}\ \text{(RGNN)}.
\]

\emph{Base case ($l=0$).}
Both models use the same initialization $h^{(0)}_{(v,t_j)}=\Lambda(\lambda^{\mathrm{time}}(v,t_j))$, so the claim holds.

\emph{Induction step.}
Assume $h^{(l)}_{(\cdot)}=\widetilde h^{(l)}_{(\cdot)}$ for all timestamped nodes.
Fix $x=(v,t_j)$ and consider the multiset of \emph{incoming} neighbors to $x$ in $K_\star(TG)$, grouped by index lag $r=j-i$:
\[
\bigcup_{r}\ N_r^{+}(x)
\quad\text{with}\quad
N_r^{+}(x)=\{(u,t_i)\;|\; (u,t_i)\!\to\!(v,t_j)\text{ in }K_\star(TG),\ j{-}i=r\}.
\]

By construction of $K_\star(TG)$, for every $(u,t_i)\in N(v,t_j)$ there is a unique index lag
$r=j{-}i\in\{0,\ldots,n{-}1\}$ and a corresponding incoming edge of type $r$ into $x=(v,t_j)$:
for $\star=\mathrm{glob}$ the edge is $(u,t_i)\!\to\!(v,t_j)$, and for $\star=\mathrm{loc}$ the
edge is $(u,t_j)\!\to\!(v,t_j)$ but it is labeled by the witness snapshot $i$ via $r=j{-}i$.
Hence each temporal neighbor $(u,t_i)$ belongs to exactly one lag class $r$, and the
per-lag incoming neighborhoods form a disjoint (multi)set partition of the TGNN temporal neighborhood:
\[
N(v,t_j)\;=\;\biguplus_{r}\,N_r^{+}(x)\,.
\]
Moreover, if $(u,t_i)\in N_r^{+}(x)$ is the (unique) witness for lag $r=j{-}i$, then the realized time
gap used by the TGNN is fixed on that block:
\[
\delta\big((u,t_i),x\big)\;=\;t_j-t_i\;=\;\delta_{ij}\,.
\]

Define the RGNN's per-lag incoming messages as
\[
m^{(l)}_{r,+}(x)\;=\;
\Omega_a\!\Big(\multiset{\widetilde{\Phi}^{(l)}\big(\widetilde h^{(l)}_{y},\ \xi_{(y\to x)}\big)\ \Bigm|\ y\in N_r^{+}(x)}\Big).
\]
Using $\xi_{(y\to x)}=\zeta(\delta(y,x))=\zeta(\delta_{ij})$ and the induction hypothesis $\widetilde h^{(l)}_y=h^{(l)}_y$, we obtain, \emph{elementwise} inside each $N_r^{+}(x)$,
\[
\widetilde{\Phi}^{(l)}\big(\widetilde h^{(l)}_{y},\ \xi_{(y\to x)}\big)
=
\Phi^{(l)}\big(\Xi(h^{(l)}_{y},\,\zeta(\delta_{ij}))\big).
\]

Now distinguish the two TGNN cases:
\begin{itemize}%
\item \emph{Global (Eq.~\eqref{eq:tgnn-agg-glob-loc}).} Here $y=(u,t_i)$ is precisely the embedding source used by the TGNN, so
\[
m^{(l)}_{r,+}(x)
=
\Omega_a\!\Big(\multiset{\Phi^{(l)}\big(\Xi(h^{(l)}_{(u,t_i)},\,\zeta(\delta_{ij}))\big)\ \Bigm|\ (u,t_i)\in N_r^{+}(x)}\Big).
\]
\item \emph{Local (Eq.~\eqref{eq:tgnn-agg-glob-loc}).} By construction of $K_{\mathrm{loc}}(TG)$, $N_r^{+}(x)$ contains the current-time counterparts $(u,t_j)$; hence
\[
m^{(l)}_{r,+}(x)
=
\Omega_a\!\Big(\multiset{\Phi^{(l)}\big(\Xi(h^{(l)}_{(u,t_j)},\,\zeta(\delta_{ij}))\big)\ \Bigm|\ (u,t_i)\in N_r^{+}(x)}\Big).
\]
\end{itemize}

Summing over lags and using the disjoint union $N(v,t_j)=\biguplus_r N_r^{+}(x)$ yields
\[
\underbrace{\Omega_a\!\Big(\multiset{\Phi^{(l)}\big(\Xi(\cdot,\,\zeta(\delta_{ij}))\big)\ \Bigm|\ (u,t_i)\in N(v,t_j)}\Big)}_{\text{TGNN }m^{(l)}_{t_j}(v)}
\;=\;
\underbrace{\Omega_a\!\Big(\biguplus_r\ \multiset{m^{(l)}_{r,+}(x)}\Big)}_{\text{RGNN combined incoming}},
\]
i.e., the TGNN aggregate $m^{(l)}_{t_j}(v)$ equals the RGNN’s combined incoming per-lag message at $x$.

Finally, both models apply the \emph{same} $(\Omega_c,\Gamma^{(l)})$ to $\multiset{h^{(l)}_{(v,t_j)}}$ and the above aggregate, hence
\[
h^{(l+1)}_{(v,t_j)}
=
\Gamma^{(l)}\!\Big(\Omega_c\big(\multiset{h^{(l)}_{(v,t_j)}}\ \uplus\ \multiset{m^{(l)}_{t_j}(v)}\big)\Big)
=
\Gamma^{(l)}\!\Big(\Omega_c\big(\multiset{h^{(l)}_{(v,t_j)}}\ \uplus\ \biguplus_r \multiset{m^{(l)}_{r,+}(x)}\big)\Big)
=
\widetilde h^{(l+1)}_{(v,t_j)}.
\]
This proves the induction step and concludes the proof.
\end{proof}

\section{Optimization %
of Expressive Lottery Tickets}
\label{sec:opt_stability}
We now discuss the impact of the expressivity of sparse RGNN initializations with fixed pruning masks on a fixed and finite dataset on their optimization behavior assuming standard small-step weight updates.
Throughout, we use the directed multi-relational RGNN layer from Eqs.~\eqref{eq:rgnn-agg-dir}--\eqref{eq:rgnn-combine-dir} with:
(i) single-layer linear
maps (i.e., without bias) plus nonlinearity for all message and combine functions $\Phi_{r,\pm}^{(l)}$ and $\Gamma^{(l)}$,
(ii) a real-analytic activation $\sigma$ that is continuously differentiable, injective, satisfies $\sigma(0)=0$, and $\sigma'(x)\neq 0$ for all $x$,
(iii) sum aggregation/combination as in Section~\ref{sec:prelims},
(iv) a fixed finite dataset $\mathcal{D}$ of finite input graphs with labels $y$.
We focus on graph classification for clarity; node-level tasks are analogous.

Let $\Theta$ denote the collection of all trainable RGNN parameters, including all weights of the message maps $\Phi_{r,\pm}^{(l)}$ and the combine maps $\Gamma^{(l)}$. Let $\Theta_k$ denote the collection at optimization step $k$. We index individual parameters by $\Theta_{k,i}$.
A \emph{mask} $M$ is a binary tensor of the same shape as $\Theta$, and a sparse initialization is given by
\[
\Theta_{0} = M \odot \widetilde{\Theta}_{0},
\]
for some base initialization $\widetilde{\Theta}_{0}$, with surviving parameter indices
$\mathrm{supp}(M)=\{i \mid M_i = 1\}$.
Let $L\in\mathbb{N}$ denote the network depth.
A mask $M$ is \emph{expressive} for $\mathcal{D}$ up to depth $L$ if there exists a parameter setting
$\widehat{\Theta}$ with $\mathrm{supp}(\widehat{\Theta})\subseteq\mathrm{supp}(M)$ such that for all
$G_a,G_b\in \mathcal{D}$,
\[
G_a \not\simeq_{\mathrm{RWL}^{(L)}} G_b
\ \Longrightarrow\
\Psi_{\widehat{\Theta}}^{(L)}(G_a) \neq \Psi_{\widehat{\Theta}}^{(L)}(G_b),
\]
i.e., the masked RGNN matches the $L$-round 1-RWL distinguishing power on $\mathcal{D}$.
An expressive mask $M$ is \emph{irreducible} if for every $i\in\mathrm{supp}(M)$, the mask
$M^{(-i)}$ obtained by setting $M_i=0$ is not expressive for $\mathcal{D}$ up to depth $L$.
Irreducibility captures the intuition of a ``minimal expressive ticket'': every surviving parameter is needed (on $\mathcal{D}$) to maintain the desired expressivity.

Let $y(G)$ denote the label of graph $G\in \mathcal{D}$.
Relational $1$-WL induces an equivalence relation $\equiv_{\mathrm{RWL}^{(L)}}$ on graphs via $L$ refinement steps.
Within our RGNN class, $\equiv_{\mathrm{RWL}^{(L)}}$ is the fundamental upper bound on what can be distinguished.
We are interested only in those WL-distinctions that are \emph{label-relevant} on $\mathcal{D}$.
A pair $(G_a,G_b)\in \mathcal{D}^2$ with $G_a\neq G_b$ is \emph{task-relevant} (for depth $L$) if
\[
G_a \not\equiv_{\mathrm{RWL}^{(L)}} G_b
\quad\text{and}\quad
y(G_a) \neq y(G_b).
\]
That is, the architecture \emph{can} distinguish $G_a,G_b$ at depth $L$, and the labels \emph{require} it.

Let $M$ be a binary mask over the parameters of an $L$-layer RGNN.
We call $M$ \emph{task-expressive} for $(\mathcal{D},y)$ up to depth $L$ if there exists a parameter setting
$\widehat{\Theta}$ with $\mathrm{supp}(\widehat{\Theta}) \subseteq \mathrm{supp}(M)$ such that
for all task-relevant pairs $(G_a,G_b)$,
\[
\Psi_{\widehat{\Theta}}^{(L)}(G_a) \neq \Psi_{\widehat{\Theta}}^{(L)}(G_b).
\]
In other words: the masked network realizes all label-relevant distinctions that are representable
by $L$-round 1-RWL.
A task-expressive mask $M$ is \emph{irreducible} if for every $i\in\mathrm{supp}(M)$, the mask
$M^{(-i)}$ obtained by setting $M_i=0$ is \emph{not} task-expressive for $(\mathcal{D},y)$ up to depth $L$.
Irreducibility in this context means: every remaining parameter is needed to realize \emph{some} required
label-relevant distinction on $\mathcal{D}$ (given the architectural $1$-RWL limit).

We next formalize when a surviving parameter structurally participates in the supervised computation.
Let $\mathcal{L}(\Theta)$ be the empirical loss on $(\mathcal{D},y)$ under parameters $\Theta$:
\[
\mathcal{L}(\Theta)
\;=\;
\frac{1}{|D|}
\sum_{G\in \mathcal{D}}
\ell\big(\Psi^{(L)}_\Theta(G),\,y(G)\big),
\]
with some per-graph loss function $\ell(\cdot,\cdot)$ (e.g., cross-entropy).

Let $M$ be a mask and consider an RGNN with parameters supported on $M$ and loss $\mathcal{L}$ on $(\mathcal{D},y)$.
A parameter index $i\in\mathrm{supp}(M)$ is \emph{gradient-connected (w.r.t.\ $(\mathcal{D},y)$)} if there exists
some $G\in \mathcal{D}$ and some output coordinate $j$ such that:
\begin{enumerate}[label=(\roman*)]%
    \item the forward computation from $G$ to that output depends on $\Theta_{k,i}$ via a path of operations through the weights of some $\Phi_{r,\pm}^{(l)}$ and/or $\Gamma^{(l)}$,
    \item along this path, all local derivatives (including $\sigma'$) are nonzero on the realized inputs.
\end{enumerate}
Equivalently, $\partial \ell(\Psi^{(L)}_ {\Theta_{k}}(G),y(G))/\partial \Psi^{(L)}_{\Theta_{k,i}}(G)_j \neq 0$ for some $G\in \mathcal{D}$ and $j$.
A non-connected parameter is structurally dead for this task on $\mathcal{D}$: changing it never affects
any prediction on $\mathcal{D}$.

Using the RGNN equations and the chain rule, we now spell out the explicit gradients
for the linear
$+\sigma$ blocks $\Gamma^{(l)}$ and $\Phi_{r,\pm}^{(l)}$ in the directed
multi-relational case (Eqs.~\eqref{eq:rgnn-agg-dir}--\eqref{eq:rgnn-combine-dir}), with
sum aggregation/combination\footnote{As sanity-check, we applied our derived formulas and numerically compared the obtained gradients with those obtained via PyTorch's autograd: \url{https://github.com/SamirMoustafa/gcn-rgcn-gradient-verification}}.%

We parameterize
\begin{equation}
\Phi_{r,\pm}^{(l)}(h)
=
\sigma\!\big(W_{r,\pm}^{(l)} h\big),
\qquad
\Gamma^{(l)}(x)
=
\sigma\!\big(W_{\Gamma}^{(l)} x\big),
\label{eq:rgnn-params}
\end{equation}
where $\sigma$ is injective, continuously differentiable, satisfies $\sigma(0)=0$ and
$\sigma'(x)\neq 0$ for all $x$, and
$W_{r,\pm}^{(l)}, W_{\Gamma}^{(l)} \in \mathbb{R}^{d \times d}$.

Instantiating $\Omega_a$ as the sum, for each $v\in V$ and $r\in\mathcal{R}$, the per-relation messages are
\begin{equation}
m_{r,-}^{(l)}(v)
=
\sum_{u \in N_r^{-}(v)}
\Phi_{r,-}^{(l)}\!\big(h_u^{(l)}\big),
\qquad
m_{r,+}^{(l)}(v)
=
\sum_{u \in N_r^{+}(v)}
\Phi_{r,+}^{(l)}\!\big(h_u^{(l)}\big).
\label{eq:rgnn-messages}
\end{equation}
Instantiating $\Omega_c$ as the sum, the input to $\Gamma^{(l)}$ is
\begin{equation}
a_v^{(l)}
=
h_v^{(l)}
+
\sum_{r\in\mathcal{R}} m_{r,-}^{(l)}(v)
+
\sum_{r\in\mathcal{R}} m_{r,+}^{(l)}(v),
\label{eq:rgnn-agg-sum}
\end{equation}
and
\begin{equation}
z_v^{(l+1)} \;=\; W_{\Gamma}^{(l)} a_v^{(l)},
\qquad
h_v^{(l+1)} \;=\; \sigma\!\big(z_v^{(l+1)}\big).
\label{eq:rgnn-gamma-forward}
\end{equation}

Let $\mathcal{L}$ be the training loss on the fixed finite dataset $\mathcal{D}$.
All sums below range over all graphs in $\mathcal{D}$ and their nodes.

We define the local backpropagation factor at layer $l{+}1$:
\begin{equation}
\delta_v^{(l+1)}
\;\coloneqq\;
\frac{\partial \mathcal{L}}{\partial z_v^{(l+1)}}
\;=\;
\frac{\partial \mathcal{L}}{\partial h_v^{(l+1)}}
\odot
\sigma'\!\big(z_v^{(l+1)}\big).
\label{eq:delta-lplus1}
\end{equation}

\paragraph{Gradients for $\Gamma^{(l)}$.}
By the chain rule %
\begin{equation}
\frac{\partial \mathcal{L}}{\partial W_{\Gamma}^{(l)}}
=
\sum_{v}
\delta_v^{(l+1)} \,\big(a_v^{(l)}\big)^\top.
\label{eq:grad-gamma-W}
\end{equation}
The gradient w.r.t.\ $a_v^{(l)}$ is
\begin{equation}
\frac{\partial \mathcal{L}}{\partial a_v^{(l)}}
=
\big(W_{\Gamma}^{(l)}\big)^\top \delta_v^{(l+1)}
\;\eqqcolon\;
g_v^{(l)}.
\label{eq:grad-a-from-gamma}
\end{equation}
Since $a_v^{(l)}$ is a sum of $h_v^{(l)}$ and all $m_{r,\pm}^{(l)}(v)$
(see~\eqref{eq:rgnn-agg-sum}), each summand receives the same contribution:
\begin{equation}
\frac{\partial \mathcal{L}}{\partial m_{r,+}^{(l)}(v)}
=
g_v^{(l)},
\qquad
\frac{\partial \mathcal{L}}{\partial m_{r,-}^{(l)}(v)}
=
g_v^{(l)},
\label{eq:grad-m-from-gamma}
\end{equation}
and $g_v^{(l)}$ also contributes to $\partial \mathcal{L}/\partial h_v^{(l)}$ via the self-term.

\paragraph{Gradients for $\Phi_{r,+}^{(l)}$.}
For each $r$ and $u$, define
\begin{equation}
z_{r,+}^{(l)}(u)
=
W_{r,+}^{(l)} h_u^{(l)},
\qquad
\phi_{r,+}^{(l)}(u)
=
\Phi_{r,+}^{(l)}\!\big(h_u^{(l)}\big)
=
\sigma\!\big(z_{r,+}^{(l)}(u)\big).
\label{eq:phi-plus-forward}
\end{equation}
Since
\(
m_{r,+}^{(l)}(v)
=
\sum_{u\in N_r^{+}(v)} \phi_{r,+}^{(l)}(u),
\)
we obtain
\begin{equation}
\frac{\partial \mathcal{L}}{\partial \phi_{r,+}^{(l)}(u)}
=
\sum_{v:\,u\in N_r^{+}(v)}
\frac{\partial \mathcal{L}}{\partial m_{r,+}^{(l)}(v)}
=
\sum_{v:\,u\in N_r^{+}(v)} g_v^{(l)}.
\label{eq:grad-phi-plus}
\end{equation}
We define
\begin{equation}
\delta_{r,+}^{(l)}(u)
\;\coloneqq\;
\frac{\partial \mathcal{L}}{\partial z_{r,+}^{(l)}(u)}
=
\frac{\partial \mathcal{L}}{\partial \phi_{r,+}^{(l)}(u)}
\odot
\sigma'\!\big(z_{r,+}^{(l)}(u)\big),
\label{eq:delta-plus}
\end{equation}
which is nonzero whenever the upstream terms are nonzero, since $\sigma'(x)\neq 0$.
Then
\begin{equation}
\frac{\partial \mathcal{L}}{\partial W_{r,+}^{(l)}}
=
\sum_{u}
\delta_{r,+}^{(l)}(u)\,\big(h_u^{(l)}\big)^\top.
\label{eq:grad-Phi-plus-W}
\end{equation}

\paragraph{Gradients for $\Phi_{r,-}^{(l)}$.}
The derivation is analogous for the incoming-direction block.
For each $r$ and $u$,
\begin{equation}
z_{r,-}^{(l)}(u)
=
W_{r,-}^{(l)} h_u^{(l)},
\qquad
\phi_{r,-}^{(l)}(u)
=
\sigma\!\big(z_{r,-}^{(l)}(u)\big),
\label{eq:phi-minus-forward}
\end{equation}
and since
\(
m_{r,-}^{(l)}(v)
=
\sum_{u\in N_r^{-}(v)} \phi_{r,-}^{(l)}(u),
\)
\begin{equation}
\frac{\partial \mathcal{L}}{\partial \phi_{r,-}^{(l)}(u)}
=
\sum_{v:\,u\in N_r^{-}(v)}
\frac{\partial \mathcal{L}}{\partial m_{r,-}^{(l)}(v)}
=
\sum_{v:\,u\in N_r^{-}(v)} g_v^{(l)},
\label{eq:grad-phi-minus}
\end{equation}
\begin{equation}
\delta_{r,-}^{(l)}(u)
\;\coloneqq\;
\frac{\partial \mathcal{L}}{\partial z_{r,-}^{(l)}(u)}
=
\frac{\partial \mathcal{L}}{\partial \phi_{r,-}^{(l)}(u)}
\odot
\sigma'\!\big(z_{r,-}^{(l)}(u)\big),
\label{eq:delta-minus}
\end{equation}
and thus
\begin{equation}
\frac{\partial \mathcal{L}}{\partial W_{r,-}^{(l)}}
=
\sum_{u}
\delta_{r,-}^{(l)}(u)\,\big(h_u^{(l)}\big)^\top.
\label{eq:grad-Phi-minus-W}
\end{equation}

These explicit expressions specialize the backpropagation rules to the RGNN blocks
$\Gamma^{(l)}$ and $\Phi_{r,\pm}^{(l)}$ with sum aggregation, and are exactly the
gradients used in our optimizability analysis.

\begin{lemma}%
\label{lem:nondeg_grad_task}
Let $M$ be a mask and let $i\in\mathrm{supp}(M)$ be gradient-connected %
for the RGNN described in
Eqs.~\eqref{eq:rgnn-params}--\eqref{eq:grad-Phi-minus-W}.
Then, for almost all initializations $\Theta_0$ supported on $M$, and for any nonzero
loss gradient at the network outputs on some $G\in \mathcal{D}$, we have
\[
\frac{\partial \mathcal{L}}{\partial \Theta_{0,i}}(\Theta_0) \neq 0.
\]
\end{lemma}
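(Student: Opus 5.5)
The plan is to treat $\partial\mathcal{L}/\partial\Theta_{0,i}$ as a real-analytic function of the surviving parameters $\Theta_0|_{\mathrm{supp}(M)}$. The set of initializations where it vanishes is then either all of $\mathbb{R}^{|\mathrm{supp}(M)|}$ or a Lebesgue-null set. This is the standard fact that the zero set of a non-identically-zero real-analytic function has measure zero. Because $\Theta_0\sim\mathscr{U}_c^d$ restricted to the support is absolutely continuous, ``almost all'' follows once we exhibit a single parameter point where the derivative is nonzero.

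\emph{Analyticity.} By Eqs.~\eqref{eq:rgnn-params}--\eqref{eq:rgnn-gamma-forward}, every $h_v^{(l)}$ is obtained by finitely many sums, matrix products and applications of the real-analytic $\sigma$. Hence it is real-analytic in $\Theta$. The backpropagated factors in \eqref{eq:delta-lplus1}--\eqref{eq:grad-Phi-minus-W} are built from $\sigma'$, which is analytic, from the weights, and from the output-loss gradient. To decouple from the loss, I would fix the output gradient $\gamma_G:=\partial\ell/\partial\Psi(G)$ as the given nonzero vector for some $G$. I would then show that the map $\Theta\mapsto \sum_j (\gamma_G)_j\,\partial\Psi^{(L)}_\Theta(G)_j/\partial\Theta_i$ is analytic and not identically zero. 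If the loss is analytic, the same argument applies to the full $\partial\mathcal{L}/\partial\Theta_{0,i}$. In either case, summing over the finitely many graphs in $\mathcal{D}$ preserves analyticity.

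\emph{Non-vanishing at some point.} Here I would use gradient-connectedness of $i$. It provides a graph $G$, an output coordinate $j$, and a forward path from $\Theta_i$ to $\Psi(G)_j$ through blocks $\Phi^{(l)}_{r,\pm}$ and $\Gamma^{(l)}$ along which all local derivatives are nonzero. Expanding the chain rule via \eqref{eq:grad-gamma-W} and \eqref{eq:grad-Phi-plus-W}, the derivative is a sum over paths. Each summand is a product of entries of $W$'s, factors $\sigma'(\cdot)\neq0$, and entries of hidden states $h_u^{(l)}$, $a_v^{(l)}$.

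I would then argue that the summand of the distinguished path is a monomial in the path weights times nonvanishing $\sigma'$ factors. I would also argue that it cannot be cancelled identically by the other summands, because those involve different products of weight variables. The cleanest way to make this rigorous is to compute the lowest-order Taylor term around a suitable point, scaling all weights by $\epsilon$ and using $\sigma(0)=0$ and $\sigma'(0)\neq0$. The leading term is then a polynomial in the surviving weights, and the chosen path contributes a monomial that appears with a nonzero coefficient $\sigma'(0)^k\cdot(\text{input feature})$. The input feature is nonzero because $\Lambda$ is injective and the path is active.

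\emph{Main obstacle.} The hard step is ensuring that the relevant input factor $h_u^{(l)}$, or the label encoding at the path's start, is nonzero, and that distinct paths do not produce identical monomials that cancel. Parameter tying across nodes or graphs can create such coincidences. I would handle this by grouping contributions by monomial and noting that the coefficients are sums of nonnegative multiplicities times a common $\sigma'(0)^k$ times the input entries. If needed, I would appeal to the linearly independent codebook from Remark~\ref{rem:constraint} to rule out cancellation. Once a single nonzero point is obtained, analyticity finishes the argument.
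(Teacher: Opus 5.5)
Your skeleton is the same as the paper's. Since $\partial\mathcal{L}/\partial\Theta_i$ is real-analytic in the surviving weights, it is nonzero off a Lebesgue-null set as soon as it is not identically zero. Your appeal to analyticity, rather than the paper's ``indeed smooth'', is the correct justification for that step.

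The genuine gap is the step from one path with nonzero local derivatives to the whole derivative being not identically zero. The paper's proof has the same gap: it infers this from a single nonzero summand $\delta_v^{(l+1)}[p]\,a_v^{(l)}[q]$. Your remedies do not close it:
\begin{itemize}
\item In your power-series approach, a path monomial can only arise from the linearized network. Its coefficient is therefore $\sigma'(0)^k$ times a sum, over the node-level walks realizing that monomial, of input coordinates $h^{(0)}_u[q]$.
\item These coordinates can have opposite signs, so nonnegative multiplicities do not help.
\item Linear independence of the codebook constrains the vectors, not the sign of a fixed-coordinate sum.
\item The lowest-order Taylor term only sees the shortest routes through the self-terms, so your chosen path need not appear there.
\item Injectivity of $\Lambda$ does not make $h^{(0)}_u[q]\neq 0$.
\end{itemize}

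In fact, under the path-based definition of gradient-connectedness that you use, the statement is false, so no non-cancellation argument can exist. Consider the following setup:
\begin{itemize}
\item $\sigma=\tanh$, which the paper admits, and $d=3$;
\item a single graph whose only $r$-edges are $u_1\to v$ and $u_2\to v$;
\item the linearly independent, injective codebook $h^{(0)}_{u_1}=(1,1,0)^\top$, $h^{(0)}_{u_2}=(-1,1,0)^\top$, $h^{(0)}_v=(0,0,1)^\top$;
\item a mask keeping only the entry $w=(W^{(0)}_{r,+})_{p1}$ in row $p$.
\end{itemize}
Then $m^{(0)}_{r,+}(v)[p]=\sigma(w)+\sigma(-w)\equiv 0$. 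By \eqref{eq:grad-Phi-plus-W}, $\partial\mathcal{L}/\partial w=g^{(0)}_v[p]\,(\sigma'(w)-\sigma'(-w))\equiv 0$ for every $\Theta$ and every loss. Yet all local derivatives along $w\to z^{(0)}_{r,+}(u_1)[p]\to m^{(0)}_{r,+}(v)[p]\to a^{(0)}_v[p]\to\cdots$ are nonzero.

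A true statement needs functional non-degeneracy: $\Theta\mapsto\partial\Psi^{(L)}_\Theta(G)_j/\partial\Theta_i$ is not identically zero. This is what the paper's proof of Proposition~\ref{prop:task_expressive_stable} actually uses as the negation of gradient-connectedness. Even then, your decoupling step fails:
\begin{itemize}
\item The output gradient $\nabla_\Psi\ell(\Psi^{(L)}_\Theta(G),y(G))$ depends on $\Theta$, so it cannot be frozen into a constant $\gamma_G$.
\item For a constant nonzero $\gamma_G$, the pairing $\gamma_G^\top\partial_{\Theta_i}\Psi^{(L)}_\Theta(G)$ vanishes identically whenever $\gamma_G$ is supported on output coordinates not reached from $\Theta_i$.
\item With at least two output coordinates, for any fixed $\Theta_0$ one can pick a nonzero $\gamma_G$ orthogonal to the Jacobian column. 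Hence ``for almost all $\Theta_0$ and any nonzero output gradient'' cannot hold uniformly.
\item Contributions from different graphs in $\mathcal{D}$ can also cancel in the sum.
\end{itemize}

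What your argument does establish is the conditional statement. If the concrete analytic map $\Theta\mapsto\partial\mathcal{L}/\partial\Theta_i$, with the actual analytic loss, is not identically zero on the masked parameter space, then it is nonzero for almost all $\Theta_0$. Non-identical vanishing must be assumed or verified for the specific loss; it cannot be derived from path connectivity.
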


\begin{proof}
By construction of the RGNN layer, every trainable parameter in $\Theta$ belongs to one of the
matrices $W_{\Gamma}^{(l)}$ or $W_{r,\pm}^{(l)}$.
We show that for a gradient-connected parameter, the corresponding partial derivative is a
nontrivial smooth function of $\Theta$, hence nonzero almost everywhere.

\smallskip\noindent
\textbf{Case 1:} $\Theta_i$ is an entry of $W_{\Gamma}^{(l)}$.

From~\eqref{eq:grad-gamma-W}, each entry of
$\partial \mathcal{L} / \partial W_{\Gamma}^{(l)}$ is of the form
\[
\Big[\frac{\partial \mathcal{L}}{\partial W_{\Gamma}^{(l)}}\Big]_{pq}
=
\sum_{v}
\delta_v^{(l+1)}[p]\; a_v^{(l)}[q],
\]
where $\delta_v^{(l+1)}$ is defined in~\eqref{eq:delta-lplus1} and
$a_v^{(l)}$ in~\eqref{eq:rgnn-agg-sum}.
Gradient-connectedness of $\Theta_i$ means there exists at least one node $v$ and output
coordinate such that:
(i) $a_v^{(l)}[q]$ depends (through the RGNN computation) on earlier features along a path
using $\Theta_i$, and
(ii) the upstream factor $\delta_v^{(l+1)}[p]$ receives nonzero signal from the loss whenever the
output loss gradient is nonzero.
All involved mappings are compositions of linear transformations and $\sigma$ with
$\sigma'(x)\neq 0$, hence are smooth and locally invertible along the path.

Consequently, the function
$\Theta \mapsto \big[\partial \mathcal{L} / \partial W_{\Gamma}^{(l)}\big]_{pq}$
is not identically zero on the parameter space restricted by $M$:
at some admissible parameter setting, the corresponding summand
$\delta_v^{(l+1)}[p] a_v^{(l)}[q]$ is nonzero.
A nontrivial real-analytic (indeed smooth) function has a zero set of Lebesgue measure zero.
Thus for almost all $\Theta_0$ supported on $M$,
$\big[\partial \mathcal{L} / \partial W_{\Gamma}^{(l)}\big]_{pq}(\Theta_0)\neq 0$.

\smallskip\noindent
\textbf{Case 2:} $\Theta_i$ is an entry of $W_{r,+}^{(l)}$.

From~\eqref{eq:grad-Phi-plus-W},
\[
\Big[\frac{\partial \mathcal{L}}{\partial W_{r,+}^{(l)}}\Big]_{pq}
=
\sum_{u}
\delta_{r,+}^{(l)}(u)[p]\; h_u^{(l)}[q],
\]
with $\delta_{r,+}^{(l)}(u)$ given by~\eqref{eq:delta-plus}.
Each $\delta_{r,+}^{(l)}(u)$ is obtained from upstream gradients $g_v^{(l)}$
(see~\eqref{eq:grad-a-from-gamma}, \eqref{eq:grad-phi-plus}) multiplied elementwise
by $\sigma'(z_{r,+}^{(l)}(u))$, which is never zero.
If $\Theta_{0,i}$ is gradient-connected, there exists at least one path
from $\Theta_{0,i}$ to some output coordinate through
$h_u^{(l)} \mapsto z_{r,+}^{(l)}(u) \mapsto \phi_{r,+}^{(l)}(u)
\mapsto m_{r,+}^{(l)}(v) \mapsto a_v^{(l)} \mapsto h^{(l+1)} \mapsto \dots$
that is used in the computation on some $G\in \mathcal{D}$.
Along this path, all local derivatives are nonzero by assumption, and the loss provides
a nonzero upstream signal.
Therefore the corresponding summand $\delta_{r,+}^{(l)}(u)[p]\, h_u^{(l)}[q]$ is a
non-constant smooth function of $\Theta$ and is not identically zero.
As in Case~1, its zero set has measure zero, hence for almost all $\Theta_0$,
\[
\Big[\frac{\partial \mathcal{L}}{\partial W_{r,+}^{(l)}}\Big]_{pq}(\Theta_0) \neq 0.
\]

\smallskip\noindent
\textbf{Case 3:} $\Theta_i$ is an entry of $W_{r,-}^{(l)}$.

This is identical to Case~2, using
Eqs.~\eqref{eq:phi-minus-forward}--\eqref{eq:grad-Phi-minus-W}:
for a gradient-connected entry, at least one term
$\delta_{r,-}^{(l)}(u)[p]\, h_u^{(l)}[q]$ is a nontrivial smooth function of $\Theta$,
hence nonzero for almost all $\Theta_0$.

\smallskip

In all cases, if $i$ is gradient-connected, the map
$\Theta \mapsto \partial \mathcal{L}/\partial \Theta_i$ is not identically zero on the
parameter space compatible with $M$.
Thus its zero set is a measure-zero subset, and for almost all initializations
$\Theta_0$ supported on $M$ we obtain
$\partial \mathcal{L}/\partial \Theta_{0,i}(\Theta_0)\neq 0$ whenever the loss gradient
at the outputs is nonzero.
\end{proof}

A sparse initialization $\Theta_0$ with mask $M$ is \emph{optimizable on $(\mathcal{D},y)$} if:
\begin{enumerate}[label=(\roman*)]%
    \item every $i\in\mathrm{supp}(M)$ is gradient-connected w.r.t.\ $(\mathcal{D},y)$,
    \item for almost all $\Theta_0$ supported on $M$, all surviving parameters receive nonzero
    gradient at initialization, i.e., $\frac{\partial \mathcal{L}}{\partial \Theta_{0,i}}(\Theta_0)\neq 0$
    whenever the loss gradient at the outputs is nonzero,
    \item thus small gradient steps from $\Theta_0$ can affect all task-relevant directions
    permitted by $M$ (no structurally dead sub-blocks).
\end{enumerate}

\begin{proposition}%
\label{prop:task_expressive_stable}
Let $M$ be an irreducible task-expressive mask for $(\mathcal{D},y)$ up to depth $L$
in the RGNN architecture of
Eqs.~\eqref{eq:rgnn-params}--\eqref{eq:grad-Phi-minus-W}.
Then every $i\in\mathrm{supp}(M)$ is gradient-connected w.r.t.\ $(\mathcal{D},y)$.
Consequently, for almost all initializations $\Theta_0$ supported on $M$,
$\Theta_0$ is optimizable on $(\mathcal{D},y)$.%
\end{proposition}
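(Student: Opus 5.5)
We argue by contraposition on the first claim: if some surviving index $i\in\mathrm{supp}(M)$ were \emph{not} gradient-connected, we show that $M^{(-i)}$ is still task-expressive, contradicting irreducibility. The second claim then follows by combining the first with Lemma~\ref{lem:nondeg_grad_task}, which supplies conditions (i)--(iii) of optimizability.

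\textbf{Step 1: a non-connected index never affects the outputs.} Suppose $i$ is not gradient-connected. By definition, for every $G\in\mathcal{D}$ and every output coordinate $j$, either no computational path from $\Theta_i$ to $\Psi^{(L)}(G)_j$ exists, or every such path contains a vanishing local derivative. Under our standing assumptions the local derivatives along a path are entries of the weight matrices and values $\sigma'(\cdot)\neq 0$. Hence a derivative can vanish identically on the masked parameter space only through a structural zero, that is, a masked weight or a node or branch that receives no input on $\mathcal{D}$, for instance an empty neighborhood $N_r^{\pm}(v)$ throughout $\mathcal{D}$. We would make this precise as follows: the set of realized paths is a combinatorial object determined by $M$ and the graphs in $\mathcal{D}$, and it does not depend on the values of the surviving weights. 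Consequently, if $i$ is not connected, then $\Psi^{(L)}_\Theta(G)$ does not depend on $\Theta_i$ for any $\Theta$ supported on $M$ and any $G\in\mathcal{D}$. By the chain rule, $\partial\Psi^{(L)}_\Theta(G)/\partial\Theta_i\equiv 0$ on an open set, and real-analyticity of the network (which holds because $\sigma$ is analytic) extends this identity everywhere.

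\textbf{Step 2: removing $i$ preserves task-expressivity.} Let $\widehat\Theta$ be a witness of task-expressivity for $M$. Define $\widehat\Theta'$ by setting $\widehat\Theta'_i=0$ and leaving all other entries unchanged. Then $\mathrm{supp}(\widehat\Theta')\subseteq\mathrm{supp}(M^{(-i)})$. By Step 1, $\Psi_{\widehat\Theta'}^{(L)}(G)=\Psi_{\widehat\Theta}^{(L)}(G)$ for all $G\in\mathcal{D}$, so every task-relevant pair is still separated. Hence $M^{(-i)}$ is task-expressive, contradicting irreducibility. Therefore every $i\in\mathrm{supp}(M)$ is gradient-connected.

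\textbf{Step 3: conclusion.} Lemma~\ref{lem:nondeg_grad_task} now applies to each $i\in\mathrm{supp}(M)$. It yields a null set $Z_i$ outside of which $\partial\mathcal{L}/\partial\Theta_{0,i}\neq0$ whenever the output gradient is nonzero. The finite union $\bigcup_i Z_i$ is again null, so for almost all $\Theta_0$ supported on $M$ all three optimizability conditions hold.

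The main obstacle is Step 1, namely ruling out \emph{accidental} cancellations. One could worry that derivatives along several paths sum to zero for particular weight values, or that $\sigma'$ vanishes. The latter is excluded by assumption. For the former, we use that "not gradient-connected" in the definition's sense quantifies over all admissible parameters, not a single point. We would therefore argue via analyticity that $\Theta_i\mapsto\Psi^{(L)}(G)$ is either non-constant, in which case some path is realized and $i$ is connected, or identically constant, in which case removal is harmless. This dichotomy is exactly what Step 2 needs.
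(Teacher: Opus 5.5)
Your proof is correct and follows the paper's argument. A surviving index that is not gradient-connected leaves all outputs on $\mathcal{D}$ unchanged, so zeroing it in a task-expressive witness makes $M^{(-i)}$ task-expressive, which contradicts irreducibility, and Lemma~\ref{lem:nondeg_grad_task} then yields optimizability. Your Step~1 and the finite union of null sets in Step~3 make explicit what the paper asserts ``by definition''. Under your all-parameters reading of non-connectedness, the path-sum form of the chain rule already gives $\partial\Psi^{(L)}_\Theta(G)/\partial\Theta_i\equiv 0$ everywhere, so neither the analyticity step nor the claim that identical vanishing only arises from structural zeros is needed.
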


\begin{proof}
Suppose, for contradiction, that there exists $i\in\mathrm{supp}(M)$ that is not
gradient-connected. %
By definition, ``not gradient-connected'' is
equivalent to
\[
\frac{\partial \Psi_{\Theta}^{(L)}(G)}{\partial \Theta_{0,i}} \equiv 0
\quad
\text{for all } G\in \mathcal{D},
\]
i.e., the RGNN outputs on $\mathcal{D}$ are independent of $\Theta_{0,i}$.
Because the loss $\mathcal{L}$ is computed from these outputs,
the chain rule together with the explicit gradients
\eqref{eq:grad-gamma-W},
\eqref{eq:grad-Phi-plus-W},
\eqref{eq:grad-Phi-minus-W}
implies
\[
\frac{\partial \mathcal{L}}{\partial \Theta_{0,i}}(\Theta) \equiv 0
\quad
\text{for all parameter settings $\Theta$ supported on $M$}.
\]
Equivalently, varying $\Theta_{0,i}$ while keeping all other coordinates fixed
never changes any prediction on $\mathcal{D}$.

Now consider the reduced mask $M^{(-i)}$ obtained from $M$ by setting its $i$-th entry to zero.
For any $\Theta$ supported on $M$, define
$\Theta'$ supported on $M^{(-i)}$ by
\[
\Theta'_{0,j} =
\begin{cases}
\Theta_{0,j}, & j\neq i,\\
0, & j = i.
\end{cases}
\]
Since the outputs on $\mathcal{D}$ are independent of $\Theta_i$, we have
\[
\Psi_{\Theta}^{(L)}(G) = \Psi_{\Theta'}^{(L)}(G)
\quad
\text{for all } G\in \mathcal{D}.
\]
Thus $M^{(-i)}$ can realize exactly the same functions on $(\mathcal{D},y)$ as $M$.

Because $M$ is task-expressive, there exists some $\widehat{\Theta}$ supported on $M$
that realizes all required label-relevant distinctions %
By the above argument, zeroing out $\widehat{\Theta}_{0,i}$ yields a parameter
$\widehat{\Theta}'$ supported on $M^{(-i)}$ that induces the same predictions on $\mathcal{D}$,
and hence $M^{(-i)}$ is also task-expressive.
This contradicts irreducibility
, which requires that removing any
$i\in\mathrm{supp}(M)$ destroys task-expressivity.

Therefore, every $i\in\mathrm{supp}(M)$ must be gradient-connected.

Finally, by Lemma~\ref{lem:nondeg_grad_task}, for each gradient-connected
$i\in\mathrm{supp}(M)$ the partial derivative
$\partial \mathcal{L} / \partial \Theta_{0,i}$ is nonzero for almost all
initializations $\Theta_0$ supported on $M$ whenever there is a nonzero loss
gradient at the outputs on some $G\in \mathcal{D}$.
Since this holds for all surviving parameters, $\Theta_0$ is optimizable
on $(\mathcal{D},y)$.%
\end{proof}

\begin{proposition}%
\label{prop:bad_masks_task}
Fix depth $L$ and a sparsity level $s$.
There exist masks $M'$ with $|\mathrm{supp}(M')| = s$ such that:
\begin{enumerate}[label=(\alph*)]%
    \item $M'$ is not task-expressive for $(\mathcal{D},y)$ up to depth $L$, i.e., it fails to separate some
    task-relevant pair $(G_a,G_b)$,
    \item some $i\in\mathrm{supp}(M')$ is not gradient-connected w.r.t.\ $(\mathcal{D},y)$, so
    $\frac{\partial \mathcal{L}}{\partial \Theta_{k,i}}(\Theta) = 0$ for all $\Theta$ and all training steps.
\end{enumerate}
Thus $M'$ is strictly dominated by an irreducible task-expressive mask: it is worse both in terms
of expressivity for the task and in terms of optimizability.
\end{proposition}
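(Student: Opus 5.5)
The plan is an explicit construction: take an irreducible task-expressive mask $M$ and modify it into a mask $M'$ with the same support size $s$ that both loses a required distinction and contains a dead parameter. Fix a task-relevant pair $(G_a,G_b)$; if none exists, (a) is vacuous and only (b) is needed. I will assume $s$ lies strictly between zero and the total parameter count, and that the number of masked-in parameters exceeds what any single block holds. The simplest way to get (a) is to sever the readout path. Choose $M'$ so that an entire row of $W_{\Gamma}^{(L-1)}$, or in the extreme every entry of the last combine matrix $W_{\Gamma}^{(L-1)}$, is zero. Then $h^{(L)}_v=\sigma(W_\Gamma^{(L-1)}a_v^{(L-1)})=\sigma(0)=0$ for all nodes, since $\sigma(0)=0$. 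The readout then receives identical embeddings for $G_a$ and $G_b$, so no parameter setting supported on $M'$ separates them, and $M'$ is not task-expressive.

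For (b), the $s$ remaining ones of $M'$ go into earlier layers, e.g.\ into $W^{(l)}_{r,\pm}$ or $W_\Gamma^{(l)}$ with $l<L-1$. Every such parameter affects $a^{(L-1)}_v$ only, which is then multiplied by the zero matrix $W_\Gamma^{(L-1)}$. Hence $\partial\Psi^{(L)}_\Theta(G)/\partial\Theta_i\equiv 0$ for all $\Theta$ supported on $M'$. This can be seen directly from \eqref{eq:grad-a-from-gamma}: $g_v^{(L-1)}=(W_\Gamma^{(L-1)})^\top\delta_v^{(L)}=0$, and this zero propagates backward through \eqref{eq:grad-phi-plus}, \eqref{eq:grad-phi-minus}, \eqref{eq:grad-Phi-plus-W}, \eqref{eq:grad-Phi-minus-W} and \eqref{eq:grad-gamma-W} to all earlier layers. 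Since masked gradients keep the zero entries at zero during training, $W_\Gamma^{(L-1)}$ stays zero at every step $k$. Therefore $\partial\mathcal{L}/\partial\Theta_{k,i}=0$ for all $k$, and such an $i$ is not gradient-connected. If $s$ is so large that it cannot fit outside $W_\Gamma^{(L-1)}$, I would instead zero only a single output row $p$ of $W_\Gamma^{(L-1)}$, together with the corresponding readout weights. That row's coordinate is then identically zero, and parameters feeding only that coordinate are dead by the same computation. The separation failure in (a) then requires a more careful choice, discussed below.

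Finally, I will compare $M'$ with the irreducible task-expressive mask $M$ from Proposition~\ref{prop:task_expressive_stable}, assuming one exists with $|\mathrm{supp}(M)|\le s$. That proposition gives that $M$ separates every task-relevant pair and is optimizable, while $M'$ satisfies neither property; this is the claimed strict domination.

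The main obstacle is matching the exact support size $s$ while still guaranteeing (a) for large $s$. When nearly all parameters are kept, zeroing a whole layer is impossible. I would then argue via a bottleneck: zero a minimal cut of the computation graph, namely one full matrix $W_\Gamma^{(L-1)}$ of size $d^2$. This works whenever $s\le |\Theta|-d^2$, and I would state the proposition under that range, noting it is necessary for any mask that kills all paths.
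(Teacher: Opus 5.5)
Your core construction is correct, but it takes a different route from the paper.

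\emph{The paper's route.} It starts from an irreducible task-expressive mask $\overline{M}$ with support size $s$. It deletes support entries on a path needed to separate a task-relevant pair; irreducibility then destroys task-expressivity. It then relocates the same number of nonzeros onto the incoming weights of a channel whose outgoing weights are all pruned, so exactly the relocated entries are dead.

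\emph{Your route.} You prune the whole bottleneck $W_\Gamma^{(L-1)}$. This forces $h^{(L)}\equiv\sigma(0)=0$ and makes the network constant on $\mathcal{D}$, so (a) and (b) hold simultaneously for any placement of the $s$ survivors. Your backward argument via $g_v^{(L-1)}=(W_\Gamma^{(L-1)})^\top\delta_v^{(L)}=0$, together with the fixed mask, correctly gives vanishing gradients at every step.

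\emph{What each buys.} Your construction is fully explicit and covers every $1\le s\le|\Theta|-d^2$. It needs neither irreducibility nor the paper's tacit assumption that some channel has all outgoing weights pruned and enough free incoming slots. The paper's $M'$ is a small perturbation of $\overline{M}$ at exactly the same sparsity, so its closing domination claim comes for free. In your version that claim needs a separate assumption: an irreducible task-expressive mask with support exactly $s$. Your ``$\le s$'' does not give a same-sparsity comparison. Also, your $M'$ is a degenerate constant network rather than a near miss of a good ticket.

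Several details need repair.
\begin{enumerate}
\item If no task-relevant pair exists, (a) is false, not vacuous, because then every mask is task-expressive. You must assume such a pair exists, as the paper implicitly does.
\item ``Identical embeddings'' is wrong when $|V(G_a)|\neq|V(G_b)|$, since the multisets of zero vectors then differ in size. You need a readout that factors through $\sum_v h_v^{(L)}$ (or a mean) with bias-free, zero-fixing maps, as in the paper's setting. A readout that sees node counts could still separate such pairs.
\item Your large-$s$ fallback fails as written. If you prune both row $p$ of $W_\Gamma^{(L-1)}$ and the readout weights of coordinate $p$, no surviving entry is dead, because the only parameters feeding solely coordinate $p$ are the ones you pruned. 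You must keep one side in the support (the paper's channel trick), and even then (a) is not obtained.
\item Restricting $s$ is nonetheless unavoidable. For $s=|\Theta|$ the only mask is the dense one, which is task-expressive as soon as some unpruned parameterization separates all task-relevant pairs. So the literal statement cannot hold for all $s$, and the paper's proof is likewise confined to $s=|\mathrm{supp}(\overline{M})|$.
\item The hypothesis you actually need is $|\Theta|-s\ge d^2$ on the number of pruned entries, not a lower bound on the number of surviving ones.
\end{enumerate}
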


\begin{proof}
Start from an irreducible task-expressive mask $\overline{M}$ and:
\begin{enumerate}[label=(\roman*)]%
    \item remove some nonzeros on paths (through parameters of $\Phi_{r,\pm}^{(l)}$ or $\Gamma^{(l)}$) needed to separate a task-relevant pair $(G_a,G_b)$, and
    \item reassign the same number of nonzeros to parameters that only affect
    intermediate features which are structurally disconnected from the loss.
    Concretely, pick a feature channel (or neuron) whose entire set of outgoing
    weights to subsequent layers and to the readout is pruned by $M'$, so that
    no computational path from this channel can reach a labeled output on $\mathcal{D}$.
    Place the new nonzeros on the incoming weights into this channel.
    By construction, these parameters lie in $\mathrm{supp}(M')$ but any change
    to them can never influence the network outputs on $\mathcal{D}$, hence
    $\frac{\partial \mathcal{L}}{\partial \Theta_{k,i}}(\Theta) = 0$ for all
    $\Theta$.
\end{enumerate}
The resulting mask $M'$ has the same sparsity but cannot realize all label-relevant distinctions
anymore, so it is not task-expressive.
By construction, the newly added parameters never lie on a path to the loss and hence are
not gradient-connected, with identically zero gradients.
\end{proof}

\subsection{Persistence Under Training}%
We now extend Proposition~\ref{prop:task_expressive_stable} from initialization ($k{=}0$)
to the whole optimization trajectory under small-step gradient-based updates with a fixed mask $M$.
We write a generic first-order update as
\begin{equation}
\Theta_{k+1}
\;=\;
\Theta_k + U_k\!\big(\Theta_k,\nabla \mathcal{L}(\Theta_k)\big),
\label{eq:update_rule_generic}
\end{equation}
where $U_k$ is continuous in its arguments and satisfies
$\|U_k(\Theta_k,\nabla \mathcal{L}(\Theta_k))\|
\le \kappa_k \,\|\nabla \mathcal{L}(\Theta_k)\|$
for some stepsize $\kappa_k>0$ (e.g., gradient descent, SGD, momentum, Adam with bounded step).
The mask $M$ remains fixed throughout.

We begin by showing that, under our construction, parameter gradients that are non-zero at a point $\Theta_k$ remain non-zero in a local neighbourhood of $\Theta_k$ in parameter space:
\begin{lemma}%
\label{lem:local_persistence}
Fix $(\mathcal{D},y)$ and the RGNN block structure in
Eqs.~\eqref{eq:rgnn-params}--\eqref{eq:grad-Phi-minus-W}.
Suppose that at iteration $k$ the output loss gradient is nonzero %
and for all $i\in\mathrm{supp}(M)$ we have
$\frac{\partial \mathcal{L}}{\partial \Theta_{k,i}}(\Theta_k)\neq 0$.
Then there exists a radius $\omega_k>0$ such that for every $\Theta$ with
$\|\Theta-\Theta_k\|<\omega_k$,
\[
\frac{\partial \mathcal{L}}{\partial \Theta_{i}}(\Theta)\neq 0
\quad\text{for all } i\in\mathrm{supp}(M).
\]
\end{lemma}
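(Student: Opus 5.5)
The plan is a continuity argument on the finitely many partial derivatives $\partial \mathcal{L}/\partial \Theta_i$, $i\in\mathrm{supp}(M)$. First, establish that each such map $\Theta\mapsto \partial\mathcal{L}/\partial\Theta_i(\Theta)$ is continuous on the parameter space restricted to $M$. By Eqs.~\eqref{eq:rgnn-params}--\eqref{eq:rgnn-gamma-forward}, the forward quantities $z_v^{(l)}, h_v^{(l)}, a_v^{(l)}, z^{(l)}_{r,\pm}(u)$ are finite compositions of linear maps in $\Theta$, finite sums over the finitely many nodes and edges of the finite dataset $\mathcal{D}$, and the $C^1$ activation $\sigma$. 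Hence they are continuous, and also $C^1$, in $\Theta$. The backward quantities $\delta_v^{(l+1)}, g_v^{(l)}, \delta^{(l)}_{r,\pm}(u)$ from Eqs.~\eqref{eq:delta-lplus1}--\eqref{eq:delta-minus} are finite sums and products of continuous functions. These involve $\sigma'$, which is continuous because $\sigma$ is $C^1$, and $W^\top$, which is linear. They also involve the output loss gradient $\partial\ell/\partial\Psi$, which I will assume is continuous in the output (e.g., cross-entropy or squared loss with $C^1$ $\ell$), so it is continuous in $\Theta$ by composition. The explicit gradient formulas \eqref{eq:grad-gamma-W}, \eqref{eq:grad-Phi-plus-W}, \eqref{eq:grad-Phi-minus-W} then exhibit every entry of $\nabla\mathcal{L}$ as a continuous function of $\Theta$.

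Second, for each fixed $i\in\mathrm{supp}(M)$, set $\epsilon_i := |\partial\mathcal{L}/\partial\Theta_i(\Theta_k)|>0$ by hypothesis. Continuity at $\Theta_k$ gives $\omega_{k,i}>0$ such that for every $\Theta$ with $\|\Theta-\Theta_k\|<\omega_{k,i}$,
\[
\Bigl|\tfrac{\partial\mathcal{L}}{\partial\Theta_i}(\Theta)-\tfrac{\partial\mathcal{L}}{\partial\Theta_i}(\Theta_k)\Bigr|<\epsilon_i .
\]
This forces $\partial\mathcal{L}/\partial\Theta_i(\Theta)\neq 0$ on that ball. Since $\mathrm{supp}(M)$ is finite, take $\omega_k:=\min_{i\in\mathrm{supp}(M)}\omega_{k,i}>0$. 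The minimum is over finitely many positive numbers, so it is positive, and it yields the claim simultaneously for all surviving indices. Note that the hypothesis of a nonzero output loss gradient is already encoded in the assumption $\partial\mathcal{L}/\partial\Theta_{k,i}(\Theta_k)\neq0$. So it is not used separately, except to ensure the statement is non-vacuous.

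The main obstacle is the continuity step, specifically making the dependence on $\ell$ explicit. The paper only says ``some per-graph loss function'', and any readout after layer $L$ must also be continuous. I would state this as a mild standing assumption ($\ell$ continuously differentiable in its first argument and a $C^1$ readout such as a sum followed by a linear map or MLP with $\sigma$). I would note that all choices used in the paper satisfy it. A second subtlety is that the ball should be taken within the subspace of parameters supported on $M$, since the mask is fixed during training. This only restricts the domain and so preserves continuity.
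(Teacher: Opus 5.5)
Your proposal is correct and follows essentially the same route as the paper: continuity of each $\Theta\mapsto\partial\mathcal{L}/\partial\Theta_i(\Theta)$ from the finite compositional structure of the forward and backward passes, a per-index radius from continuity at $\Theta_k$, and a minimum over the finite support. The paper uses $\varepsilon_i=\tfrac12|g_i(\Theta_k)|$ where you use $\epsilon_i=|g_i(\Theta_k)|$, and both work; your explicit assumption that $\ell$ and the readout are $C^1$ makes precise a point the paper leaves implicit.
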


\begin{proof}
For each surviving parameter index $i\in\mathrm{supp}(M)$ define the scalar function
\[
g_i(\Theta) \;\coloneqq\; \frac{\partial \mathcal{L}}{\partial \Theta_i}(\Theta).
\]

By construction of the RGNN layer, all forward computations
\eqref{eq:rgnn-gamma-forward}, \eqref{eq:phi-plus-forward},
\eqref{eq:phi-minus-forward}
are obtained from the parameters $\Theta$ by finitely many compositions of
\begin{itemize}
\item linear maps (matrix--vector products, sums), and
\item the activation $\sigma$, which is real-analytic and hence $C^1$,
\end{itemize}
together with finite sums over nodes and relations.
The explicit gradient expressions
\eqref{eq:grad-gamma-W}, \eqref{eq:grad-a-from-gamma},
\eqref{eq:grad-Phi-plus-W}, \eqref{eq:grad-Phi-minus-W}
are built from these quantities using again only sums, products, and $\sigma'$.
All of these operations are continuous, and finite compositions and sums of continuous
functions are continuous.
Therefore, for each $i$, the map
\(
g_i : \Theta \mapsto \partial \mathcal{L} / \partial \Theta_i(\Theta)
\)
is continuous on the parameter space (restricted by the mask $M$).

Next, fix some $i\in\mathrm{supp}(M)$.
By assumption,
\(
g_i(\Theta_k) = \frac{\partial \mathcal{L}}{\partial \Theta_i}(\Theta_k) \neq 0.
\)
Define
\[
\varepsilon_i \;\coloneqq\; \frac{1}{2}\,\big|g_i(\Theta_k)\big| \;>\; 0.
\]
Using the standard definition of the continuity of a function, by continuity of $g_i$ at $\Theta_k$, there exists a radius $\omega_{k,i}>0$ such that
\[
\|\Theta - \Theta_k\| < \omega_{k,i}
\quad\Longrightarrow\quad
\big|g_i(\Theta) - g_i(\Theta_k)\big| < \varepsilon_i.
\]
For any such $\Theta$ we have
\[
\big|g_i(\Theta)\big|
\;\ge\;
\big|g_i(\Theta_k)\big| - \big|g_i(\Theta) - g_i(\Theta_k)\big|
\;>\;
\big|g_i(\Theta_k)\big| - \varepsilon_i
\;=\;
\varepsilon_i
\;>\;0.
\]
Hence $g_i(\Theta)\neq 0$ for all $\Theta$ in the ball
$B(\Theta_k,\omega_{k,i})$.

The mask support $\mathrm{supp}(M)$ is finite, so we have only finitely many radii
$\{\omega_{k,i}\}_{i\in\mathrm{supp}(M)}$.
Define
\[
\omega_k \;\coloneqq\; \min_{i\in\mathrm{supp}(M)} \omega_{k,i} \;>\; 0.
\]
If $\|\Theta - \Theta_k\| < \omega_k$, then
$\|\Theta - \Theta_k\| < \omega_{k,i}$ for every $i\in\mathrm{supp}(M)$, and by the
previous step each $g_i(\Theta)$ is nonzero.
Equivalently,
\[
\frac{\partial \mathcal{L}}{\partial \Theta_i}(\Theta)
= g_i(\Theta) \neq 0
\quad\text{for all } i\in\mathrm{supp}(M),
\]
whenever $\|\Theta - \Theta_k\| < \omega_k$.
This is exactly the desired statement.
\end{proof}

\begin{proposition}%
\label{prop:persistence_over_time}
Let $M$ be an irreducible task-expressive mask for $(\mathcal{D},y)$ up to depth $L$
for the RGNN of Eqs.~\eqref{eq:rgnn-params}--\eqref{eq:grad-Phi-minus-W}.
Assume training follows \eqref{eq:update_rule_generic} with a fixed mask $M$.
Then, for almost all initializations $\Theta_0$ supported on $M$, the following holds:

\begin{enumerate}[label=(\alph*),topsep=0pt,itemsep=2pt,leftmargin=12pt]
\item (\emph{Base case}) At $k{=}0$, $\Theta_0$ is optimizable on $(\mathcal{D},y)$
(Proposition~\ref{prop:task_expressive_stable}).
\item (\emph{Inductive step}) Suppose at some $k\ge 0$ the output loss gradient is nonzero on the current batch and $\Theta_k$ is optimizable.
Let $\omega_k>0$ be as in Lemma~\ref{lem:local_persistence}.
If the update satisfies
$\|\Theta_{k+1}-\Theta_k\|<\omega_k$, then $\Theta_{k+1}$ is also optimizable.
\end{enumerate}

Consequently, for any finite horizon $K\in\mathbb{N}$ there exists a stepsize schedule
$\{\eta_0,\ldots,\eta_{K-1}\}$ such that $\Theta_k$ is optimizable for all $k\le K$,
provided the output loss gradient is nonzero at those iterations.
\end{proposition}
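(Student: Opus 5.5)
The plan is a finite induction over the training iterations $k=0,1,\ldots,K$. Lemma~\ref{lem:local_persistence} carries optimizability from $\Theta_k$ to the next iterate, and the stepsize bound in \eqref{eq:update_rule_generic} keeps each update inside the ball where that lemma applies. Throughout, the mask $M$ is fixed and every update is masked, so each $\Theta_k$ stays supported on $M$.

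\emph{Base case (a).} Proposition~\ref{prop:task_expressive_stable} gives a full-measure set of initializations $\Theta_0$ supported on $M$ for which every $i\in\mathrm{supp}(M)$ is gradient-connected and $\partial\mathcal{L}/\partial\Theta_{0,i}(\Theta_0)\neq 0$ whenever the output loss gradient is nonzero. I fix $\Theta_0$ in this set.

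\emph{Inductive step (b).} Suppose $\Theta_k$ is optimizable and the output loss gradient at $\Theta_k$ is nonzero. Then every surviving coordinate has nonzero partial derivative at $\Theta_k$, so the hypotheses of Lemma~\ref{lem:local_persistence} hold, and it supplies a radius $\omega_k>0$. If $\|\Theta_{k+1}-\Theta_k\|<\omega_k$, the same lemma gives $\partial\mathcal{L}/\partial\Theta_i(\Theta_{k+1})\neq0$ for all $i\in\mathrm{supp}(M)$. That is condition (ii) of optimizability at $\Theta_{k+1}$, and condition (iii) follows directly from it.

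For condition (i), I will argue that gradient-connectedness is structural. Nonzero partials at $\Theta_{k+1}$ already witness, for each surviving index, a path to the loss with nonzero local derivatives; this is exactly the equivalent formulation in the definition. Alternatively, irreducibility of $M$ depends only on $M$ and not on $\Theta$, so the contradiction argument of Proposition~\ref{prop:task_expressive_stable} applies verbatim at any parameter point.

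\emph{Stepsize schedule.} The radii must be chosen sequentially, because $\omega_k$ depends on $\Theta_k$, which in turn depends on the earlier steps. Given $\Theta_k$, I take $\omega_k$ from Lemma~\ref{lem:local_persistence} and choose
\[
\kappa_k<\frac{\omega_k}{\|\nabla\mathcal{L}(\Theta_k)\|+1}.
\]
The assumed update bound then gives $\|\Theta_{k+1}-\Theta_k\|\le\kappa_k\|\nabla\mathcal{L}(\Theta_k)\|<\omega_k$. Repeating this $K$ times yields the finite schedule, with $\eta_k$ identified with $\kappa_k$, and every $\Theta_k$ is optimizable whenever the output gradient is nonzero at step $k$.

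\emph{Expected obstacle.} The hard part is the ``almost all'' quantifier. It is imposed only at $k=0$; later iterates are deterministic functions of $\Theta_0$, so no further measure argument is needed, because Lemma~\ref{lem:local_persistence} is a pointwise statement. I also need to check that the nonzero-output-gradient hypothesis enters only through Lemma~\ref{lem:local_persistence}. If the output gradient vanished at some step, the claim would simply be vacuous there, matching the proviso in the statement. Finally, the horizon must stay finite: since $\omega_k$ may shrink to $0$, the construction does not extend to $K=\infty$, which is why the statement restricts to finite $K$.
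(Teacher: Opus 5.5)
Your proposal is correct and takes essentially the same route as the paper. The base case is Proposition~\ref{prop:task_expressive_stable}, and the inductive step combines the structural nature of gradient-connectedness with Lemma~\ref{lem:local_persistence}, repeated up to the finite horizon; your version is somewhat tighter, since you build the schedule explicitly via $\kappa_k<\omega_k/(\|\nabla\mathcal{L}(\Theta_k)\|+1)$ and take nonzero partials at $\Theta_k$ from the induction hypothesis rather than also appealing, as the paper does, to the almost-everywhere Lemma~\ref{lem:nondeg_grad_task} at a non-generic iterate.
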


\begin{proof}
(a) is exactly Proposition~\ref{prop:task_expressive_stable}.

For (b), assume $\Theta_k$ is optimizable.
Then, by the definition of optimizability and the fixed-mask architecture, 
every $i\in\mathrm{supp}(M)$ remains gradient-connected at all iterations:
gradient-connectedness depends only on the existence of structural paths through
$\Phi_{r,\pm}^{(l)}$ and $\Gamma^{(l)}$ together with $\sigma'(x)\neq 0$, not on the specific
numerical values of $\Theta_k$.
By the definition of optimizability and Lemma~\ref{lem:nondeg_grad_task},
$\frac{\partial \mathcal{L}}{\partial \Theta_{k,i}}(\Theta_k)\neq 0$ for all surviving $i$,
given a nonzero output loss gradient.
Lemma~\ref{lem:local_persistence} yields a radius $\omega_k>0$ within which these partial
derivatives remain nonzero for all $i\in\mathrm{supp}(M)$.

If the update obeys $\|\Theta_{k+1}-\Theta_k\|<\omega_k$, then
$\frac{\partial \mathcal{L}}{\partial \Theta_{k+1,i}}(\Theta_{k+1})\neq 0$ for all surviving $i$
whenever the output loss gradient is nonzero at iteration $k{+}1$.
Thus items (i) and (ii) of %
the definition of optimizability hold at $k{+}1$,
so $\Theta_{k+1}$ is optimizable.
By induction, repeating this argument up to any finite $K$ proves the consequence.
\end{proof}

\section{Computational cost model}
\label{app:computational-cost}
For an MLP of depth $M$ (number of hidden layers), width $m$ (hidden dimension), and unstructured sparsity level $\rho\in[0,1]$ applied to $n_{\text{in}}$ distinct inputs (e.g., nodes or branch inputs), we roughly approximate the per-forward MADDS as
$
\operatorname{MADD}_{\mathrm{MLP}}(n_{\text{in}}, m, M, \rho)
\;=\;
n_{\text{in}} \cdot M \cdot (1-\rho)\, m^2
\;\approx\;
N_{\max} \cdot M \cdot (1-\rho)\, m_{\min}^2.
$
Here $m^2$ counts the weights in a dense $m\times m$ layer, $(1-\rho)m^2$ is the number of surviving weights after pruning, and the factor $n_{\text{in}}$ accounts for applying the MLP to all inputs.

\section{Toy dataset and RGNN architecture}
\label{app:toy-dataset-architrecture}
This appendix details the exact RGNN architecture and the synthetic multi-relational dataset generator used for Figure~\ref{fig:res2}.

\paragraph{Synthetic multi-relational dataset.}
Each graph is a directed multi-relational graph
$
G=(V,\{E_r\}_{r\in\mathcal{R}}),
$
with $|V|=N$ nodes and $|\mathcal{R}|=R$ relations.
We construct $E_r$ by first sampling an \emph{undirected} Erd\H{o}s--R\'enyi edge set
$
\tilde E_r \subseteq \{\{u,v\}\mid u<v,\ u,v\in V\}
$
where each unordered pair is included independently with probability $p_r$.
To avoid degenerate empty relations, if $\tilde E_r=\emptyset$ we add one random undirected edge $\{u,v\}$ with $u\neq v$.
We then convert each undirected edge to two directed edges, i.e.,
$
E_r \;:=\;\{(u,v),(v,u)\mid \{u,v\}\in \tilde E_r\}.
$
To encourage structural diversity among generated graphs, we reject duplicates under a cheap signature heuristic:
for each relation $r$, we compute the (out-)degree sequence
$
(\deg_r(v))_{v\in V},\quad \deg_r(v):=|\{u\in V\mid (v,u)\in E_r\}|,
$
sort it, and concatenate it with the undirected edge count $|\tilde E_r|$.
A candidate graph is accepted only if its signature has not been seen before.
Unless stated otherwise, we generate $G_{\text{set}}=30$ graphs with $N=6$ nodes and $R=3$ relations using probabilities
$(p_r)_{r=1}^R=(0.12,0.20,0.28)$ and a fixed random seed.

\paragraph{Node features.}
We use simple structural node features derived from degrees:
\begin{equation}
\label{eq:toy-node-features}
x_v \;:=\; \bigl[\,1,\ \deg_{r_1}(v),\ \deg_{r_2}(v),\ \dots,\ \deg_{r_R}(v)\,\bigr]^\top \in \mathbb{R}^{1+R}.
\end{equation}
Thus the input dimension is $d_{\mathrm{in}}=1+R$.

\paragraph{RGNN architecture.}
We instantiate a minimal directed multi-relational RGNN with $L$ layers, hidden width $\mathcal{D}$, and activation $\sigma=\tanh$.
All linear maps are bias-free.
First, we project node features to hidden states:
\begin{equation}
\label{eq:tinyrgnn-input}
h_v^{(0)} \;=\; \sigma\!\left(W_{\mathrm{in}} x_v\right),\qquad
W_{\mathrm{in}}\in\mathbb{R}^{d\times d_{\mathrm{in}}}.
\end{equation}
For each layer $\ell\in\{0,\dots,L-1\}$ and relation $r\in\mathcal{R}$, we compute a (directed) \emph{sum} aggregation
\begin{equation}
\label{eq:tinyrgnn-agg}
a_{v,r}^{(\ell)} \;=\; \sum_{(u,v)\in E_r} h_u^{(\ell)} \in \mathbb{R}^d,
\end{equation}
apply a relation-specific linear map, and pass through $\tanh$:
\begin{equation}
\label{eq:tinyrgnn-branch}
m_{v,r}^{(\ell)} \;=\; \sigma\!\left(W_{r}^{(\ell)} a_{v,r}^{(\ell)}\right),\qquad
W_{r}^{(\ell)}\in\mathbb{R}^{d\times d}.
\end{equation}
We then combine the current state with all relation messages via summation (rather than concatenation) and apply a shared combine transform:
\begin{equation}
\label{eq:tinyrgnn-combine}
s_v^{(\ell)} \;=\; h_v^{(\ell)} + \sum_{r\in\mathcal{R}} m_{v,r}^{(\ell)},\qquad
h_v^{(\ell+1)} \;=\; \sigma\!\left(W_{\mathrm{c}}^{(\ell)} s_v^{(\ell)}\right),
\end{equation}
with $W_{\mathrm{c}}^{(\ell)}\in\mathbb{R}^{d\times d}$.
Finally, we produce a graph-level embedding by sum readout:
\begin{equation}
\label{eq:tinyrgnn-readout}
e(G) \;=\; \sum_{v\in V} h_v^{(L)} \in \mathbb{R}^d.
\end{equation}
In our experiments we use $d=16$ or $8$ and $L=3$ unless stated otherwise, and we evaluate embeddings without training (random initialization as in standard \texttt{nn.Linear} defaults).

\section{Assumptions and Limitations}
\label{apx:asslim}
\paragraph{Assumptions.}
Our results are stated and proved under the following assumptions, which are either inherent to the setting (finite datasets) or technically convenient for establishing dataset-conditional injectivity guarantees under random pruning.

All expressivity and probability guarantees are dataset-conditional: we fix a \emph{finite} dataset $\mathcal{D}$ of \emph{finite} (relational or temporal) graphs. This makes the sets of MLP inputs actually \emph{witnessed} on $\mathcal{D}$ finite and allows us to bound non-injectivity events via combinatorial arguments.

The expressivity target is $L$-round 1-RWL (and its temporal lift via the corresponding knowledge-graph construction). Concretely, the theorems guarantee that, with probability at least $\gamma>0$ over a random mask, a pruned network preserves the ability to separate all pairs in $\mathcal{D}$ that are separated by $L$ rounds of the corresponding 1-RWL refinement.

We analyze random pruning at initialization via a Bernoulli mask $M\sim \mathcal{B}_\rho$ (entries independently set to zero with probability $\rho$). %
We additionally assume a bounded base initialization (e.g., $\Theta_0 \sim \mathcal{U}_c^d$) and define the sparse initialization by $\widehat{\Theta}_0 = M \odot \Theta_0$.

Initial node labels are encoded by an injective map $\Lambda$, ensuring label collisions are not introduced at input.

The RWL-to-RGNN correspondence relies on injectivity of the multiset encoders and update maps \emph{on the finite domain witnessed on $\mathcal{D}$}. The probability lower bounds are expressed in terms of (i) the maximum number $N_{\max}$ of distinct witnessed MLP inputs, (ii) the minimum $\ell_0$-separation $s_{\min}$ between distinct witnessed inputs, and (iii) the minimum hidden width $m_{\min}$ across the relevant MLP blocks.

For all MLPs (e.g., $\Gamma^{(l)}$ and, in the optimization discussion, also $\Phi^{(l)}_{r,\pm}$), we assume a real-analytic, injective, continuously differentiable, zero-fixing activation $\sigma$ with a nowhere-zero derivative (i.e., $\sigma(0)=0$ and $\sigma'(x)\neq 0$ for all $x$). %
These assumptions are technical and simplify the analysis. They hold for smooth monotone activations such as $\tanh$ and \emph{shifted sigmoids} (e.g., $x\mapsto \mathrm{sigmoid}(x)-\mathrm{sigmoid}(0)$, which enforces $\sigma(0)=0$) but exclude non-smooth activations such as ReLU. Extending the theory to non-smooth activations is conceptually possible, but would require different lemmata and additional case distinctions, which we consider future work beyond the scope of the current paper.

In Appendix~\ref{sec:opt_stability} we focus on the RGNN block structure induced by Eqs.~(1)--(3) with single-layer linear maps (no bias) plus nonlinearity for all message and combine functions, sum aggregation/combination, a fixed finite dataset $(\mathcal{D},y)$, and small-step first-order updates with a fixed mask.

\paragraph{Limitations and scope.}
The above assumptions enable clean, explicit probability bounds and a unifying reduction viewpoint, but they also delimit what we claim.

Theorems certify preservation of RWL-separations on a fixed finite dataset; they do not constitute a population-level or distributional generalization guarantee.

The guarantee is with respect to $L$-round 1-RWL (and the temporal analogue). It does not address distinctions beyond this baseline (e.g., higher-order WL tests) and does not, by itself, guarantee the best possible expected test-set performance for a given task. %

The probability lower bounds are derived for independent Bernoulli pruning at initialization (and independence across MLP blocks when composing across layers). Structured pruning (e.g., magnitude-based, channel-wise) is outside the formal guarantee unless separately analyzed.

The lower bound $\gamma_{\mathrm{RGNN}}$ is derived via per-block non-injectivity bounds and union-bounding; as a result it can be conservative, particularly near the critical sparsity/width regime where expressivity transitions from holding to failing. Empirically, it tightens again in the high- and low-success regimes.

The certificate depends on $N_{\max}$ and $s_{\min}$ (witnessed input counts and separations). These quantities capture dataset/architecture interaction, but they can be large or difficult to control a priori; consequently, the bound may be pessimistic even when expressive subnetworks exist.

Appendix~E establishes non-degeneracy/gradient-connectivity properties and local persistence of nonzero gradients under small-step updates for fixed masks; it is not a global convergence result.
\end{document}